\documentclass{article}

\usepackage{microtype}
\usepackage{graphicx}
\usepackage{subfigure}
\usepackage{booktabs} 
\usepackage{thmtools, thm-restate}
\usepackage{booktabs,tabularx}
\usepackage{caption}

\usepackage{hyperref}

\usepackage[accepted]{icml2018}

\usepackage{wrapfig}

\usepackage{algorithm}
\usepackage{algorithmicx}
\usepackage{algpseudocode}

\usepackage{array}
\usepackage{calc}
\usepackage{xspace}

\usepackage{enumitem}

\usepackage{amsmath}
\usepackage{amssymb}
\usepackage{amsthm} 

\usepackage[svgnames]{xcolor}

\usepackage[disable]{todonotes}
\definecolor{blued}{RGB}{70,197,221}
\newcommand{\todod}[1]{\todo[color=blued,inline]{#1}}
\definecolor{citrine}{rgb}{0.89, 0.82, 0.04}
\definecolor{blued}{RGB}{70,197,221}
\definecolor{pearOne}{HTML}{2C3E50}
\definecolor{pearTwo}{HTML}{A9CF54}
\definecolor{pearTwoT}{HTML}{C2895B}
\definecolor{pearThree}{HTML}{E74C3C}
\colorlet{titleTh}{pearOne}
\colorlet{bull}{pearTwo}
\definecolor{pearcomp}{HTML}{B97E29}
\definecolor{pearFour}{HTML}{588F27}
\definecolor{pearFith}{HTML}{ECF0F1}
\definecolor{pearDark}{HTML}{2980B9}
\definecolor{pearDarker}{HTML}{1D2DEC}

\newcommand{\todom}[1]{\todo[color=citrine]{\tiny#1}}

\newcommand{\todoa}[1]{\todo[color=yellow, inline]{#1}}
\newcommand{\todoaout}[1]{\todo[color=yellow]{\tiny#1}}

\usepackage{xcolor} 

\newcommand{\rcolb}[1]{\textcolor{red!90}{\textbf{#1}}}

\newcommand{\bcolb}[1]{\textcolor{blue!90}{\textbf{#1}}}

\usepackage{tikz}
\usetikzlibrary{decorations.pathreplacing,calc}

\def\:#1{\protect \ifmmode {\mathbf{#1}} \else {\textbf{#1}} \fi}
\newcommand{\hfssparse}{\texttt{Sparse-HFS}\xspace}
\newcommand{\hfsstable}{\texttt{Stable-HFS}\xspace}
\newcommand{\hfs}{\texttt{HFS}\xspace}

\newcommand{\rls}{\textsc{LapSmo}\xspace}
\newcommand{\ssl}{\textsc{SSL}\xspace}
\newcommand{\scluster}{\textsc{SC}\xspace}
\newcommand{\squeak}{\texttt{SQUEAK}\xspace}
\newcommand{\gd}{\texttt{GD}\xspace}
\newcommand{\lembed}{\textsc{LE}\xspace}
\newcommand{\disre}{\texttt{\textcolor[rgb]{0.5,0.2,0}{DiSRe}}\xspace}
\newcommand{\kn}{\texttt{kN}\xspace}

\newcommand{\exact}{\texttt{EXACT}\xspace}
\newcommand{\ltr}{\texttt{LTR}\xspace}
\newcommand{\deff}{d_{\text{eff}}}
\newcommand{\B}{\mathcal{B}}

\newcommand*{\eqdef}{\triangleq}

\newcommand{\CommaBin}{\mathbin{\raisebox{0.5ex}{,}}}

\newcommand{\laplacian}{\:L}

\newcommand{\qbar}{\overline{q}}

\newcommand{\mergeresparsH}{\texttt{\textcolor[rgb]{0.5,0.2,0}{Merge-Resparsify}}\xspace}

\newcommand{\poolH}{\mathcal{S}}

\newcommand{\dictpool}{\poolH}
\newcommand{\coldict}{\Hg}

\newcommand{\nhl}{m}
\newcommand{\concmat}{\:Q}
\newcommand{\concvec}{\:q}
\newcommand{\atau}{\wt{r}}
\newcommand{\featkermatrix}{\:B}

\newcommand{\lidx}{t}

\newcommand{\calS}{\mathcal{S}}
\newcommand{\X}{\mathcal{V}}

\newcommand{\indfunc}{\mathbb{I}}

\renewcommand{\Re}{\mathbb{R}}

\newcommand{\wt}[1]{\widetilde{#1}}
\newcommand{\wh}[1]{\widehat{#1}}
\newcommand{\ol}[1]{\overline{#1}}

\newcommand{\wb}[1]{\overline{#1}}
\newcommand{\transp}{\mathsf{T}}
\DeclareMathOperator*{\argmin}{arg\,min}

\DeclareMathOperator*{\Tr}{Tr}

\DeclareMathOperator*{\Ker}{Ker}

\newcommand{\norm}[2]{\left\Vert #1 \right\Vert_{#2}}
\newcommand{\normsmall}[1]{\Vert #1 \Vert}

\newcommand{\probability}{\mathbb{P}}

\DeclareMathOperator*{\expectedvalue}{\mathbb{E}}

\newcommand{\condbar}{\;\middle|\;}

\renewcommand{\Re}{\mathbb{R}}

\newcommand{\Real}{\mathbb{R}}

\newcommand{\Gg}{\mathcal{G}}
\newcommand{\Hg}{\mathcal{H}}

\newcommand{\edgeset}{\mathcal{E}}

\newcommand{\funcspace}{\mathcal{F}}
\newcommand{\F}{\mathcal{F}}
\newcommand{\Fp}{\mathcal{F}}

\newcommand{\trainingset}{\mathcal{S}}
\newcommand{\testset}{\mathcal{T}}
\newcommand{\clusterspace}{\mathcal{C}}

\newcommand{\vareps}{\varepsilon}
\renewcommand{\epsilon}{\varepsilon}
\newcommand{\bigotime}{\mathcal{O}}
\DeclareMathOperator*{\polylog}{polylog}

\newtheorem{lemma}{Lemma}
\newtheorem{definition}{Definition}
\newtheorem{proposition}{Proposition}

  \hypersetup{ %
    pdftitle={Improved Large-Scale Graph Learning through Ridge Spectral Sparsification},
    pdfauthor={Calandriello, Daniele and Koutis, Ioannis and Lazaric, Alessandro and  Valko, Michal},
    pdfsubject={Proceedings of the International Conference on Machine Learning 2018},
    pdfkeywords={},
    pdfborder=0 0 0,
    pdfpagemode=UseNone,
    colorlinks=true,
    linkcolor=pearThree,
    citecolor=pearDark,
    filecolor=mydarkblue,
    urlcolor=pearDarker,
    pdfview=FitH}

\icmltitlerunning{Improved Large-Scale Graph Learning through Ridge Spectral Sparsification}

\begin{document}

\twocolumn[
\icmltitle{Improved Large-Scale Graph Learning through Ridge Spectral Sparsification}



\icmlsetsymbol{equal}{*}

\begin{icmlauthorlist}
\icmlauthor{Daniele Calandriello}{sequel,iit}
\icmlauthor{Ioannis Koutis}{unj}
\icmlauthor{Alessandro Lazaric}{facebook}
\icmlauthor{Michal Valko}{sequel}
\end{icmlauthorlist}

\icmlaffiliation{sequel}{SequeL team, INRIA Lille - Nord Europe, France}
\icmlaffiliation{facebook}{Facebook AI Research, Paris, France}
\icmlaffiliation{iit}{LCSL, IIT, Italy, and MIT, USA.}
\icmlaffiliation{unj}{New Jersey Institute of Technology, USA}

\icmlcorrespondingauthor{Daniele Calandriello}{daniele.calandriello@iit.it}

\icmlkeywords{Machine Learning, ICML}
\vskip 0.3in
]



\printAffiliationsAndNotice{}  

\begin{abstract}

The representation and learning benefits of methods based on graph Laplacians, such as \emph{Laplacian smoothing} or \textit{harmonic function solution for semi-supervised learning} (\ssl),  are empirically and theoretically well supported. Nonetheless, the exact versions of these methods scale poorly with the number of nodes $n$ of the graph. In this paper, we combine a spectral sparsification routine with Laplacian learning. Given a graph $\Gg$ as input, our algorithm computes a sparsifier in a \textit{distributed} way in $\bigotime(n\log^3(n))$ time, $\bigotime(m\log^3(n))$ work and $\bigotime(n\log(n))$ memory, using only $\log(n)$ rounds of communication. Furthermore, motivated by the regularization often employed in learning algorithms, we show that constructing sparsifiers that preserve the spectrum of the Laplacian \textit{only up to} the regularization level may drastically reduce the size of the final graph. By constructing a spectrally-similar graph, we are able to bound the error induced by the sparsification for a variety of downstream tasks (e.g., \ssl). We empirically validate the theoretical guarantees on Amazon co-purchase graph and compare to the state-of-the-art heuristics.

\end{abstract}

\vspace{-0.05in}
\section{Introduction}
\vspace{-0.05in}

Graphs are a very effective data structure to represent relationships between entities (e.g.,\@ social and collaboration networks, influence graphs
). Over the years, many machine learning problems have been defined and solved exploiting the graph representation, such as  \emph{graph-regularized least squares} (\textsc{LapRLS}\xspace, \citealt{belkin2005manifold}), \emph{Laplacian smoothing} (\rls, \citealt{sadhanala_graph_2016})
graph \emph{semi-supervised learning} (\ssl, \citealt{chapelle2010semi-supervised, zhu2003semi-supervised}),
\emph{laplacian embedding} (\lembed, \citealt{belkin2001laplacian}, and \emph{spectral clustering} (\scluster,
\citealt{von2007tutorial}). The intuition behind graph-based learning is that
the information expressed by the graph helps to capture
the underlying structure of the problem (e.g., a
manifold), thus improving the learning. For instance, \rls and \ssl rely on the assumption that nodes that are \textit{close} in the graph are
more likely to have similar labels. Similarly, \lembed and \scluster try to find
a low-dimensional representation of the nodes using the eigenvectors of the
Laplacian of the graph. In general, given a graph $\Gg$ of $n$ nodes and $m$ edges, most of graph-based learning tasks require computing the minimum of a cost function based on the associated $n \times n$ Laplacian matrix~$\:L_{\Gg}$, which contains $m$ non-zero entries. Solving \emph{exactly} such optimization problems amounts to $\bigotime(n^3)$ time and $\bigotime(n^2)$ space complexity in the worst case and they become infeasible even for mildly large/dense graphs.\todoaout{Here I put dense, even if the complexities do not explicitly scale with $m$...}

A complete review of the literature on large-scale graph learning is beyond the scope of this paper and we only consider methods that reduce learning space and time complexity starting from a given graph received as input.\footnote{Many algorithms reduce the complexity of graph learning \textit{at construction} time but they cannot be applied to \textit{natural} graphs (e.g., social graphs) and therefore we do not review them.} We identify mainly three possible approaches. We can (1) reduce runtime replacing the pseudo-inverse operator $\laplacian_{\Gg}^+$ with an \emph{iterative solver}, (2) reduce time and space complexity replacing the large graph $\Gg$ with a sparser approximation~$\Hg$, or  (3) reduce runtime and increase memory capacity by \emph{distributing} the computation across multiple machines. \\[0.05in]
\textit{Iterative solvers.} Iterative methods can solve a number of learning problems without explicitly constructing $\laplacian_{\Gg}^+$ (e.g., gradient descent, \gd, for \rls, iterative averaging for \ssl, and the power method for \scluster). In this case we only need $\bigotime(m)$ time per iteration.
Unfortunately, all simple iterative methods (e.g., \gd) converge in a number of iterations proportional to the condition number of the Laplacian, $\kappa = \lambda_{\max}(\:L_\Gg)/\lambda_{\min}(\:L_\Gg)$, which may grow linearly with the number of nodes $n$, thus removing the advantage of the iterative method, whose complexity tends to $\bigotime(n^3)$ in the worst case. Advanced iterative methods, such as the \emph{preconditioned conjugate gradient,} use preconditioning to find an accurate solution in a number of iterations independent of~$\kappa$.
\citet{koutis2011a-nearly-m} gives a nearly-linear
solver for Laplacians or \emph{strongly diagonally dominant} (SDD) matrices,
that using a chain of preconditioners,
converges in only $\bigotime(m\log(n))$ time. As space and time costs scale with the number of edges, a natural desire is to reduce $m$ by sparsifying and distributing the graph.\\[0.05in] 
\textit{Graph sparsification.} The objective of sparsification methods is to remove \emph{redundant} edges, so that the resulting sparse sub-graph can be easily stored in memory and efficiently manipulated to compute final solutions. A simple \textit{graph-sparsification} technique is to sample $n\qbar$ (with $\qbar > 1$) edges from~$\Gg$ with probabilities proportional to the edge weights with replacement. While computationally very efficient, uniform sampling requires sampling a number of edges proportional to $\bigotime(n\mu(\Gg))$ (i.e., $\qbar \propto \mu(\Gg)$), where $\mu(\Gg)$ is the \textit{coherence} of the Laplacian matrix, and it can grow as large as $n$ when the
graph is highly structured (e.g., if there is a single edge $e$
connecting two components of the graph we need to sample all of the edges of
the graph---potentially $\bigotime(n^2)$---to guarantee that we do not exclude
$e$ and generate an inappropriate~$\Hg$). 
A more refined approach is the $k$-neighbors (\kn) sparsifier~\citep{sadhanala_graph_2016}, which performs local sparsifications node-by-node by keeping all edges at nodes with degree smaller than $\qbar$, and samples them proportionally to their weights whenever the degree is bigger than~$\qbar$. While in certain structured graphs, this method 
may perform much better than uniform~\citep{von_luxburg_hitting_2014}, in the general case $\qbar$, still needs to scale with the coherence~$\mu(\Gg)$. A more effective method is to sample edges proportionally to their \textit{effective resistance}, which intuitively measures the importance of an edge in preserving the minimum distance between two nodes. As a result, only relevant edges are kept and the sparsified graph could be reduced to $\bigotime(n\polylog(n))$ edges. Nonetheless, computing effective resistances also requires the pseudo-inverse
$\laplacian_{\Gg}^{+}$, thus being as expensive as solving any graph-Laplacian learning problem.\\[0.05in]
\textit{Distributed computing.} When the number of edges $m$ is too large to fit the
whole graph in a single machine, we are forced to distribute the edges across
multiple machines. At the same time, if the sparsifier construction or the
downstream inference can be parallelized, we can also reduce their runtime.
Unfortunately, distributing data and computation across multiple machines
can cause large communication costs. For example,
simple \gd or label propagation methods require
$\bigotime(\kappa)$ iterations (and communication rounds) to converge
and access to non-local (e.g., neighbors in a graph) data.
While preconditioned solvers reduce the number of iterations,
almost none of their memory access is local, thus making difficult to have efficient distributed implementations.



\textbf{Contribution.}
In this paper, we propose a new approach that aims at integrating the benefits of the three different methods above. Using the large memory and computational capacity of distributed computing and leveraging the sequential sparsification methods of~\citet{kelner_spectral_2013} and~\citet{calandriello_disqueak_2017}, we show how to compute an accurate sparsifier $\Hg$ of graph $\Gg$ in
$\bigotime(n\log^3(n))$ time, $\bigotime(n\log^2(n))$ work and $\bigotime(n\log(n))$ memory,
using only $\log(n)$ rounds of communication.
Afterwards, learning tasks can be solved directly on $\:L_{\Hg}$ on a single machine
using near-linear time solvers, resulting in an overall $\bigotime(n\log^3(n))$ runtime.
Moreover, we show that the regularization used in some graph-based learning algorithms allows using even sparser graphs. In particular, we introduce the notion of \textit{ridge} effective resistance to obtain sparsifiers that are better adapted to solve Laplacian-regularized learning tasks (e.g., \rls, \ssl) and are smaller than standard spectral sparsifiers without compromising the performance of downstream tasks.

%
\vspace{-0.05in}
\section{Background}
\vspace{-0.05in}

We use lowercase letters $a$ for scalars, bold lowercase letters~$\:a$
for vectors and uppercase bold letters $\:A$ for matrices.
We use  $\:A \preceq \:B$ to denote that $\:B - \:A$ is positive semi-definite (PSD),
$[\:A]_{i,j}$ to indicate the $(i,j)$-th
entry of $\:A$, and
ordered the eigenvalues as
$\lambda_{1}(\:A) \leq \ldots \leq \lambda_n(\:A)$.

\subsection{Graphs and graph Laplacian}

We denote with $\Gg = (\X,\edgeset)$, an undirected weighted graph with $n$
nodes $\X$ and $m$ edges $\edgeset$. 
Each edge $e_{i,j}\in\edgeset$ has a weight $a_{e_{i,j}}$ measuring the ``similarity'' between
nodes~$i$ and $j$.
Given graphs $\Gg$ and $\Gg'$ over the same set of nodes~$\X$,
$\Gg+\Gg'$ denotes the graph obtained by summing the weights of their edges. 
%
For graph $\Gg$, we introduce the weighted adjacency matrix  $\:A_{\Gg}$ with
entries $[\:A_{\Gg}]_{i,j} = a_{e_{i,j}}$,  the total weights $A = \sum_{e} a_e$ , and the diagonal
degree matrix $\:D_{\Gg}$ with entries $[\:D_{\Gg}]_{i,i} \triangleq \sum_j a_{e_{i,j}}$. The Laplacian of $\Gg$ is the PSD matrix $\:L_\Gg \triangleq \:D_\Gg - \:A_\Gg$.
Furthermore, we assume that~$\Gg$ is connected and thus $\:L_\Gg$ has only one eigenvalue
equal to $0$ and $\Ker(\:L_\Gg)=\:1$. Let $\:L_\Gg^+$ be the pseudoinverse of $\:L_\Gg$ and
$\:L_{\Gg}^{-1/2} = (\:L_\Gg^+)^{1/2}$.
For any node $i=1,\ldots,n$, we denote with $\mathbf{\chi}_i\in\Re^n$, the
indicator vector, so that $\:b_e  \triangleq \sqrt{a_e}(\mathbf{\chi}_i - \mathbf{\chi}_j)$ is the ``edge'' vector. If we
denote with $\:B_{\Gg}$ the $m \times n$ signed edge-vertex incidence matrix,
then the Laplacian can be written as $\:L_{\Gg} = \sum_e \:b_e \:b_e^\transp = \:B_\Gg^\transp
\:B_\Gg$.

\subsection{Learning on graphs}
Given  graph $\Gg$ and its Laplacian $\:L_{\Gg}$, we denote with
$\:f \in \Real^n$, a \emph{labeling} of its nodes, where $[\:f]_i$ is
the value associated with the $i$-th node.
Many graph learning algorithms assume that the optimal labeling $\:f^\star$ is \emph{smooth}
w.r.t.\ the graph, i.e., the quantity
$\sum_{e} a_e([\:f^\star]_{e_i} - [\:f^\star]_{e_j})^2 = {\:f^\star}^\transp\:L_{\Gg}\:f^\star$ is small.
In the following, we review examples
from the supervised, semi-supervised and unsupervised learning with graphs.


\textbf{Laplacian smoothing (\rls) with Gaussian noise.} Given a graph $\Gg$ on $n$ nodes,
let $\:y \triangleq \:f^\star + \:\xi$ be
a noisy measurement of $\:f^\star$ with $[\:\xi]_i \sim \mathcal{N}(0, \sigma^2)$.
The goal of \rls is to find a vector $\wh{\:f}$ that
accurately reconstructs~$\:f^\star$ under the graph smoothness assumption by solving
\begin{align}\label{eq:laprls}
\wh{\:f} &\triangleq 
\argmin_{\:f \in \Re^n}  
(\:f - \:y)^\transp (\:f-\:y) + \lambda \:f^\transp \:L_\Gg \:f\nonumber\\
&= (\lambda \:L_\Gg + \:I)^{-1} \:y,
\end{align}\todom{check the formulation}
where $\lambda$ is a regularization parameter.

\textbf{Graph semi-supervised learning (\ssl).} In \ssl,
the input $\:f_\ell$ is a partial observation of the labels $\:f^\star$ for a subset 
$\trainingset \subset [n]$ of nodes.
The goal is to predict the labels $\:f_u$  of the unrevealed nodes.
The \emph{harmonic function solution} (\hfs)
by~\citet{zhu2003semi-supervised} 
solves the optimization problem
\begin{align}\label{eq:hfs.original}
\wh{\:f}_{\text{HFS}} &\triangleq \argmin_{\:f \in \Re^n}  \tfrac{1}{\ell}
 (\:f - \:y)^\transp \:\ell_{\trainingset} (\:f-\:y) + \lambda \:f^\transp \:L_\Gg \:f\nonumber\\
&= (\lambda \ell \:L_\Gg + \:I_\trainingset)^+ \:y_\calS,
\end{align}  \todom{check the formulation}
where $\ell \triangleq |\trainingset|$ is the number of labeled nodes received as input,
$\:I_{\trainingset}\in\Re^{n\times n}$ is the identity matrix with zeros
at nodes not in $\trainingset$, and  $\:y_\calS \triangleq \:I_\calS \:y \in \Re^n$.
Similarly, in \emph{local transductive regression} (\ltr) \citep{cortes2008stability}, the optimization problem is
\begin{align}\label{eq:hfs.cortes}
\wh{\:f}_{\ltr} &\triangleq \argmin_{\:f \in \Re^n}  
 (\:f - \:y)^\transp \:C (\:f-\:y) +  \:f^\transp (\:L_{\Gg} + \lambda\:I) \:f\nonumber\\
&= (\:C^{-1}(\:L_{\Gg} + \lambda\:I)  + \:I)^{-1} \:y_\calS,
\end{align}
where 
$\:C$ is
a diagonal matrix with entries $c_\ell$ for nodes in~$\trainingset$,~$c_u$ for
entries not in $\trainingset$, and $c_\ell \geq c_u >0$.

\textbf{Spectral clustering (\scluster).}
Applying the Laplacian smoothness assumption, the goal of \scluster is to find $k$ disjoint subset assignments
such that the clusters are smooth w.r.t.\,the Laplacian. Let $\{\:f_c\}_{c=1}^k$ be the cluster indicator vectors such that $[\:f_c]_i \triangleq 1$ if
node $i$ is in the $c$-th cluster and $[\:f_c]_i \triangleq 0$ otherwise. Denote with $\:F \in \Real^{n \times k},$
the matrix containing the assignments, and let $\clusterspace$ be the space
of feasible clustering, such that all $\:f_c$ are binary and
each row of $\:F$ contains only one non-zero entry. Since computing the minimum ratio-cut is NP-hard~\cite{von2007tutorial,lee2014cluster}, even under constraints~\cite{pmlr-v51-cucuringu16}, \scluster defines instead the relaxed problem
\[
\wh{\:F} \triangleq \argmin_{\:F : \:F^\transp\:F = \:I_k, \:f_c \bot \:1 } \Tr(\:F^\transp\laplacian\:F).
\]
Once the relaxed solution is computed, we can use different heuristics
to recover the clustering, such as thresholding or performing a $k$-means clustering on the $\wh{\:F}$ matrix.

\textbf{Computational complexity.}
The problems above require either to compute an eigendecomposition
of the Laplacian $\:L_{\Gg}$ or to solve a linear system involving $\:L_{\Gg}$. 
Computing these exactly is not feasible when
the number of nodes $n$ and edges $m$ grows. In particular,
(a) storing $\:L_{\Gg}$ in memory requires $\bigotime(m)$ space, and it is not feasible when $m$ is large,
(b) even if $\:L_{\Gg}$ is sparse and $m$ is small, the pseudo-inverse $\:L_{\Gg}^{+}$
might be dense, and thus computing and storing $\:L_{\Gg}^{+}$ exactly requires up to $\bigotime(n^3)$ time
and $\bigotime(n^2)$ space.

\vspace{-0.05in}
\section{Distributed Spectral Sparsification}\label{sec:dist-kl}
\vspace{-0.05in}

In this section, we describe a new, sequential, distributed, and efficient algorithm for graph sparsification that can be used as a preprocessing step to solve a large variety of downstream learning tasks, without significantly affecting their performance.
We point out that while distributing data-agnostic sparsifiers (e.g.\@ uniform sampling) is straightforward, distributing the computation of sparsifiers based on effective resistances
requires a careful merging procedure to guarantee satisfactory memory vs.\,accuracy tradeoff, which is what we provide in this section.

\subsection{$(\vareps, \gamma)$-spectral sparsifiers}

We start with the introduction of the notion of $(\vareps, \gamma)$-sparsifier that is adapted for the learning tasks
that use sparsified graph Laplacian.

\begin{definition}\label{def:eps-sparsifier}
A $(\vareps, \gamma)$-spectral sparsifier of $\Gg$ is a re-weighted sub-graph $\Hg \subseteq \Gg$ whose Laplacian $\:L_{\Hg}$ satisfies
\begin{align}\label{eq:sparsif-quad-form}
(1-\vareps) \:L_\Gg - \vareps\gamma \:I \preceq \:L_\Hg \preceq (1 + \vareps) \:L_\Gg + \vareps\gamma \:I.
\end{align}
\end{definition}

For $\gamma=0$, this definition reduces to the standard notion of $\vareps$-spectral sparsifier~\citep{spielman_spectral_2011}. The main difference is that an $(\vareps, \gamma)$-spectral sparsifier allows for an extra \emph{additive} error of order $\vareps\gamma$. This change is directly motivated by the fact that the sparsifier $\Hg$ may be used in learning tasks whose solution may not be sensitive to small (additive) errors. As a result, $(\vareps, \gamma)$-spectral sparsifiers are able to further reduce the size of $\Hg$ w.r.t.\,$(\vareps, 0)$-sparsifiers, without significantly affecting the final learning performance. Formally, an $\vareps$-sparsifier preserves all the quadratic forms up to a small multiplicative (constant) error, and thus can be used to
provide an accurate approximation to many important quantities such as graph cuts
or eigenvalues. In fact, for all $i \in [n]$, an $\vareps$-sparsifier guarantees that $(1-\vareps)\lambda_i(\:L_{\Gg}) \leq \lambda_i(\:L_{\Hg}) \leq (1+\vareps)\lambda_i(\:L_{\Gg})$.
%
Nonetheless, in many learning tasks (e.g., \ltr) the noise level in the signal~$\:f$ requires regularizing the solution so that the Laplacian $\:L_{\Gg}$ itself is eventually replaced by $\:L_{\Gg}+\lambda\:I$ (e.g., Eq.\,\ref{eq:laprls}). This corresponds to \textit{soft-thresholding} the eigenvalues of the Laplacian, so that eigenvalues below $\lambda$ are partially ignored. If $\lambda$ is properly tuned w.r.t.\,the noise, the regularization increases stability and improves the learning performance.
 Therefore, constructing a sparsifier that accurately reconstructs \textit{all} eigenvalues of $\:L_{\Gg}$ may be wasteful, as it may require keeping most of the edges. 
As a result, in tasks where $\:L_{\Gg}$ is regularized, it is better to use $(\vareps,\gamma)$-sparsifiers, as their additive error $\gamma\:I$ is homogeneous with the regularization $\lambda\:I$ and their smaller size allows scaling to up.\footnote{Whenever no regularization is required in the learning task (i.e., \hfs, \scluster), we set $\gamma=0$ and consider ``standard'' $\vareps$-sparsifiers.}
We now extend the results of~\citet{spielman2011graph} for the construction of $\vareps$-spectral sparsifiers to the general case of $(\vareps,\gamma)$-sparsifiers. We redefine the edge effective resistance to account for the regularization.

\begin{definition}\label{def:eff-res}
The $\gamma$-effective resistance of an edge $e$ in graph $\Gg$ is defined as 
\begin{align}\label{eq:eff.resist}
r_e(\gamma) \triangleq \:b_e^\transp \big(\:L_\Gg+\gamma\:I\big)^{-1} \:b_e.
\end{align}
The ``effective dimension'' of the graph is the total sum of the $\gamma$-effective resistances,
%
$\deff(\gamma) \triangleq \sum_e r_e(\gamma)$.
\end{definition}
We can now construct a sparsifier $\Hg$ by sampling $\qbar$ times each
edge 
 with a probability proportional to its 
$\gamma$-effective resistance. More formally, the resulting (random) graph
contains $q_e \sim \B(r_e(\lambda); \qbar)$ copies of each edge, where $\B$ is
the Binomial distribution, and its associated Laplacian is $\:L_{\Hg} = \sum_{e
\in \Hg} q_e/(\qbar r_e(\gamma))\:b_e\:b_e^\transp$, which is an unbiased
estimator of $\:L_{\Gg}$.
We can then apply existing results from sketching of PSD matrices
\cite{alaoui2014fast}
to prove that $\Hg$ is
a valid $(\vareps,\gamma)$-sparsifier.
\begin{proposition}[\citealt{cohen2017input}]\label{lemma:esp.gamma.sparsifier} 
Let $\vareps>0$ and $\gamma\geq 0$ be the accuracy parameters and $0 \leq \delta \leq 1$ the probability of error. Let~$\Hg$ be the graph obtained by sampling edges in $\Gg$ with a probability proportional to their $\gamma$-effective resistances. If $\qbar \geq 4\log(4n/\delta)/\vareps^2,$ then w.p.\,$1-\delta$, $\Hg$ is an $(\vareps,\gamma)$-sparsifier with $\bigotime(\deff(\gamma)\qbar)$ edges.
\end{proposition}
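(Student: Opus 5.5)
We reduce the sandwich condition to a matrix concentration statement in a whitened space and then apply a matrix Chernoff/Bernstein bound. Write $\:L_\gamma \eqdef \:L_\Gg + \gamma\:I$. Since $\gamma\geq 0$ and $\Gg$ is connected, $\:L_\gamma$ is invertible when $\gamma>0$; for $\gamma=0$ we use $\:L_\Gg^{+}$ restricted to $\:1^\perp$. For each edge define $\:v_e \eqdef \:L_\gamma^{-1/2}\:b_e$, so that $\|\:v_e\|^2 = r_e(\gamma)$. We have $\sum_e \:v_e\:v_e^\transp = \:L_\gamma^{-1/2}\:L_\Gg\:L_\gamma^{-1/2} \eqdef \:P$, and $\:P \preceq \:I$.

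The target inequality \eqref{eq:sparsif-quad-form} is implied by
\begin{align*}
-\vareps\,\:L_\gamma \preceq \:L_\Hg - \:L_\Gg \preceq \vareps\,\:L_\gamma .
\end{align*}
Expanding $\:L_\gamma$ shows this is exactly the $(\vareps,\gamma)$ condition. Conjugating by $\:L_\gamma^{-1/2}$, it is equivalent to $\normsmall{\:L_\gamma^{-1/2}\:L_\Hg\:L_\gamma^{-1/2} - \:P} \leq \vareps$. The whole proof therefore reduces to a spectral-norm deviation bound for the random sum
\begin{align*}
\:L_\gamma^{-1/2}\:L_\Hg\:L_\gamma^{-1/2} = \sum_e \frac{q_e}{\qbar\, r_e(\gamma)}\:v_e\:v_e^\transp .
\end{align*}

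To get independent bounded summands, decompose each Binomial $q_e = \sum_{s=1}^{\qbar} z_{e,s}$, where the $z_{e,s}\sim$ Bernoulli$(r_e(\gamma))$ are independent. Note that $r_e(\gamma)\leq 1$ because $\:b_e\:b_e^\transp \preceq \:L_\Gg \preceq \:L_\gamma$. The matrix becomes $\sum_{e,s}\:X_{e,s}$ with
\begin{align*}
\:X_{e,s} = \frac{z_{e,s}}{\qbar\, r_e(\gamma)}\,\:v_e\:v_e^\transp ,
\end{align*}
which is PSD and has expectation $\frac{1}{\qbar}\:v_e\:v_e^\transp$, so the sum is unbiased for $\:P$. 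Each summand satisfies $\normsmall{\:X_{e,s}} \leq \|\:v_e\|^2/(\qbar r_e(\gamma)) = 1/\qbar$. This uniform bound of $1/\qbar$ is the key reason for sampling proportionally to the $\gamma$-effective resistance: the normalization exactly cancels the leverage. Applying the matrix Chernoff bound (Tropp) in both tails, with $\mu_{\max}=\lambda_{\max}(\:P)\leq 1$ and $R = 1/\qbar$, gives a failure probability of at most
\begin{align*}
2n\exp\!\left(-\frac{\vareps^2\qbar}{3}\right)
\end{align*}
for additive deviation $\vareps$. We could alternatively use matrix Bernstein with variance $\normsmall{\sum\mathbb{E}\:X^2}\leq \normsmall{\:P}/\qbar \leq 1/\qbar$. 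Setting $\qbar\geq 4\log(4n/\delta)/\vareps^2$ makes this at most $\delta/2$, consistent with the stated constant.

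The main obstacle is the mismatch in the Chernoff bound. The standard multiplicative Chernoff gives deviation $\vareps\mu_{\min}$, but here $\lambda_{\min}(\:P)$ can be tiny, and even $0$ on $\Ker$. We need instead an \emph{additive} deviation of $\vareps$ relative to $\:I$. I would handle this with the Bernstein form, or with Tropp's bound applied to $\:P + \:I$ after padding the sum with the deterministic term $\:L_\gamma^{-1/2}\gamma\:L_\gamma^{-1/2}$, whose minimum eigenvalue is bounded below by a constant. The $\gamma=0$ kernel direction $\:1$ is harmless, since every $\:b_e\perp\:1$.

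Finally, the edge count follows by a scalar argument. The number of sampled edges is at most $\sum_e q_e$, with $\mathbb{E}\sum_e q_e = \qbar\sum_e r_e(\gamma) = \qbar\,\deff(\gamma)$. A scalar Chernoff bound gives $\sum_e q_e\leq 2\qbar\,\deff(\gamma)$ with probability at least $1-\delta/2$ once $\qbar\deff(\gamma)\gtrsim\log(1/\delta)$. A union bound over the two events then yields the claim with probability $1-\delta$.
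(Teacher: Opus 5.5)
Your proof is correct and follows the route the paper relies on. The paper imports this proposition from the ridge-leverage-score sampling literature, and its own argument in Appendix~\ref{seckl}, specialized to a single non-adaptive sampling round, reduces to your steps: whitening via Prop.~\ref{prop:ordering-to-norm-bound}, per-copy Bernoulli summands of norm $1/\qbar$, a Bernstein/Freedman bound with variance at most $\normsmall{\:P}/\qbar$, and an exponential-moment bound on $\sum_e q_e$ with mean $\qbar\deff(\gamma)$.

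Three points for the write-up. Lead with the Bernstein form, since a Chernoff bound on the extreme eigenvalues of the sum does not by itself control $\normsmall{\:L_\gamma^{-1/2}\:L_\Hg\:L_\gamma^{-1/2}-\:P}$. In the padding variant, pad by $\gamma\:L_\gamma^{-1}=\:I-\:P$, split into pieces of norm at most $1/\qbar$, so that the mean is exactly $\:I$ rather than $\:P+\:I$. Keep your caveat $\qbar\deff(\gamma)\gtrsim\log(1/\delta)$ for the edge count, which the paper's size bound also needs implicitly.
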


We first notice that this result reduces to the one of~\citet{spielman2011graph} for $\gamma=0$. In fact, $\deff(0)=n-1$ for all graphs, thus matching the space requirement $\qbar$ for $\vareps$-sparsifiers.
Nonetheless, as $\gamma$ increases, the size of $\Hg$ reduces significantly. Using $\:L_{\Gg} = \:B_{\Gg}^\transp\:B_{\Gg}$, the effective dimension $\deff(\gamma)$ can be conveniently rewritten as
%
\[
\deff(\gamma) = \Tr\big(\:B_{\Gg}^\transp\:B_{\Gg} (\:B_{\Gg}^\transp\:B_{\Gg}+\gamma\:I)^{-1} \big) = \sum_{i=2}^n \frac{\lambda_i(\:L_{\Gg})}{\lambda_i(\:L_{\Gg})+\gamma}\CommaBin
\]
%
thus showing that $\deff(\gamma)$ is the ``soft'' rank of the Laplacian, where $\gamma$ significantly reduces the contribution of small eigenvalues to the total sum. While in the worst case $\deff(\gamma)$ can be as large as $n-1$, for a variety of graphs with rapidly decaying spectrum~\citep{jamakovic2006laplacian,samukhin2008laplacian,zhan2010distributions,akoglu2015graph}, $\deff(\gamma)$ may be significantly smaller than $n-1$, thus reducing the number of edges $\qbar$ required to obtain an $(\vareps,\gamma)$-sparsifier. 


\subsection{The algorithm}

\begin{algorithm}[t]
\begin{algorithmic}[1]
\renewcommand{\algorithmicrequire}{\textbf{Input:}}
\renewcommand{\algorithmicensure}{\textbf{Output:}}
\Require $\Gg$
\Ensure $\Hg_{\Gg}$
\State Partition $\Gg$ into $k$ sub-graphs: \\ \quad $\Hg_{1,\ell} \gets \Gg_\ell \gets \{(e_{i,j}, q_e = 1, \wt{p}_{1,e} = 1)\}$
\State Initialize set $\poolH_{1} = \{\Hg_{1,\ell}\}_{\ell=1}^k$
\For{$h = 1,\dots,k-1$}
\State Pick two sparsifiers $\Hg_{h,i'}, \Hg_{h,i'}$ from $\poolH_h$
\State $\wb{\Hg} \gets \mergeresparsH(\Hg_{h,i}, \Hg_{h,i'})$
\State Place $\wb{\Hg}$ back into $\poolH_{h+1}$
\EndFor
\State Return $\Hg_{\Gg}$, the last sparsifier in $\poolH_{k}$
\end{algorithmic}
\caption{The \disre algorithm.}\label{alg:kl_dist}
\end{algorithm}

\begin{algorithm}[t]
\begin{algorithmic}[1]
\Require $(\vareps,\gamma)$-sparsifiers $\Hg_{h,i},\Hg_{h,i'}$ of graphs $\Gg_{h,i}, \Gg_{h,i'}$
\Ensure $\wb{\Hg}$, an $(\vareps,\gamma)$ sparsifier of $\Gg_{h,i} + \Gg_{h,i'}$
\State Initialize $\wb{\Hg} = \Hg_{h,i} + \Hg_{h,i'}$
\State For all $e \in \wb{\Hg}$, use a fast SDD solver to compute \label{code:compute-eff-res}
$$\wt{r}_{h+1,e}(\gamma) \gets (1-\vareps)\:b_e^\transp(\laplacian_{\wb\Hg} + (1+\vareps)\gamma\:I)^{-1}\:b_e$$
\State Set probabilities $\wt{p}_{h+1,e} \gets \min\{\wt r_{h+1,e}(\gamma), \wt{p}_{h,e}\}$
\State Sample $q_{h+1,e}$ from $\mathcal{B}(\wt{p}_{h+1,e}/\wt{p}_{h,e}, q_{h,e})$
\State Return $\wb{\Hg} \gets \{(e_{i,j}, q_{h+1,e}, \wt{p}_{h+1,e})\}$ for all $q_{h+1,e} > 0$
\end{algorithmic}
\caption{\mergeresparsH}\label{alg:merge-resp}
\end{algorithm}

\begin{figure}[t]
\begin{center}
\includegraphics[width=0.99\columnwidth]{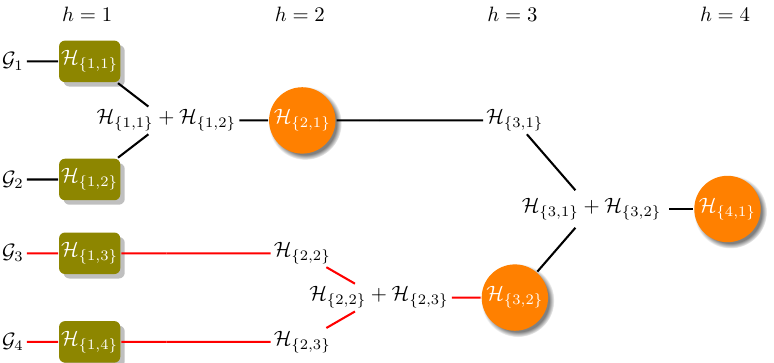}
\end{center}
\vspace{-0.1in}
\caption{Merge tree for Alg.\,\ref{alg:kl_dist}.}\label{fig:merge-trees}
\vspace{-0.2in}
\end{figure}

As pointed out in the introduction, the main limitation of
effective-resistance-based sparsification is that the computation of $r_e$
requires inverting the Laplacian matrix, thus resulting in a computational cost
that already matches the cost of the learning tasks themselves. Moreover,
large graphs cannot be stored in memory, and multiple passes over the graph would
result in a disk access overhead larger than the computational cost.
In order to
avoid these problems, we adapt our previous work \cite{calandriello_disqueak_2017} in online sparsification and
randomized linear algebra (see a thorough discussion and comparison at the end
of the section) to obtain
the distributed sequential resparsification (\disre) algorithm (Alg.\,\ref{alg:kl_dist}).\footnote{Whenever the original graph contains $m \leq  \wt{\bigotime}(\deff(\gamma))$edges, there is no need to run \disre as the $(\vareps,\gamma)$-sparsifiers would not reduce the size of the graph.}\todom{add constants to the footnote?} 

\textbf{The structure.} We represent a sparsifier $\Hg$ as a collection of weighted edges $\Hg \triangleq \{(e_{i,j}, q_e, \wt{p}_{e})\}$,
and the Laplacian can be reconstructed as $\:L_{\Hg} \triangleq \sum_{e \in \Hg} 1/{\wt{p}_e} (q_e/\qbar)\:b_e\:b_e^\transp$.
Intuitively, each edge $e$ has an associated weight based on its probability $\wt{p}_e$, and a number of included copies
$q_e$. Keeping multiple copies of each edge helps the random $\:L_{\Hg}$ to concentrate towards $\:L_{\Gg}$,
where the maximum number of copies $\qbar$ for an edge trades-off success probability and the size of $\Hg$.
We assume we have $k$ machines.
\disre begins by partitioning the graph $\Gg$ into $k$ sub-graphs $\Gg_{\ell}$
on~$n$ vertices and $m_\ell \geq n$ edges, such that $\Gg = \{\Gg_{\ell}\}_{i=\ell}^k$
In other words, it splits the matrix $\:B_{\Gg}$ into submatrices
$\:B_{\Gg_i}$ by arbitrarily selecting a subset of rows.
The sub-graphs are small enough that they can be stored in memory,\footnote{Whenever this is not possible (i.e., $m/k$ is too large to be stored on a single machine), we can simply apply the same merging scheme of \disre by loading \textit{small enough} chunks of the graph and sparsifying them sequentially.} and they
are also obviously sparsifiers of themselves, therefore we can define
an initial set of sparsifiers $\poolH_{1} \triangleq \{\Hg_{1,\ell}\}_{\ell=1}^k$,
with $\Hg_{1,\ell} \triangleq \{(e_{i,j}, q_{1,e} = \qbar, \wt{p}_{1,e} = 1)\}_{e \in \Gg_l}$. \todom{is it clear enough here that by $\wt{p}_{1,e} = 1$ we are not forgetting the weights?}
With this definition, $\Hg_{1,\ell}$ contains edges $e_{i,j}$
with unit weight $\wt{p}_{1,e} = 1$ and $\Hg_{1,\ell} = \Gg_\ell$. Starting from these initial sparsifiers, \disre proceeds through a sequence of \textit{merge} and \textit{sparsify} operations where two sparsifiers are first combined and then sparsified again to keep having manageable-size graphs at each step. 
While \disre can run on any arbitrary sequence of merges, we consider the most (computationally) effective scheme, where sparsifiers are merged two-by-two in parallel, thus inducing a \textit{balanced} full binary merge tree (see Fig.\,\ref{fig:merge-trees}). For notational convenience, we consider that at each iteration $h$, the inner loop of Alg.\,\ref{alg:kl_dist} only merges two arbitrary sparsifiers from the pool of available sub-graphs~$\poolH_h$ and merges them into a new sparsifier. In practice, multiple merge-and-sparsify operations can be executed in a parallel and asynchronous way. 
The size of $\poolH_h$, number of sparsifiers present at layer $h$, is $|\poolH_h| = k-h+1$. Therefore, a node in the tree corresponding to a sparsifier is uniquely identified by two indices $\{h,\ell\}$ where
$h$ is the height of the layer and $\ell\leq |\poolH_h|$ is the index of the node in the layer. We also define the graph
$\Gg_{\{h,\ell\}}$ as the union of all sub-graphs~$\Gg_{\ell'}$ that are
 reachable from node $\{h,\ell\}$ as leaves (descendants of $\{h,\ell\}$).
For example, in Fig.\,\ref{fig:merge-trees}, sparsifier $\Hg_{3,1}$
in node $\{3,1\}$ approximates the graph $\Gg_{\{3,2\}} = \Gg_3 + \Gg_4$,
where we highlight in red the descendant tree. 

\textbf{The resparsification.} In Alg.\,\ref{alg:merge-resp} we detail how two arbitrary sparsifiers are combined to obtain a temporary graph~$\wb\Hg$. While the merge operation simply combines $\Hg_{h,i}$ and $\Hg_{h,i'}$ by summing their weights, the resparsification aims at generating a valid sparsifier from the ``original'' sub-graph $(\Gg_{h,i} +\Gg_{h,i'})$, as if it was directly sparsified at the beginning. We first compute estimates $\wt r(\gamma)$ of the $\gamma$-effective resistance by using fast solvers to invert the strongly diagonal dominant $L_{\wb\Hg}+\gamma\:I$ matrix. Instead of sampling edges in $\wb\Hg$ directly proportionally to $\wt r(\gamma)$ (more precisely $\wt p_{h+1,e}$), we perform a ``resampling'' scheme where an edge $e$ is preserved with a ``reweighted'' probability $\wt p_{h+1,e}/\wt p_{h,e}$. Intuitively, the overall sequence of resampling guarantees that at each step $h+1$, an edge $e\in(\Gg_{h,i} +\Gg_{h,i'})$ has the ``correct'' probability $\wt p_{h+1,e}$ of being included in the sparsifier.

\textbf{Performance.}
We now study the performance of \disre and its complexity. \textit{Time} complexity refers to the amount of time necessary to compute the final solution and \textit{work} complexity refers to the total amount of operations carried out by \emph{all} machines  to compute the final solution.

\begin{restatable}{theorem}{restathmparallelalgmain}
\label{thm:parallel-alg-main} 
Let $\varepsilon>0$ be the accuracy, $0 \leq \delta \leq 1$ the probability of error, and $\rho \triangleq (1+3\epsilon)/(1-\epsilon)$. Given an arbitrary graph $\Gg$ and an arbitrary merge tree structure, if \disre is run with parameter $\qbar \triangleq 26\rho\log(3n/\delta)/\vareps^2$, then each sub-graphs $\Hg_{\{h,\ell\}}$ is an $(\vareps,\gamma)$-sparsifier of $\Gg_{\{h,\ell\}}$ with at most $3\qbar \deff(\gamma)$ edges with probability $1-\delta$. Whenever the merge tree is balanced and~$k$ is big enough such that $m/k \leq 3\qbar \deff(\gamma)$,\footnote{This implies that there are enough machines so that the leaves in the merge tree already have relatively sparse sub-graphs.} then merge operations can be run in parallel across the machines with an overall time complexity of $\bigotime(\deff(\gamma)\log^3(n))$, a total work $\bigotime(m\log^3(n))$, and $\bigotime(\log(n))$ rounds of communication.
\end{restatable}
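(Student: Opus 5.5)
The plan is to follow the martingale-based analysis of sequential resparsification from \citet{kelner_spectral_2013} and \citet{calandriello_disqueak_2017}, adapted to $\gamma$-effective resistances and to an arbitrary merge tree. First I fix a node $\{h,\ell\}$ and the path of merges from its leaves to it. Every edge $e\in\Gg_{\{h,\ell\}}$ starts at a leaf with $q_{1,e}=\qbar$ and $\wt p_{1,e}=1$. At each merge on the path its copy count is thinned binomially with ratio $\wt p_{h+1,e}/\wt p_{h,e}$. Because the probabilities are non-increasing, the composition of these thinnings gives $q_{h,e}\sim\B(\wt p_{h,e},\qbar)$ conditionally on the past, so each copy is an independent Bernoulli$(\wt p_{h,e})$. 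Hence $\:L_{\Hg_{\{h,\ell\}}}$ is an unbiased estimator of $\:L_{\Gg_{\{h,\ell\}}}$. Writing the normalized error
\[
\:Y_h=(\:L_{\Gg}+\gamma\:I)^{-1/2}(\:L_{\Hg_h}-\:L_{\Gg})(\:L_{\Gg}+\gamma\:I)^{-1/2},
\]
where $\Gg$ denotes the descendant graph, the sequence $\:Y_h$ is a matrix martingale along the path. The $(\vareps,\gamma)$ property of Definition~\ref{def:eps-sparsifier} is then equivalent to $\|\:Y_h\|\le\vareps$.

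Next I control the martingale with Freedman's matrix inequality (Tropp). I introduce a stopped process that freezes as soon as some intermediate $\Hg$ fails to be an $(\vareps,\gamma)$-sparsifier. While the process is not frozen, the approximate resistances are accurate by the inverse-ordering argument. From $(1-\vareps)\:L_\Gg-\vareps\gamma\:I\preceq\:L_{\wb\Hg}\preceq(1+\vareps)\:L_\Gg+\vareps\gamma\:I$ one gets
\[
r_e(\gamma)\le \wt r_e(\gamma)\le\rho\, r_e(\gamma),
\]
with the $(1-\vareps)$ prefactor and the $(1+\vareps)\gamma$ shift chosen exactly to produce this; monotonicity of $\gamma$-resistances under adding edges handles the $\min$ with earlier probabilities. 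Each term $\frac{q}{\qbar \wt p_e}\:b_e\:b_e^\transp$ in the normalized space then has norm at most $r_e(\gamma)/(\qbar\wt p_e)\le 1/\qbar$. This gives a step bound $R=1/\qbar$ and predictable variance at most $\frac1{\qbar}\sum_e r_e(\gamma)\:v_e\:v_e^\transp\preceq\frac1\qbar\:I$ per level. Summed over levels this becomes $O(\rho\cdot)$ after a telescoping/doubling argument, which is where the constant $26\rho$ comes from. Freedman with dimension $n$ yields failure probability $\le 2n\exp(-\vareps^2\qbar/(c\rho))$ per node. A union bound over at most $k\le m$ tree nodes, or more carefully over levels, gives $\delta$ with the stated $\qbar$. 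I expect this step, bounding the variance uniformly across an arbitrary number of tree levels without a $\log k$ blow-up, to be the main obstacle. I would first try the DISQUEAK trick: bound the total variance by $\vareps$ times the final sparsifier's contribution, with each edge's probability only decreasing.

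For the size bound, conditioned on accuracy, $\mathbb{E}\sum_e q_{h,e}=\qbar\sum_e\wt p_{h,e}\le\qbar\rho\,\deff(\gamma)$. A scalar Chernoff bound on the sum of independent Bernoullis gives at most $3\qbar\deff(\gamma)$ edges with high probability, which is absorbed into $\delta$.

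For the complexity, a balanced tree has $\log k=O(\log n)$ levels, each needing one communication round. Each merge handles a graph with $O(\qbar\deff(\gamma))$ edges, using $m/k\le3\qbar\deff$ at the leaves. Computing all $\wt r_e$ with an SDD solver plus a Johnson–Lindenstrauss projection costs $\wt{O}(\qbar\deff\log n)$ per merge. With $\qbar=O(\log n)$ this is $O(\deff\log^2 n)$ per merge and $O(\deff\log^3 n)$ along the tree depth. Summing over $k\approx m/(\qbar\deff)$ leaves and $O(k)$ merges gives total work $O(m\log^3 n)$.
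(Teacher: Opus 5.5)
There is a genuine gap. Your approach (a frozen matrix martingale, Freedman's inequality, then a size bound) is the right skeleton, but the three load-bearing steps are asserted incorrectly or left open.

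\textbf{Adaptivity and the direction of the estimates.} Your claim that the thinnings compose to $q_{h,e}\sim\B(\wt p_{h,e};\qbar)$ with independent Bernoulli$(\wt p_{h,e})$ copies is false. $\wt p_{h,e}$ is computed from the random sparsifiers at lower levels, so it depends on the very coin flips it is meant to parametrize. This is exactly the dependence that broke the original Kelner--Levin analysis. It resurfaces in your size bound, where you apply a scalar Chernoff bound to $\sum_{e,j}z_{h,e,j}$ as if the summands were independent with known means. The paper instead works on the accuracy event, where the final probabilities satisfy $\wt p_{h,e}\le r_{h,e}(\gamma)$. It then peels the chain backwards level by level to dominate $\sum_{e,j}z_{h,e,j}$ by a sum of independent $\B(r_{h,e}(\gamma))$ variables with mean $\qbar\deff(\gamma)$.

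That argument needs the estimates to be \emph{under}estimates, and you have the inequality reversed. Two facts produce the correct direction: the $(1-\vareps)$ prefactor, and the fact that $\wb\Hg$, the sum of two $(\vareps,\gamma)$-sparsifiers, is only an $(\vareps,2\gamma)$-sparsifier of the merged graph. Together they give $r_e(\gamma)/\rho\le\wt r_e(\gamma)\le r_e(\gamma)$, which is where $1+3\vareps$ comes from. Consequences:
\begin{itemize}
\item The increment bound is $R=\rho/\qbar$, not $1/\qbar$.
\item Both $R$ and the variance bound pick up $\rho$ from the lower bound $\wt p\ge r/\rho$, not from any summation over levels.
\item With mean up to $\rho\qbar\deff(\gamma)$, your size argument cannot yield $3\qbar\deff(\gamma)$ once $\rho\ge 3$.
\end{itemize}

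\textbf{The predictable quadratic variation.} The step you flag as the main obstacle is where the proof actually lives, and your per-level variance bound is not valid. Given $\mathcal{F}_{s-1}$, a copy $(e,j)$ contributes
$\frac{1}{\qbar^2}\frac{\wb z_{s-1,e,j}}{\wb p_{s-1,e,j}}\bigl(\frac{1}{\wb p_{s,e,j}}-\frac{1}{\wb p_{s-1,e,j}}\bigr)r_e(\gamma)\concvec_e\concvec_e^\transp$
to $\:W_h$. For a copy that has survived with small $\wb p_{s-1,e,j}$, this is of order $\rho^2\concvec_e\concvec_e^\transp/(\qbar^2 r_e(\gamma))$. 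So $\:W_h$ is a random matrix that is small only in expectation, whereas Freedman needs $\normsmall{\:W_h}\le\sigma^2$ with high probability. The paper gets this in three moves:
\begin{enumerate}
\item The per-copy sum over levels telescopes to $\:W_h\preceq\frac{1}{\qbar^2}\sum_{e,j}\max_s\bigl(\wb z_{s,e,j}/\wb p_{s,e,j}^2\bigr)r_e(\gamma)\concvec_e\concvec_e^\transp$, with no dependence on the number of levels.
\item $\max_s \wb z_{s,e,j}/\wb p_{s,e,j}$ is shown, by backward peeling using $\wb p_{s,e,j}\ge r_{h,e}(\gamma)/\rho$, to be stochastically dominated by \emph{independent} truncated-Pareto variables $1/w_{0,e,j}$. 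These have $\probability(1/w_{0,e,j}\le a)=1-1/a$ for $1\le a<\rho/r_{h,e}(\gamma)$. The dominance is then pushed through the monotone map to $\lambda_{\max}$ of the sum.
\item A matrix Chernoff bound gives $\normsmall{\:W_h}\le 6\rho/\qbar$ with probability at least $1-n e^{-2\qbar/\rho}$. Freedman with $\sigma^2=6\rho/\qbar$ and $R=\rho/\qbar$ then gives the per-node failure probability.
\end{enumerate}
Without this, or an equivalent high-probability control of the random quadratic variation, the accuracy claim is not established.

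\textbf{Setting up the martingale.} For a fixed node $\{h,\ell\}$, the martingale must include \emph{all} edges of $\Gg_{\{h,\ell\}}$ at every level $s$. It uses the level-$s$ weights, possibly from different machines, but always the fixed normalization $(\:L_{\Gg_{\{h,\ell\}}}+\gamma\:I)^{-1/2}$. Normalizing by changing descendant graphs along a single path does not give a martingale.
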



\textbf{Discussion.} \citet{kelner_spectral_2013} proposed a sequential algorithm for graph sparsification that closely emulates the batch sampling of~\citet{spielman2011graph} in a semi-streaming setting and incrementally constructs an $\vareps$-sparsifier. However, their proof had a flaw since they treated dependent variables as independent
\cite{calandriello2016analysis}. \citet{kyng_framework_2016} resolved the issues in the proof of~\citet{kelner_spectral_2013} and showed that a slightly modified algorithm
can construct a sparsifier with $\bigotime(n \log(n)/\varepsilon^2)$ edges
in $\bigotime(m \log^2(n)/\varepsilon^2)$ time, matching the space complexity
of batch sampling. 
The method proposed by~\citet{kyng_framework_2016} can be further improved by
parallelizing its computation over multiple machines. Using the parallel
sparsification algorithm of~\citet{koutis2016simple}, the time complexity can
be reduced up to $\wt \bigotime(\log^6(n))$. Nonetheless, since these methods
require \emph{random access to the edges}, they cannot be easily distributed (it
would have $\bigotime(m\polylog(n))$ communication cost) and scaled to graphs
that cannot be stored on a single machine. Furthermore, the algorithm
of~\citet{kyng_framework_2016} accurately reconstructs the whole
spectrum of the Laplacian, which leads to sparsifiers whose number of edges
scales linearly with $n$. On the other hand, in regularized learning tasks, the
presence of multiplicative and additive spectral error allows creating smaller
sparsifiers whose size scales with $\deff(\gamma)$. Notice that, for $\gamma$
large enough, this possibly means sparsifiers with less than $n-1$ edges,
necessarily leading to disconnected graphs. Finally, note that merging two
traditional $\varepsilon$-sparsifiers gives an $\varepsilon$-sparsifier,
merging two $(\gamma,\varepsilon)$-sparsifiers produces a less accurate
$(2\gamma,\varepsilon)$-sparsifier. Therefore simple merge-and-reduce strategies
\cite{Feldman2013mergereduce}, which address every resparsification as independent,
would either cumulate errors or require multiple passes over the data.
Similarly to \citet{kyng_framework_2016}, \disre's sequential \mergeresparsH
solves this problem (Appendix~\ref{seckl}).

Mixed additive-multiplicative reconstruction is studied more extensively in randomized matrix
algebra~\citep{drineas2017lectures}. \citet{cohen_online_2016} developed an
efficient method to spectrally sparsify generic matrices up to $(1\pm\vareps)$
multiplicative and $\gamma$-additive errors using an incremental sampling
method based on \textit{ridge leverage scores} (i.e., the analog of
$\gamma$-effective resistances for matrices). If applied to graph Laplacians,
their method adds edges incrementally and returns an
$(\vareps,\gamma)$-sparsified graph with $\bigotime(\deff(\gamma)\log^2(n))$
edges in $\bigotime(m\log(n))$ time. Nonetheless,
\citet{cohen_online_2016} provided only $\varepsilon$-sparsifiers, suggesting
to set $\gamma$ as small as possible, and did not explore the advantages
possible in machine learning.
Moreover, no existing $(\varepsilon,\gamma)$-sparsifier construction
method can leverage both distribution and fast solvers.
\citet{cohen_online_2016} can only add edges 
(but not remove them as \disre), preventing repeated merge-and-resparsify.
Other streaming RLS sampling methods,
such as by~\citet{cohen2017input}, use dense intermediate sketches,
such as \emph{frequent directions} \cite{ghashami2016frequent},
that are not Laplacians of a sub-graph and cannot be easily paired
with near-linear solvers for Laplacians.


%
\vspace{-0.05in}
\section{Downstream Guarantees}\label{sec:downstream}
\vspace{-0.05in}

\todoa{We should have a better ad-hoc discussion of the actual contributions and comparison to ~\citep{sadhanala_graph_2016}.}

\todoa{Anything changes with $(\vareps,\gamma)$?}

We now show how the spectral reconstruction guarantees provided by
$(\varepsilon,\gamma)$-sparsifiers translate into guarantees on the quality
of the approximate solutions computed using $\Hg$ instead of $\Gg$.
We first introduce a result for $\varepsilon$-sparsifiers in \ssl
and then show how for regularized problems, $(\varepsilon,\gamma)$-sparsification
can further improve computational performance without loss in accuracy in \rls.

\subsection{Generalization bounds for SSL}
Given the closed form solutions of \hfs (Eq.\,\ref{eq:hfs.original}) and \ltr (Eq.\,\ref{eq:hfs.cortes}),
we simply replace $\:L_{\Gg}$
with $\:L_{\Hg}$ and then run a nearly-linear time solver
to obtain approximate solutions
$\wt{\:f}_{\text{HFS}}$ and $\wt{\:f}_{\text{LTR}}$.
We compare approximate solutions 
to their exact counterparts in the context of algorithmic stability.

\begin{definition}\label{def:beta-stability}
Let $\mathcal{L}$ be a transductive learning algorithm. We denote by $\:f$ and $\:f'$ the solutions obtained by running $\mathcal{L}$ on datasets $\X \triangleq (\trainingset, \testset)$  and $\X \triangleq (\trainingset', \testset')$ respectively. $\mathcal{L}$~is uniformly $\beta$-stable w.r.t.\,the squared loss if there exists $\beta \geq 0$ such that for any two partitions $(\trainingset, \testset)$ and $(\trainingset', \testset')$ that differ by exactly one training (and test) point and for all $i \in [n]$, we have $|([\:f]_i - [\:y]_i)^2 - ([\:f']_i - [\:y]_i)^2| \leq \beta.$
\end{definition}

The stability of LTR was proven by \citet{cortes2008stability}. On the other hand,
the singularity of the Laplacian may lead to unstable behavior in HFS due to the
$(\gamma \ell\:L_{\Gg} + \:I_{\trainingset})^{+}$ pseudo-inverse, with
drastically different results for small perturbations of the dataset. For this
reason, we take the \hfsstable algorithm by~\citet{belkin2004regularization}, where an additional regularization term is
introduced to restrict the space of admissible solutions to the space
$\funcspace \triangleq \left\{\:f : \langle\:f,\:1\rangle = 0\right\}$ of solutions
orthogonal to the null space of $\:L_\Gg$ (i.e., centered functions).
As shown by~\citet{belkin2004regularization}, to satisfy the constraint,
it is sufficient to set an additional regularization parameter $\mu$ to
$\mu \triangleq ((\gamma \ell \:L_\Gg + \:I_\trainingset)^+\:y_S)^\transp \:1 / ((\gamma \ell \:L_\Gg + \:I_\trainingset)^+ \:1)^\transp \:1$,
and compute the solution $\wh{\:f}_{\text{STA}}$ as
$\wh{\:f}_{\text{STA}} \triangleq (\gamma \ell \:L_\Gg + \:I_\trainingset)^+ (\:y_\calS - \mu \:1)$.
While \hfsstable
is more stable and thus more suited for theoretical analysis, its
space and time requirement remains $\bigotime(m)$ 
and cannot be applied to graphs with a large number of edges.
Therefore, we again replace $\wh{\:f}_{\text{STA}}$ with an approximate solution $\wt{\:f}_{\text{STA}}$ computed using $\:L_{\Hg}$.
Define $\wh{R}(\:f) \triangleq \tfrac{1}{\ell} \sum_{i=1}^{\ell} (\:f(x_i) - \:y(x_i))^2$ as the empirical error and $R(\:f) \triangleq \tfrac{1}{u}\sum_{i=1}^{u} (\:f(x_i) - \:y(x_i))^2$ as the generalization.
\begin{restatable}{theorem}{restathmsparsesslgeneralization}\label{thm:sparse-ssl-generalization}
    Let $\Gg$ be a fixed (connected) graph with eigenvalues $0 = \lambda_1(\Gg) < \lambda_2(\Gg) \leq \ldots \leq \lambda_n(\Gg)$, and $\Hg$ an $\varepsilon$-sparsifier of $\Gg$. Let $\:y\in\Re^n$ be the labels of the nodes in $\Gg$ with $|\:y(x)| \leq c$ and $\funcspace$ be the set of centered functions such that $|\:f(x) - \:y(x)| \leq 2c$. Let $\calS\subset \X$ be a random subset of labeled nodes, if the labels $\:y_S$ are centered, then w.p.\,at least $1 - \delta$ (w.r.t.\,the random generation of the sparsifier~$\Hg$ and the random subset of labeled points~$\calS$) the resulting \hfsstable solution satisfies
\begin{align}\label{eq:bound}
    R(\wt{\:f}) \leq \widehat{R}&(\wh{\:f}) + \beta +\left(2\beta + \frac{4c^2(\ell+u)}{\ell u}\right)\sqrt{\frac{\pi(\ell,u)\ln\frac{1}{\delta}}{2}}\nonumber\\
    &+\frac{1}{1-\varepsilon}\left(\frac{2(1+\varepsilon)\varepsilon \ell\gamma\lambda_{2}(\Gg) c}{((1-\varepsilon)\ell \gamma \lambda_{2}(\Gg)\!-1)^{2}}\right)^{2}\!\!\CommaBin
\end{align}
where $\wt{\:f}$ and $\wh{\:f}$ are computed on $\Hg$ and $\Gg$,
\begin{align*}
    &\pi(\ell,u) \triangleq \frac{\ell u}{\ell+u-0.5} \frac{2\max\{\ell,u\}}{2\max\{\ell,u\}-1}\enspace \text{ and }\\
    &\beta \leq \frac{3 c\sqrt{\ell}}{((1-\varepsilon)\ell \gamma \lambda_{2}(\Gg)-1)^{2}} + \frac{4c}{(1 - \varepsilon)\ell \gamma \lambda_{2}(\Gg)-1}\cdot
\end{align*}
\end{restatable}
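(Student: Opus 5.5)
We follow the transductive stability framework of \citet{cortes2008stability} and \citet{belkin2004regularization}, and split the excess risk into a stability part and a sparsification-bias part:
\[
R(\wt{\:f}) = \big(R(\wt{\:f}) - R(\wh{\:f})\big) + R(\wh{\:f}).
\]

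For the second term we apply the transductive stability bound of \citet{cortes2008stability}. If \hfsstable run on $\Gg$ is uniformly $\beta$-stable and the loss is bounded by $4c^2$ (because $|\:f(x)-\:y(x)|\le 2c$ on $\funcspace$), then with probability $1-\delta$ over $\calS$,
\[
R(\wh{\:f}) \le \wh R(\wh{\:f}) + \beta + \Big(2\beta + \tfrac{4c^2(\ell+u)}{\ell u}\Big)\sqrt{\pi(\ell,u)\ln(1/\delta)/2}.
\]
We will actually apply this to the algorithm run on $\Hg$, since it is equally stable. We therefore need $\beta$ in terms of the sparsified spectrum.

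The stability analysis of \citet{belkin2004regularization} gives, for a graph with second eigenvalue $\lambda_2$,
\[
\beta \le \frac{3c\sqrt{\ell}}{(\ell\gamma\lambda_2-1)^2} + \frac{4c}{\ell\gamma\lambda_2-1}.
\]
To get this, one bounds $\|(\gamma\ell\:L+\:I_\calS)^{+}\|$ on centered vectors by $1/(\ell\gamma\lambda_2-1)$ and tracks how changing one labeled point perturbs $\mu$ and $\:y_\calS$. Since $\Hg$ is an $\varepsilon$-sparsifier, $\lambda_2(\Hg)\ge(1-\varepsilon)\lambda_2(\Gg)$. Substituting this gives the stated $\beta$, and this is the reason we phrase the stability argument on $\Hg$.

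The remaining step is to control the perturbation $\|\wt{\:f}-\wh{\:f}\|$, which we expect to be the main obstacle. Write $\:A=\gamma\ell\:L_\Gg+\:I_\calS$ and $\wt{\:A}=\gamma\ell\:L_\Hg+\:I_\calS$, both restricted to $\funcspace$. By the resolvent identity,
\[
\wt{\:f}-\wh{\:f} = \wt{\:A}^{+}(\:A-\wt{\:A})\:A^{+}(\:y_\calS-\mu\:1) \;+\; \text{the term from } \wt\mu-\mu.
\]
Because the labels are centered, we take $\mu$ and $\wt\mu$ to vanish or be negligible. Next, the sparsifier property gives $-\varepsilon\:L_\Gg\preceq \:L_\Hg-\:L_\Gg\preceq\varepsilon\:L_\Gg$. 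The difficulty is that this is a quadratic-form bound, whereas we need an operator bound after multiplication by the inverses. We handle this with $\|\:L_\Gg^{1/2}\:A^{+}\|$-type estimates on $\funcspace$: $\|(\:A-\wt{\:A})\:v\|\le(1+\varepsilon)\varepsilon\gamma\ell\,\|\:L_\Gg\:v\|$ up to norm conversions, combined with $\|\:A^{+}\|,\|\wt{\:A}^{+}\|\le 1/((1-\varepsilon)\ell\gamma\lambda_2-1)$ and $\|\:y_\calS\|$ bounded via $c$. This should yield
\[
\|\wt{\:f}-\wh{\:f}\|_\infty \le \frac{2(1+\varepsilon)\varepsilon\ell\gamma\lambda_2(\Gg)\,c}{((1-\varepsilon)\ell\gamma\lambda_2(\Gg)-1)^2}.
\]
We will try first to absorb $\|\:L_\Gg\|$ into the eigenvalue ratios on the centered subspace; if that fails, we will accept the $\lambda_2$-dependent form stated above.

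Finally, we convert the pointwise perturbation into a risk difference, $R(\wt{\:f})-R(\wh{\:f})\lesssim \|\wt{\:f}-\wh{\:f}\|_\infty^2$ up to the $1/(1-\varepsilon)$ factor coming from the rescaling. A union bound over the sparsifier randomness and the label randomness then gives the bound in \eqref{eq:bound} with probability $1-\delta$.
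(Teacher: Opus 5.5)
\paragraph{The perturbation estimate.} The step you flag as the main obstacle is never carried out, and in the end you assume its conclusion (``we will accept the $\lambda_2$-dependent form''). The estimate $\|(\mathbf{A}-\wt{\mathbf{A}})\mathbf{v}\|\le(1+\varepsilon)\varepsilon\gamma\ell\|\mathbf{L}_{\Gg}\mathbf{v}\|$ does not follow from $-\varepsilon\mathbf{L}_{\Gg}\preceq\mathbf{L}_{\Hg}-\mathbf{L}_{\Gg}\preceq\varepsilon\mathbf{L}_{\Gg}$. That guarantee only says $\mathbf{L}_{\Hg}-\mathbf{L}_{\Gg}=\mathbf{L}_{\Gg}^{1/2}\mathbf{E}\,\mathbf{L}_{\Gg}^{1/2}$ with $\|\mathbf{E}\|\le\varepsilon$. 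Nothing prevents $\mathbf{E}$ from coupling the eigenvectors of $\lambda_2$ and $\lambda_n$; then the ratio $\|(\mathbf{L}_{\Hg}-\mathbf{L}_{\Gg})\mathbf{v}\|/\|\mathbf{L}_{\Gg}\mathbf{v}\|$ at the $\lambda_2$-eigenvector is $\varepsilon\sqrt{\lambda_n/\lambda_2}$.

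The missing idea is to keep the difference in this sandwiched form, as the paper does. Let $\mathbf{P}$ be the projection onto $\funcspace$, $\wt{\mathbf{P}}=\mathbf{L}_{\Gg}^{-1/2}\mathbf{L}_{\Hg}\mathbf{L}_{\Gg}^{-1/2}$, and $\wh{\mathbf{A}},\wt{\mathbf{A}}$ be $\mathbf{P}(\ell\gamma\mathbf{L}+\mathbf{I}_{\calS})$ for $\Gg$ and $\Hg$. Then $\mathbf{P}(\mathbf{L}_{\Gg}-\mathbf{L}_{\Hg})\mathbf{P}=\mathbf{L}_{\Gg}^{1/2}(\mathbf{P}-\wt{\mathbf{P}})\mathbf{L}_{\Gg}^{1/2}$ with $\|\mathbf{P}-\wt{\mathbf{P}}\|\le\varepsilon$. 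Hence $\wh{\mathbf{A}}^{-1}(\wh{\mathbf{A}}-\wt{\mathbf{A}})\wt{\mathbf{A}}^{-1}\wt{\mathbf{y}}_{\calS}$ is bounded by $\ell\gamma\varepsilon\,\|\wh{\mathbf{A}}^{-1}\mathbf{L}_{\Gg}^{1/2}\|\,\|\mathbf{L}_{\Gg}^{1/2}\wt{\mathbf{A}}^{-1}\|\,\|\wt{\mathbf{y}}_{\calS}\|$. Each mixed operator is then bounded on $\funcspace$ by the same triangle-inequality lower bound used for stability, applied to $\mathbf{P}(\ell\gamma\mathbf{L}+\mathbf{I}_{\calS})\mathbf{L}^{-1/2}$. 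This gives $\sqrt{(1+\varepsilon)\lambda_2(\Gg)}/((1-\varepsilon)\ell\gamma\lambda_2(\Gg)-1)$, and it is how $\lambda_2(\Gg)$, rather than $\|\mathbf{L}_{\Gg}\|$, ends up in the numerator. For the $\wt{\mathbf{A}}$ factor one first uses $\mathbf{L}_{\Gg}\preceq\mathbf{L}_{\Hg}/(1-\varepsilon)$; that, not a ``rescaling'', is the source of the prefactor $1/(1-\varepsilon)$.

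\paragraph{The surrounding structure.} Applying the \citet{cortes2008stability} bound to the algorithm on $\Hg$, as you say you will (and as the paper does), bounds $R(\wt{\mathbf{f}})$ directly by $\widehat{R}(\wt{\mathbf{f}})+\dots$. Then $R(\wh{\mathbf{f}})$ never appears, and the gap to close is the \emph{empirical} one, $\widehat{R}(\wt{\mathbf{f}})-\widehat{R}(\wh{\mathbf{f}})$ on the labeled nodes. Your split through $R(\wt{\mathbf{f}})-R(\wh{\mathbf{f}})$ is coherent only if stability is applied to $\Gg$ (which is allowed, since the stated $\beta$ also bounds that case).

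The choice matters quantitatively. The operator argument gives an $\ell_2$ bound proportional to $\|\wt{\mathbf{y}}_{\calS}\|$, with $\|\wt{\mathbf{y}}_{\calS}\|^2\le 4c^2\ell$. The $1/\ell$ normalization of $\widehat{R}$ is exactly what cancels this $\ell$; on the $u$ test nodes you would instead pick up an extra factor $\ell/u$. Your $\ell_\infty$ bound with a bare $c$ is not supported by any argument that removes the $\sqrt{\ell}$.

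Two further points:
\begin{itemize}
\item $\mu$ does not vanish for centered labels; it already fails on a three-node path with two labeled nodes. Moreover, $\wt\mu-\mu$ is itself a sparsification error. The paper avoids both by rewriting the \hfsstable solution as $(\mathbf{P}(\gamma\ell\mathbf{L}+\mathbf{I}_{\calS}))^{+}\wt{\mathbf{y}}_{\calS}$, so the two solutions differ only through $\mathbf{L}_{\Gg}$ versus $\mathbf{L}_{\Hg}$.
\item A difference of squared losses is not bounded by the squared perturbation alone: there is a first-order cross term $2\langle\mathbf{I}_{\calS}\wh{\mathbf{f}}-\wt{\mathbf{y}}_{\calS},\mathbf{I}_{\calS}(\wt{\mathbf{f}}-\wh{\mathbf{f}})\rangle$. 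The paper's Step~3 drops it as well, via $\|\mathbf{a}+\mathbf{b}\|^2\le\|\mathbf{a}\|^2+\|\mathbf{b}\|^2$. So the last term of the stated bound rests on that same step, not on anything your argument would supply.
\end{itemize}
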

Thm.\,\ref{thm:sparse-ssl-generalization} (full proof in Appendix~\ref{proof4}) shows how approximating $\Gg$ with 
$\Hg$ impacts the generalization error as the number of labeled samples $\ell$ 
increases.
If we set $\varepsilon = 0$, we recover the bound of~\citet{cortes2008stability},
which depends only on $\widehat{R}(\wh{\:f})$ and~$\beta$.
When $\varepsilon > 0$, we see from Eq.\,\ref{eq:bound} that
the two terms already present in the exact case are either
unchanged ($\widehat{R}(\wh{\:f})$) or increase only by a constant
factor $\beta.$
Because of the approximation, a new error term (the last one in
Eq.\,\ref{eq:bound}) is added to the bound, but we can see that it is negligible
compared to $\beta$. In fact, it converges to zero as
$\bigotime(\varepsilon^2/\ell^2(1-\varepsilon)^4)$ as $\ell$ grows and it is dominated by~$\beta$
for any constant value of~$\varepsilon$.
This means that increasing~$\varepsilon$ corresponds
to a constant increase in the bound, regardless of the size of the problem.
Consequently, $\vareps$ can be freely chosen to trade off accuracy and space 
complexity (Thm.\,\ref{thm:parallel-alg-main}) depending on the problem 
constraints.
Finally, because the eigenvalues present in the bound are the ones
of the original graph, any additional knowledge on the spectral properties of the input graph can be
easily included in the analysis. Therefore, it is straightforward to provide stronger guarantees for
\hfssparse when combined with assumptions on the graph generating model. Finally, we remark
the level of generality of this result that holds for the integration between \hfs and any $\varepsilon$-accurate
spectral sparsification method.
We postpone computational considerations to the following subsection.

\subsection{Generalization bounds for \rls}
Starting from the closed form solution of \rls (Eq.\,\ref{eq:laprls}) we can
replace the $\laplacian_{\Gg}$ matrix with a sparsified Laplacian $\laplacian_{\Hg}$ 
and using
a fast linear solver, compute an approximate solution $\wt{\:f} = (\lambda\laplacian_{\Hg} + \:I)^{-1}\:y$
in $\bigotime(n\log^2(n))$ time and $\bigotime(n\log(n))$ space. Finally, we can decompose the error
as $\normsmall{\:f^\star - \wt{\:f}}_2^2 \leq 
\normsmall{\:f^\star - \wh{\:f}}_2^2 +
\normsmall{\wh{\:f} - \wt{\:f}}_2^2.$
The first term can be bounded using classical results from
empirical process theory \cite{buhlmann2011statistics}.
We bound the second term in the following theorem.
\begin{restatable}{theorem}{restathmsmoothingbound}
\label{thm:smoothing-bound}
For an arbitrary graph $\Gg$ and its $(\varepsilon,\gamma)$-sparsifier,
let $\wh{\:f}$ be the \rls solution computed using $\:L_{\Gg}$
and $\wt{\:f}$ the solution computed using $\:L_{\Hg}$.
Then,
\begin{align*}
    \normsmall{\wt{\:f} - \wh{\:f}}_{2}^2
  \leq \frac{\varepsilon^2}{1-\varepsilon}\left(0.25 + \lambda\gamma\right)\left(\lambda\wh{\:f}^\transp\laplacian_{\Gg}\wh{\:f}
 + \lambda\gamma\normsmall{\wh{\:f}}_2^2\right)\!,
\end{align*}
where $\lambda$ is the regularization of \rls.
\end{restatable}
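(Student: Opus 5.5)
We write $\wh{\:f} = (\lambda\laplacian_\Gg + \:I)^{-1}\:y$ and $\wt{\:f} = (\lambda\laplacian_\Hg + \:I)^{-1}\:y$, and set $\:A \triangleq \lambda\laplacian_\Gg + \:I$ and $\wt{\:A} \triangleq \lambda\laplacian_\Hg + \:I$. The plan is to express the difference through the resolvent identity,
\begin{align*}
\wt{\:f} - \wh{\:f} = \wt{\:A}^{-1}(\:A - \wt{\:A})\wh{\:f} = \lambda\,\wt{\:A}^{-1}(\laplacian_\Gg - \laplacian_\Hg)\wh{\:f},
\end{align*}
so that only $\wh{\:f}$ (and not $\:y$) appears, matching the right-hand side of the statement.

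The key step is to control $\:\Delta \triangleq \laplacian_\Hg - \laplacian_\Gg$ via Definition~\ref{def:eps-sparsifier}, which gives $-\vareps(\laplacian_\Gg + \gamma\:I) \preceq \:\Delta \preceq \vareps(\laplacian_\Gg + \gamma\:I)$. Let $\:M \triangleq \laplacian_\Gg + \gamma\:I$. The two-sided bound implies $\normsmall{\:M^{-1/2}\:\Delta\:M^{-1/2}} \leq \vareps$, since the middle matrix is symmetric with spectrum in $[-\vareps,\vareps]$. We then factor
\begin{align*}
\normsmall{\wt{\:f} - \wh{\:f}}_2^2 \leq \lambda^2 \normsmall{\wt{\:A}^{-1}\:M^{1/2}}^2 \normsmall{\:M^{-1/2}\:\Delta\:M^{-1/2}}^2 \normsmall{\:M^{1/2}\wh{\:f}}_2^2 .
\end{align*}
The last factor equals $\wh{\:f}^\transp\laplacian_\Gg\wh{\:f} + \gamma\normsmall{\wh{\:f}}_2^2$. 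After pulling one factor of $\lambda$ inside it, it becomes exactly the term $\lambda\wh{\:f}^\transp\laplacian_\Gg\wh{\:f} + \lambda\gamma\normsmall{\wh{\:f}}^2$. The middle factor contributes $\vareps^2$.

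The remaining task, and the main obstacle, is to show $\lambda\normsmall{\wt{\:A}^{-1}\:M^{1/2}}^2 \leq (0.25 + \lambda\gamma)/(1-\vareps)$. This norm equals $\lambda\normsmall{\wt{\:A}^{-1}\:M\wt{\:A}^{-1}}$. The lower sparsifier bound gives $\laplacian_\Gg \preceq (\laplacian_\Hg + \vareps\gamma\:I)/(1-\vareps)$, hence
\begin{align*}
\:M \preceq \frac{1}{1-\vareps}\left(\laplacian_\Hg + \gamma\:I\right).
\end{align*}
We therefore need $\lambda\,\wt{\:A}^{-1}(\laplacian_\Hg + \gamma\:I)\wt{\:A}^{-1} \preceq 0.25 + \lambda\gamma$. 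Since all matrices involved are functions of $\laplacian_\Hg$, this reduces to a scalar inequality in each eigenvalue $x \ge 0$ of $\laplacian_\Hg$:
\begin{align*}
\frac{\lambda x + \lambda\gamma}{(1+\lambda x)^2} \leq \frac{\lambda x}{(1+\lambda x)^2} + \lambda\gamma \leq \frac14 + \lambda\gamma,
\end{align*}
using $t/(1+t)^2 \leq 1/4$ for $t \ge 0$ and $(1+\lambda x)^{-2} \le 1$.

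Combining the three factors yields
\begin{align*}
\normsmall{\wt{\:f} - \wh{\:f}}_2^2 \leq \frac{\vareps^2}{1-\vareps}(0.25 + \lambda\gamma)\left(\lambda\wh{\:f}^\transp\laplacian_\Gg\wh{\:f} + \lambda\gamma\normsmall{\wh{\:f}}_2^2\right).
\end{align*}
One subtlety needs checking: the lower bound on $\:M$ requires $\vareps<1$. We also treat $\gamma = 0$ separately; there $\:M$ may be singular, but $\:\Delta$ annihilates $\:1$, so the argument restricts to $\:1^\perp$ or passes to the limit $\gamma \to 0$.
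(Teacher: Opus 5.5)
Your proposal is correct and follows essentially the same route as the paper's proof: the resolvent identity writing $\wt{\:f}-\wh{\:f}$ in terms of $\wh{\:f}$, sandwiching $\laplacian_{\Hg}-\laplacian_{\Gg}$ between $(\laplacian_{\Gg}+\gamma\:I)^{\pm 1/2}$, the bound $\laplacian_{\Gg}+\gamma\:I \preceq (\laplacian_{\Hg}+\gamma\:I)/(1-\varepsilon)$, and the eigenvalue-wise estimate via $t/(1+t)^2\le 1/4$. Your write-up is slightly more careful than the paper's, since you use the correct unsquared bound $\normsmall{(\laplacian_{\Gg}+\gamma\:I)^{-1/2}(\laplacian_{\Hg}-\laplacian_{\Gg})(\laplacian_{\Gg}+\gamma\:I)^{-1/2}}\le\varepsilon$ and explicitly handle the singular case $\gamma=0$.
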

For $\epsilon$-sparsifiers, \citet{sadhanala_graph_2016} derive a similar bound
$\normsmall{\wt{\:f} - \wh{\:f}}_{2}^2 \leq \bigotime(\lambda\wh{\:f}^\transp\laplacian_{\Gg}\wh{\:f})$.
Setting $\gamma = 0$, we recover their bound up to constants.
When $\gamma > 0$ instead, additional error terms emerge due to the
introduced bias. In particular, the term $\lambda\gamma\normsmall{\wh{\:f}}_2^2$
depends on the norm of the exact solution~$\wh{\:f}$, which in turn depends
on the value of~$\lambda$. Nonetheless, when $\normsmall{\:f^\star}_{2}^2$
is small, as is the case in our experiments,
setting $\gamma = 1/\lambda$ makes this term a constant, which is reflected
by the good empirical performance.
Computationally, for both \hfsstable and \rls, passing from computing a solution
on the full graph to computing a solution on the sparsifier reduces the number
of edges, which makes the memory and runtime plummet. Moreover,
carefully distributing the sparsification process across multiple machines allows computing a final solution in a time \emph{independent}
from the number of edges, since the preprocessing sparsification step takes
only $\bigotime(n\log^3(n))$ time, and the solution step only $\bigotime(n\log^2(n))$.
Up to logarithmic terms, this results in an overall $\wt{\bigotime}(n)$ near-linear
runtime, without any assumptions on the input graph. For graphs with a particularly
favorable spectrum and problems with enough regularization, this is only
$\wt{\bigotime}(\deff(\gamma))$, resulting in a potentially sub-linear
runtime. This result, only possible due to a particular structure of
learning problems, opens up unexplored possibilities that would not be possible
for general graph problems.

\subsection{Bounds for other problems}
Many other problems can be well approximated using $(\varepsilon,\gamma)$-sparsifiers.
For example, the cost of a SC solution evaluated on
$\laplacian_{\Hg}$ is very close to the cost evaluated on $\laplacian_{\Gg}$.
\begin{proposition}\label{thm:proj-guarant}
For any rank $k$ orthogonal projection $\:F^\transp\:F$, if $\Hg$ is an $(\varepsilon,\gamma)$-sparsifier of $\Gg$, we have
\[
\Tr(\:F^\transp\:L_{\Hg}\:F)
\leq (1+\varepsilon)\Tr(\:F^\transp\:L_{\Gg}\:F) + \varepsilon\gamma k.
\]
\end{proposition}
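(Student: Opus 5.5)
The claim is a direct consequence of the upper inequality in Definition~\ref{def:eps-sparsifier}, once we use that the trace of a congruence preserves the Loewner order. The plan is to sandwich $\:L_\Hg$ from above, conjugate by $\:F$, and take traces. There is a notational point first. The statement writes the projection as $\:F^\transp\:F$ and the traces as $\Tr(\:F^\transp\:L\:F)$. I will read this as in the spectral clustering setup: $\:F\in\Re^{n\times k}$ has orthonormal columns ($\:F^\transp\:F=\:I_k$), and $\:P\triangleq\:F\:F^\transp$ is the rank-$k$ orthogonal projection. Then $\Tr(\:F^\transp\:L\:F)=\Tr(\:L\:P)$ by cyclicity, and the argument is identical under the other convention, with the roles of $\:F$ and $\:F^\transp$ swapped.

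\textbf{Step 1.} Start from the upper inequality $\:L_\Hg\preceq(1+\vareps)\:L_\Gg+\vareps\gamma\:I$. For any $\:A\preceq\:B$ and any matrix $\:F$, we have $\:F^\transp\:A\:F\preceq\:F^\transp\:B\:F$, since $\:x^\transp\:F^\transp(\:B-\:A)\:F\:x=(\:F\:x)^\transp(\:B-\:A)(\:F\:x)\ge0$. Applying this gives
\[
\:F^\transp\:L_\Hg\:F\preceq(1+\vareps)\:F^\transp\:L_\Gg\:F+\vareps\gamma\:F^\transp\:F .
\]

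\textbf{Step 2.} Take traces. The trace is monotone with respect to $\preceq$, because a PSD matrix has nonnegative trace. It is also linear, so
\[
\Tr(\:F^\transp\:L_\Hg\:F)\le(1+\vareps)\Tr(\:F^\transp\:L_\Gg\:F)+\vareps\gamma\Tr(\:F^\transp\:F).
\]

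\textbf{Step 3.} Evaluate the additive term. Since $\:F^\transp\:F=\:I_k$, we get $\Tr(\:F^\transp\:F)=k$. Under the other convention this is the trace of a rank-$k$ orthogonal projection, which also equals $k$ because its eigenvalues are $k$ ones and $n-k$ zeros. This yields the $\vareps\gamma k$ term and completes the proof.

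There is no real obstacle here. The only care needed is the notational one: making sure the $\gamma\:I$ term is conjugated by the same $\:F$, so that its trace is exactly the rank $k$ rather than $n$. The matching lower bound $\Tr(\:F^\transp\:L_\Hg\:F)\ge(1-\vareps)\Tr(\:F^\transp\:L_\Gg\:F)-\vareps\gamma k$ follows by the same three steps applied to the left inequality of Definition~\ref{def:eps-sparsifier}, if one is wanted.
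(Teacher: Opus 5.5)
Your proof is correct. It is the natural argument: conjugate the upper inequality of Definition~\ref{def:eps-sparsifier} by $\:F$, take traces, and use $\Tr(\:F^\transp\:F)=k$. The paper states this proposition without writing out a proof, but this is evidently the intended immediate consequence. Your handling of the $\vareps\gamma\:I$ term is right: conjugating it by the same $\:F$ makes it contribute $\vareps\gamma k$ rather than $\vareps\gamma n$, and that is the only point needing care.
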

Therefore, a clustering that well separates the sparsifier will also separate well
the true graph. Similarly, we can obtain strong approximation guarantees
for a variety of other Laplacian-based algorithms.
Regularized problems such as \ltr \cite{cortes2008stability},
Laplacian-regularized least squares, and Laplacian SVM \cite{belkin2005manifold}
are of particular interest since the additive $\gamma$ error is absorbed
by the regularization and it is possible to provide strong generalization
guarantees.

\begin{table*}
\vspace{-0.1in}
\begin{center}
\begin{small}
\begin{tabular}{|m{0.14\columnwidth}|c|c|c|c|c|c|}
\hline
\textit{Alg.} &\!\textit{Parameters}\!& \textit{$|\mathcal{E}|$ (x$10^6$)} & \textit{Err.} \!\ssl\!($\ell\!=\!346$) & \textit{Err.}\! \ssl\!($\ell\!=\!672$) & \textit{Err.}\! $D(\wt{\:f})$($\sigma\!=\!10^{-3}$) & \textit{Err.}\! $D(\wt{\:f})$ ($\sigma\!=\!10^{-2}$)\\
\hline
\hline
\exact & & 98.5 & 0.312 $\pm$ 0.022 &  0.286 $\pm$ 0.010 & 0.067 $\pm$ 0.0004 & 0.756 $\pm$ 0.006 \\
\kn & $k = 60$ & $15.7$ & 0.329 $\pm$ 0.0143 & 0.311 $\pm$ 0.027 & 0.172 $\pm$ 0.0004 & 0.822 $\pm$ 0.002 \\
\kn & $k = 90$ & $21.2$ & 0.334 $\pm$ 0.024 & 0.311 $\pm$ 0.024 & 0.125 $\pm$ 0.0002 & 0.811 $\pm$ 0.003 \\
\disre & $\gamma\!=\!0$, $\qbar \!=\! 100$ & $15 $ & 0.314 $\pm$ 0.0165 & 0.296 $\pm$ 0.015 & 0.068 $\pm$ 0.0003 & 0.758 $\pm$0.005 \\
\disre & $\gamma\!=\!0$, $\qbar \!=\! 150$ & $22.8$ & 0.314 $\pm$ 0.0158 & 0.310 $\pm$ 0.024 & 0.068 $\pm$ 0.0004 & 0.756 $\pm$ 0.005 \\
\disre & $\gamma\!=\!10^3$, $\qbar \!=\! 100$ & $7.3$ & $-$ & $-$ & 0.072 $\pm$ 0.0003 & 0.789 $\pm$ 0.005 \\
\disre & $\gamma\!=\!10^2$, $\qbar \!=\! 100$ & $11.8$ & $-$ & $-$ & 0.068 $\pm$ 0.0002 & 0.772 $\pm$ 0.004 \\
\disre & $\gamma\!=\!10$, $\qbar \!=\! 100$ & $14.4$ & $-$ & $-$ & 0.068 $\pm$ 0.0004 & 0.760 $\pm$ 0.004 \\
\hline
\hline
\end{tabular}
\vspace{-0.1in}
\end{small}
\caption{Results for the \ssl and the smoothing problems.}
\vspace{-0.12in}
\label{tab:results}
\end{center}
\end{table*}

\vspace{-0.05in}
\section{Experiments}
\vspace{-0.05in}
We empirically validate our theoretical findings by testing how $(\varepsilon,\gamma)$-sparsifiers  improves computational complexity without sacrificing final accuracy.

\textbf{Dataset.} We run experiments on the Amazon co-purchase graph \cite{sadhanala_graph_2016}. This graph fits our
setting: It cannot be generated from vectorial data
and is only artificially sparse, since the crawler that created it had
no access to the true private co-purchase network held by Amazon.
To compensate,
\citet{Gleich2015robustifying} use a densification procedure that given the graph adjacency
matrix $\:A_{\Gg}$, computes all $k$-step neighbors $\:A_{\Gg,k} \triangleq \sum_{s=1}^k
\:A_{\Gg}^s$. We make the graph unweighted for numerical
stability. The final graph has $n = 334,863$ nodes and $m = 98,465,352$ edges, with an average
degree of $294$.
We followed an approach similar to~\citet{sadhanala_graph_2016}
and introduce a hand-designed smooth signal as a target.
We then
perform $2000$ iterations of the power method to compute an approximation
of the smallest eigenvector $\:v_{\min}$, which is used as a smooth function over the graph.

\textbf{Baselines.} For all setups, we compute an ``exact'' 
solution (up to convergence error) using a fast linear solver. Computing this \exact baseline requires
$\bigotime(m\log(n))$ time and $\bigotime(m)$ space and achieves the
best performance.
Afterwards, we compare three different sparsification procedures to evaluate if
they can accelerate computation while preserving accuracy. We run \disre with different values of $\gamma$ depending on the setting.
For empirically strong heuristics, we attempted to uniformly subsample
the edges, but at the sparsity level achieved by the other methods,
the uniformly sampled sparsifier is disconnected and highly inaccurate.
Instead, we compare to the state-of-the-art $k$-neighbors (\kn) heuristic
 by~\citet{sadhanala_graph_2016}, which is just as fast as
uniform sampling and more accurate in practice.

\textbf{Experimental procedure.} 
We repeat
each experiment 10 times with different sparsifiers and report the average
performance of $\wt{\:f}$ on the specific task and its standard deviation.
More details on experiments are given in the Appendix~\ref{appE}.

\vskip 0.1in
\subsection{Laplacian smoothing with Gaussian noise}


We set $\:f^\star = \:v_{\min}$ and test different levels of noise, $\log_{10}(\sigma) \in \{-3, -2, -1,0\}$.
After constructing the sparsifier $\Hg$, we compute an approximate solution $\wt{\:f}$
using \rls (Eq.\,\ref{eq:hfs.original}) with
$\lambda \in \{10^{-3}, 10^{-2}, 10^{-1},1, 10\}$. We measure the performance by the squared error $D(\wt{\:f}) = \normsmall{\:f^\star - \wt{\:f}}_2^2$. As $\normsmall{\:f^\star}_2^2 = \normsmall{\:v_{\min}}_2^2 = 1$, good values of $D(\wt{\:f})$ should be below 1.

\textbf{Accuracy.}
In the interest of space, in Tab.\,\ref{tab:results}, we report results for $\sigma=\{0.001, 0.01\}$ and the best regularization $\lambda$ for each method. We first notice that all sparsifiers are considerably smaller than the original graph, keeping only a small fraction of its edges. The smallest sparsifiers are obtained by \disre when $\gamma$ is large. The comparison with \disre with $\gamma=0$ (i.e., $\vareps$-sparsifier) confirms that the additive error translates into an extra compression of the resulting sparsifier. This also impacts the accuracy which degrades as~$\gamma$ increases. Nonetheless, we notice that while $\vareps$-sparsifiers perfectly match the accuracy of the exact method, even for large $\gamma$ (and thus much smaller graph), \disre still outperforms $\kn$, which has a significantly worse accuracy. Finally, we note that for $\gamma=0$, the impact of $\qbar$ is as expected: Increasing $\qbar$ increases the size of the sparsifier and slightly improves the performance.


\textbf{Computational complexity.}
All algorithms require 90\texttt{s} to load the graph from disk.
The preprocessing phase of \kn takes slightly less than 1\texttt{min}, while \disre's takes 12\texttt{min} on 4 machines. 
For the solving step, \exact is unsurprisingly the slowest, requiring 12\texttt{min}
to compute an $\wh{\:f}$ solution. Both $\kn$ and $(\varepsilon,\gamma)$-sparsifiers
require 1--2\texttt{min}, depending on the number of edges preserved.
Overall, preprocessing the graph with \disre before computing a solution
does not introduce any overhead compared to \exact (both take roughly 12\texttt{min}). We notice that while $\kn$ is overall faster, the time for \disre could be easily reduced by increasing the
number of parallel processes when computing effective resistances or with a better network
topology allowing point-to-point communication. 
Moreover, once we have
access to an accurate $\varepsilon$-sparsifier, it is easier to solve
problem repeatedly, e.g.,\@ to cross-validate regularization.
For example, computing a solution for 4 different values of $\lambda$ (see the appendix)
is crucial for good performance and requires 48\texttt{min} for \exact and only 20\texttt{min} for \disre.
Finally, memory usage is reduced by a factor of~3 as \exact requires over 30GB of memory to
execute while \disre never exceeds 10GB. We expect these advantages to only grow
larger as we scale to larger graphs.

\subsection{\ssl with harmonic function solution}

We also test \disre on a \ssl problem. The labels are generated taking the sign of $\:f^\star = \:v_{\min}$ and $\ell\in\{20, 346, 672, 1000\}$ labels are revealed. The labeled nodes are chosen at random so that $0$ and $1$ labels are balanced in the dataset. We run \hfsstable with $\lambda \in \{10^{-6}, 10^{-4}, 10^{-2},1\}$. In Tab.\,\ref{tab:results}, we report results for $\ell=\{346,672\}$ and the best $\lambda$ for each method. We run \disre with $\gamma=0$ as \hfsstable does not have any regularization and $\vareps$-sparsifiers are preferable. \todoaout{I don't understand myself what this means... In which sense \hfs is not regularized and smoothing is?} The average size of the sparsifiers is the same as before as they are agnostic to the learning task. Similar to the smoothing case, \disre achieves a performance that closely approximates the exact solution, despite the significant compression of the original graph. Furthermore, the effectiveness of the $\vareps$-sparsifier returned by \disre is confirmed by its comparison with \kn, whose error is significantly worse. Finally, we notice that the computational analysis in the previous section holds for \ssl as well. In fact, although the learning task is different, we use the same SSD solver to compute the \hfs  and thus the running time are comparable in the two tasks.

\newpage
\section*{Acknowledgements}
Experiments presented in this paper were carried out using the Grid'5000 testbed, supported by a scientific interest group hosted by Inria and including CNRS, RENATER, and several universities as well as other organizations (see \url{https://www.grid5000.fr}).
The research presented was also supported by European CHIST-ERA project DELTA, French Ministry of
Higher Education and Research, Nord-Pas-de-Calais Regional Council,
Inria and Otto-von-Guericke-Universit\"at Magdeburg associated-team north-european project Allocate, and French National Research Agency projects ExTra-Learn (n.ANR-14-CE24-0010-01) and BoB (n.ANR-16-CE23-0003).
I.\,Koutis is supported by NSF CAREER award CCF-1149048.

\bibliography{aistat_sparse_ssl}
\bibliographystyle{newicml2018}

\newpage
\appendix
\onecolumn

\section{Proofs for the statements in Sec.\,\ref{sec:downstream}}
\label{proof4}

In the proofs, we make several uses the following reformulation of Def.\,\ref{def:eps-sparsifier}.
\begin{proposition}\label{prop:ordering-to-norm-bound}
 A sub-graph $\Hg$ is a $(\varepsilon,\gamma)$-sparsifier of $\Gg$ if and only if
 \begin{align*}
(1-\vareps) \:L_\Gg - \vareps\gamma \:I \preceq \:L_\Hg \preceq (1 + \vareps) \:L_\Gg + \vareps\gamma \:I
&&\iff &&\normsmall{(\laplacian_{\Gg} + \gamma\:I)^{-1/2}(\laplacian_{\Hg} - \laplacian_{\Gg})(\laplacian_{\Gg} + \gamma\:I)^{-1/2}}^2_2 \leq \varepsilon.
\end{align*}
\end{proposition}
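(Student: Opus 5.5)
The plan is to reduce both sides to one two-sided Loewner inequality for the symmetric matrix $\:\Delta \triangleq \laplacian_{\Hg} - \laplacian_{\Gg}$, and then use congruence invariance of $\preceq$.

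First, subtract $\laplacian_{\Gg}$ from every term of Def.~\ref{def:eps-sparsifier}. This rewrites the sparsifier condition as
\begin{align*}
-\vareps(\laplacian_{\Gg} + \gamma\:I) \preceq \:\Delta \preceq \vareps(\laplacian_{\Gg} + \gamma\:I).
\end{align*}
This step is purely algebraic and reversible.

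Next, assume $\gamma>0$, so that $\:M \triangleq (\laplacian_{\Gg}+\gamma\:I)^{-1/2}$ is symmetric and invertible. Conjugating by $\:M$ preserves the Loewner order in both directions, since $\:A \preceq \:B \iff \:M\:A\:M \preceq \:M\:B\:M$ for invertible $\:M$. The condition therefore becomes
\begin{align*}
-\vareps\:I \preceq \:M\:\Delta\:M \preceq \vareps\:I.
\end{align*}
The matrix $\:M\:\Delta\:M$ is symmetric, so its spectral norm equals its largest absolute eigenvalue. The display above thus holds if and only if $\normsmall{\:M\:\Delta\:M}_2 \leq \vareps$.

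The case $\gamma=0$ requires more care, and I expect it to be the main technical point. Here $\laplacian_{\Gg}^{-1/2}$ means the pseudo-inverse square root and conjugation is not invertible. Because $\Gg$ is connected and $\Hg\subseteq\Gg$ is a reweighted subgraph, both $\laplacian_{\Gg}$ and $\laplacian_{\Hg}$ annihilate $\:1$. Hence $\:\Delta$ acts only on $\:1^\perp = \mathrm{Range}(\laplacian_{\Gg})$, and on that subspace $\laplacian_{\Gg}^{1/2}$ is invertible. I would restrict both inequalities to $\:1^\perp$, where the previous argument applies verbatim. On $\mathrm{span}(\:1)$ both sides vanish, so the inequalities hold trivially there.

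Finally, the statement writes the norm \emph{squared}. The argument above gives the natural equivalence with $\normsmall{\cdot}_2 \leq \vareps$, which is what the downstream proofs actually use. The squared version cannot be equivalent as written: the condition $\normsmall{\cdot}_2^2 \leq \vareps$ is $\normsmall{\cdot}_2 \leq \sqrt{\vareps}$, and for $\vareps<1$ this is strictly weaker than $\normsmall{\cdot}_2 \leq \vareps$. Only the direction ``sparsifier $\Rightarrow \normsmall{\cdot}_2^2 \leq \vareps^2 \leq \vareps$'' survives. I would therefore prove the unsquared equivalence and flag the square as a typo. If the square is intended, the correct form is $\normsmall{\cdot}_2^2 \leq \vareps^2$.
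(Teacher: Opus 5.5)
Your proof is correct. The paper states this proposition without proof, as a mere reformulation of Def.~\ref{def:eps-sparsifier}, and your subtract-then-conjugate argument is the standard one it leaves implicit; you also supply the restriction to $\:1^\perp$ needed when $\gamma=0$, which the paper never spells out. You are right that the square is a typo: for $\vareps<1$ the squared condition is strictly weaker, and every downstream use in the paper relies on the unsquared form $\normsmall{\cdot}_2\leq\vareps$, for instance the $\vareps^2$ factor in the proof of Thm.~\ref{thm:smoothing-bound} and the bound $\normsmall{\:Y_{\wb{s}}}\leq\vareps$ in the proof of Lem.~\ref{lem:wbp-always-good}.
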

\restathmsparsesslgeneralization*

\begin{proof}[Proof of Thm.~\ref{thm:sparse-ssl-generalization}]
\textbf{Step 1 (generalization of stable algorithms).}
Let $\beta$ be the stability of \hfsstable when using the sparsified Laplacian $\:\:L_{\Hg}$ in place of $\:\:L_{\Gg}$. Then using the result of~\citet{cortes2008stability}, we have that with probability at least $1 - \delta$ (w.r.t.\,the randomness of the labeled set $\trainingset$) the solution $\wt{\:f}$ satisfies
\begin{align*}
R(\wt{\:f}) \leq \widehat{R}(\wt{\:f}) + \beta + \left(2\beta + \frac{c^2(\ell+u)}{\ell u}\right)\sqrt{\frac{\pi(\ell,u)\log(1/\delta)}{2}}\cdot
\end{align*}
In order to obtain the final result, 
   we first derive an upper bound on the stability $\beta$ and relate the empirical error of $\wt{\:f}_{\text{STA}}$ to the one of $\wh{\:f}_{\text{STA}}$.
Furthermore, it can be shown that if we center the vector of labels
$\wt{\:y}_\calS \triangleq \:y_\calS - \ol{\:y}_\calS$, with
$\ol{\:y} \triangleq \frac{1}{\ell}\:y_\calS^\transp \:1$, then the solution of \hfsstable
can be rewritten in closed form as
$\wh{\:f}_{\text{STA}} = (\gamma \triangleq \:L_\Gg + \:I_\trainingset)^+ (\wt{\:y}_\calS - \mu \:1) = \left(\:P (\gamma l \:L_\Gg + \:I_\trainingset)\right)^+ \wt{\:y}_\calS$.

   \textbf{Step 2 (stability).}
   The bound on the stability follows similar steps as in the analysis of \hfsstable by~\citet{belkin2004regularization} integrated with the properties of spectral sparsifiers reported in Def.\,\ref{def:eps-sparsifier}.
   Let $\trainingset$ and $\trainingset'$ be two labeled sets only that only differ by one element and $\wt{\:f}$ and $\wt{\:f}'$ be the solutions obtained by running \hfsstable using $\:\:L_{\Hg}$ and $\calS$ and $\calS'$ respectively. 
   
   Without loss of generality, we assume that $\:I_{\trainingset}(\ell,\ell) = 1$ and $\:I_{\trainingset}(\ell+1,\ell+1) = 0$, and the opposite for $\:I_{\trainingset'}$. The original proof of~\citet{cortes2008stability} showed that the stability $\beta$ can be bounded as $\beta\leq\normsmall{\wt{\:f} - \wt{\:f}'}$. In the following, we show that the difference between the solutions $\wt{\:f}$ and $\wt{\:f}'$ and thus the stability of the algorithm, is strictly related to eigenvalues of the sparse graph $\Hg$. Let $\:A\triangleq\:P(\ell \gamma \:L_{\Hg}+\:I_{\trainingset})$ and $\:B\triangleq \:P(\ell \gamma \:L_{\Hg}+\:I_{\trainingset'})$. We remind that if the labels are centered, the solutions of \hfsstable can be conveniently written as $\wt{\:f}=\:A^{-1}\wt{\:y}_{\trainingset}$ and $\wt{\:f}'=\:B^{-1}\wt{\:y}_{\trainingset'}$. As a result, the difference between the solutions can be written as
   \begin{align}\label{eq:stability.step1}
   \Vert\wt{\:f}-\wt{\:f}'\Vert&=\Vert \:A^{-1}\wt{\:y}_{\trainingset}-\:B^{-1}\wt{\:y}_{\trainingset'}\Vert
   \leq \Vert \:A^{-1}(\wt{\:y}_{\trainingset}-\wt{\:y}_{\trainingset'})\Vert+\Vert \:A^{-1}\wt{\:y}_{\trainingset'}-\:B^{-1}\wt{\:y}_{\trainingset'}\Vert.
   \end{align}
   %
   %
   Let us consider any vector $\:f \in \funcspace$, since the null space of a Laplacian $\:L_{\Hg}$ is the one vector $\:1$ and $\:P = \:L_{\Hg}\:L_{\Hg}^+$, then $\:P \:f = \:f$. Thus, we have
   \begin{align}
   &\Vert \:P(\ell\gamma \:L_{\Hg}+\:I_{\calS})\:f\Vert\stackrel{(1)}{\geq}\Vert \:P\ell\gamma \:L_{\Hg}\:f\Vert \!-\! \Vert \:P\:I_{\calS}\:f\Vert
   \stackrel{(2)}{\geq} \Vert \:Pl\gamma \:L_{\Hg}\:f\Vert \!-\! \Vert\:f\Vert \stackrel{(3)}{\geq}(\ell \gamma \lambda_{2}(\Hg)\!-\!1)\Vert \:f\Vert, \label{eq:lower.bound}
   \end{align}
   where $(1)$ follows from the triangle inequality 
   and $(2)$ follows from the fact that $\Vert \:P\:I_{\calS}\:f\Vert
   \leq \Vert\:f\Vert$, since the largest eigenvalue of the projection matrix
   $\:P$ is one and the norm of $\:f$ restricted on $\calS$ is smaller
   than the norm of $\:f$. Finally, (3) follows from the fact that $\Vert
   \:P \:L_\Hg \:f \Vert = \Vert \:L_\Hg \:L_\Hg^+ \:L_\Hg \:f \Vert = \Vert
   \:L_\Hg \:f\Vert$ and since $\:f$ is orthogonal to the null space of $\:L_\Hg$ then
   $\Vert \:L_\Hg \:f\Vert \geq \lambda_2(\Hg) \Vert\:f\Vert$, where
   $\lambda_2(\Hg)$ is the smallest non-zero eigenvalue of $\:L_\Hg$. At this point
   we can exploit the spectral guarantees of the sparsified Laplacian $\:\:L_\Hg$
   and we have that $\lambda_2(\Hg) \geq
   (1-\varepsilon)\lambda_2(\Gg)$.
   As a result, we have an upper bound on the spectral radius of the inverse operator $(\:P(\ell\gamma \:L_{\Hg}+\:I_{\trainingset}))^{-1}$ and thus
   \begin{align*}
   \Vert \:A^{-1}(\:y_{\trainingset}-\:y_{\trainingset'})\Vert&\leq \frac{4M}{\ell\gamma (1 - \varepsilon)\lambda_{2}(\Gg)-1}\CommaBin
   \end{align*}
   where the first step follows from Eq.\,\ref{eq:lower.bound} since both
   $\wt{\:y}_{\trainingset}$ and $\wt{\:y}_{\trainingset'}$ are centered and thus
   $(\:y_{\trainingset}-\:y_{\trainingset'})\in\funcspace$ and the second step is
   obtained by bounding
   $\Vert\wt{\:y}_{\trainingset}-\wt{\:y}_{\trainingset'}\Vert\leq
   \Vert\:y_{\trainingset}-\:y_{\trainingset'}\Vert +
   \Vert\ol{\:y}_{\trainingset}-\ol{\:y}_{\trainingset'}\Vert \leq 4M$. The second
   term in Eq.\,\ref{eq:stability.step1} can be bounded as
   \[
   \Vert \:A^{-1}\wt{\:y}_{\trainingset'}-\:B^{-1}\wt{\:y}_{\trainingset'}\Vert=\Vert \:B^{-1}(B-A)\:A^{-1}\wt{\:y}_{\trainingset'}\Vert
       =\Vert \:B^{-1}\:P(\:I_{\trainingset}-\:I_{\trainingset'})\:A^{-1}\wt{\:y}_{\trainingset'}\Vert
       \leq\frac{1.5 M\sqrt{\ell}}{(\ell \gamma (1-\varepsilon)\lambda_{2}(\Gg)-1)^{2}}\CommaBin
   \]
       where we used $\Vert\wt{\:y}_{\trainingset'}\Vert\leq \Vert\:y_{\trainingset'}\Vert + \Vert\ol{\:y}_{\trainingset'}\Vert \leq 2M\sqrt{\ell}$, $\Vert \:P(\:I_{\trainingset}-\:I_{\trainingset'})\Vert \leq \sqrt{2} < 1.5$ and we applied Eq.\,\ref{eq:lower.bound} twice.
       Putting it all together we obtain the stated bound.

       \textbf{Step 3 (empirical error).} The other element effected by the sparsification is the empirical error $\wh{R}(\wt{\:f})$.
       We first recall that
       $\:P = \:L_\Gg^+ \:L_\Gg = \:L_{\Gg}^{-1/2}\:L_{\Gg}\:L_{\Gg}^{-1/2}$ (and
               equivalently with $\Gg$ replaced by $\Hg$) and we introduce
       $\wt{\:P} = \:L_{\Gg}^{-1/2}\:L_{\Hg}\:L_{\Gg}^{-1/2}$
       Let $\wt{\:A} \triangleq \:P(\ell \gamma \:L_{\Hg}+\:I_{\trainingset})$,
       $\wh{\:A} \triangleq \:P(\ell \gamma \:L_{\Gg}+\:I_{\trainingset})$, then we can
       rewrite the empirical error as
       \begin{align*}
       \wh{R}(\wt{\:f}) &= \tfrac{1}{\ell} \normsmall{\:I_\trainingset \wt{\:f} -\:I_\trainingset \wh{\:f} +\:I_\trainingset \wh{\:f}  - \wt{\:y}_{\trainingset}}^2\nonumber\\
           &\leq \tfrac{1}{\ell} \normsmall{\:I_\trainingset \wh{\:f}  - \wt{\:y}_{\trainingset}}^2 + \tfrac{1}{\ell} \normsmall{\:I_\trainingset \wt{\:f} -\:I_\trainingset \wh{\:f} }^2 \\
           &\leq \wh{R}(\wh{\:f}) + \tfrac{1}{\ell} \normsmall{\:I_\trainingset(\wt{\:A}^{-1} -\wh{\:A}^{-1}) \wt{\:y}_{\trainingset} }^2\nonumber\\
           &\leq \wh{R}(\wh{\:f}) + \tfrac{1}{\ell} \normsmall{\wh{\:A}^{-1}(\wh{\:A} -\wt{\:A})\wt{\:A}^{-1} \wt{\:y}_{\trainingset} }^2\\
           &= \wh{R}(\wh{\:f}) + \tfrac{\ell^{2}\gamma^{2}}{\ell} \normsmall{\wh{\:A}^{-1}(\:P(\:L_\Gg - \:L_\Hg))\wt{\:A}^{-1} \wt{\:y}_{\trainingset} }^2\\
           &= \wh{R}(\wh{\:f}) + \ell\gamma^{2} \normsmall{\wh{\:A}^{-1}(\:P(\:L_\Gg - \:L_\Hg)\:P)\wt{\:A}^{-1} \wt{\:y}_{\trainingset} }^2,
           \end{align*}
           where in the last passagen we use $\:L_\Gg \:P = \:L_\Gg$ and $\:L_\Hg \:P = \:L_\Hg$.
           To bound the second term, we derive
           \begin{align*}
           \normsmall{\wh{\:A}^{-1}\:P(\:L_\Gg - \:L_\Hg)\:P\wt{\:A}^{-1} \wt{\:y}_{\trainingset} }^2
               &\stackrel{(1)}{=}  \normsmall{\wh{\:A}^{-1}\:L_\Gg^{1/2} \:L_\Gg^{-1/2}(\:L_\Gg - \:L_\Hg)\:L_\Gg^{-1/2} \:L_\Gg^{1/2}\wt{\:A}^{-1} \wt{\:y}_{\trainingset} }^2\\
               &\stackrel{(2)}{=}  \normsmall{\wh{\:A}^{-1}\:L_\Gg^{1/2}\:P \:L_\Gg^{-1/2}(\:L_\Gg - \:L_\Hg)\:L_\Gg^{-1/2}\:P \:L_\Gg^{1/2}\wt{\:A}^{-1} \wt{\:y}_{\trainingset} }^2\\
               &\stackrel{(3)}{=}  \normsmall{\wh{\:A}^{-1}\:L_\Gg^{1/2}\:P(\:P - \wt{\:P})\:P \:L_\Gg^{1/2}\wt{\:A}^{-1} \wt{\:y}_{\trainingset} }^2\\
               &\leq   \normsmall{\wh{\:A}^{-1}\:L_\Gg^{1/2}\:P}^2 \normsmall{\:P - \wt{\:P}}^{2} \normsmall{\:P\:L_\Gg^{1/2}\wt{\:A}^{-1}}^{2} \normsmall{\wt{\:y}_{\trainingset} }^2\\
               &\stackrel{(4)}{\leq} \ell M^2\varepsilon^{2}    \normsmall{\wh{\:A}^{-1}\:L_\Gg^{1/2}\:P}^2 \normsmall{\wt{\:A}^{-1}\:L_\Gg^{1/2}\:P}^{2},
               \end{align*}
               where in $(1)$ and $(2)$, we use the definition of $\:P$, in $(3)$ we use the
               definition of $\wt{\:P}$, while in $(4)$
               we use the fact that Def.\,\ref{def:eps-sparsifier} implies that
               $(1-\vareps) \:P \preceq \wt{\:P} \preceq (1 + \vareps) \:P$
               and thus the largest eigenvalue of $\:P - \wt{\:P}$ is
               $\varepsilon$ and $\Vert \:P- \wt{\:P}\Vert^2 \leq \vareps^2$.
               We need now to bound $\normsmall{\wt{\:A}^{-1}\:L_\Gg^{1/2}\:P}^{2} = \normsmall{\wt{\:A}^{-1}\:L_\Gg^{1/2}}^{2}$\!. From the definition of spectral norm,
               \begin{align*}
               \normsmall{\wt{\:A}^{-1}\:L_\Gg^{1/2}}^{2}
               &= \max_{\normsmall{\:x} = 1} \:x^\transp\:L_\Gg^{1/2}\wt{\:A}^{-1}\wt{\:A}^{-1}\:L_\Gg^{1/2}\:x
               = \max_{\normsmall{\:x} = 1} \:x^\transp\wt{\:A}^{-1}\:L_\Gg\wt{\:A}^{-1}\:x\\
                 &\leq \frac{1}{1-\varepsilon}\max_{\normsmall{\:x} = 1} \:x^\transp\wt{\:A}^{-1}\:L_\Hg\wt{\:A}^{-1}\:x
                 = \frac{1}{1-\varepsilon}\normsmall{\wt{\:A}^{-1}\:L_\Hg^{1/2}}^{2}
                 = \frac{1}{1-\varepsilon}\normsmall{\wt{\:A}^{-1}\:L_\Hg^{1/2}\:P}^{2}\!.
                 \end{align*}

                 Similarly to Eq.\,\ref{eq:lower.bound}, finding a lower bound on
                 $\Vert \wt{\:A}\:L_{\Hg}^{-1/2}\:P\:x\Vert$
                 for all $\:x$ is equivalent to find a lower bound for all $\:f \in \funcspace$
                 to
                 \begin{align*}
                 \Vert \:P(\ell\gamma \:L_{\Hg}+\:I_{\calS})\:L_{\Hg}^{-1/2}\:f\Vert
                     &{\geq}\Vert \:P\ell\gamma \:L_{\Hg}\:L_{\Hg}^{-1/2}\:f\Vert \!-\! \Vert \:P\:I_{\calS}\:L_{\Hg}^{-1/2}\:f\Vert\\
                     &{\geq} \Vert \:P\ell\gamma \:L_{\Hg}\:L_{\Hg}^{-1/2}\:f\Vert \!-\! \Vert \:L_{\Hg}^{-1/2} \:f\Vert\\
                     &{\geq}\left(\ell \gamma \sqrt{\lambda_{2}(\Hg)}\!-\!\frac{1}{\sqrt{\lambda_{2}(\Hg)}}\right)\Vert \:f\Vert\\
                     &{\geq}\frac{1}{\sqrt{\lambda_{2}(\Hg)}} \left(\ell \gamma\lambda_{2}(\Hg)\!-1\!\right)\Vert \:f\Vert\\
                     &{\geq}\frac{1}{\sqrt{(1+\varepsilon)\lambda_{2}(\Gg)}} \left(\ell \gamma(1-\varepsilon)\lambda_{2}(\Gg)\!-1\!\right)\Vert \:f\Vert.
                     \end{align*}
                     Similarly, we can show that
                     \begin{align*}
                     \Vert \:P(l\gamma \:L_{\Gg}+\:I_{\calS})\:L_{\Gg}^{-1/2}\:f\Vert
                         &{\geq}\Vert \:P\ell\gamma \:L_{\Gg}\:L_{\Gg}^{-1/2}\:f\Vert \!-\! \Vert \:P\:I_{\calS}\:L_{\Gg}^{-1/2}\:f\Vert\\
                         &{\geq} \Vert \:P\ell\gamma \:L_{\Gg}\:L_{\Gg}^{-1/2}\:f\Vert \!-\! \Vert \:L_{\Gg}^{-1/2} \:f\Vert\\
                         &{\geq}\left(\ell\gamma \frac{\lambda_{2}(\Gg)}{\sqrt{\lambda_{2}(\Gg)}}\!-\!\frac{1}{\sqrt{\lambda_{2}(\Gg)}}\right)\Vert \:f\Vert\\
                         &{\geq}\frac{1}{\sqrt{\lambda_{2}(\Gg)}} \left(\ell \gamma \lambda_{2}(\Gg)\!-1\!\right)\Vert \:f\Vert\\
                         &{\geq}\frac{1}{\sqrt{(1+\varepsilon)\lambda_{2}(\Gg)}} \left(\ell \gamma (1-\varepsilon)\lambda_{2}(\Gg)\!-1\!\right)\Vert \:f\Vert.
                         \end{align*}
                         Taking this and putting all together gives
                         \begin{align*}
                         \wh{R}(\wt{\:f}) \leq \wh{R}(\wh{\:f}) + \frac{1}{1-\varepsilon}\frac{(1+\varepsilon)^2\varepsilon^2\ell^2\gamma^{2}\lambda_{2}(\Gg)^{2} M^2}{(\ell \gamma (1-\varepsilon)\lambda_{2}(\Gg)\!-1)^{4}}\cdot
                         \end{align*}

                         Combining the three steps above concludes the proof.

                         \end{proof}

\restathmsmoothingbound*
\begin{proof}
We need to bound the distance between $\wt{\:f}$ and $\wh{\:f}$. Using the definition, the fact that $\:A^{-1} - \:B^{-1} = \:B^{-1}(\:B - \:A)\:A^{-1}$ and
collecting $(\laplacian_{\Gg} + \gamma\:I)^{1/2}$ we have
\begin{align*}
\normsmall{\wt{\:f} - \wh{\:f}}_{2}^2
 &= \normsmall{(\lambda\laplacian_{\Hg} + \:I)^{-1} - \lambda\laplacian_{\Gg} + \:I)^{-1})\:y}_{2}^2
 = \normsmall{(\lambda\laplacian_{\Hg} + \:I)^{-1}(\lambda\laplacian_{\Hg} - \lambda\laplacian_{\Gg})(\lambda\laplacian_{\Gg} + \:I)^{-1})\:y}_{2}^2\\
 &= \lambda^2\normsmall{(\lambda\laplacian_{\Hg} + \:I)^{-1}(\lambda\laplacian_{\Hg} - \laplacian_{\Gg})(\lambda\laplacian_{\Gg} + \:I)^{-1})\:y}_{2}^2\\
 &= \lambda^2\normsmall{(\lambda\laplacian_{\Hg} + \:I)^{-1}(\laplacian_{\Gg} + \gamma\:I)^{1/2}(\laplacian_{\Gg} + \gamma\:I)^{-1/2}(\laplacian_{\Hg} - \laplacian_{\Gg})(\laplacian_{\Gg} + \gamma\:I)^{-1/2}(\laplacian_{\Gg} + \gamma\:I)^{1/2}\wh{\:f}}_{2}^2.
 \end{align*}
 Then, using Prop.\,\ref{prop:ordering-to-norm-bound},
\begin{align*}
 \lambda^2\normsmall{(\lambda\laplacian_{\Hg} + \:I)^{-1}(\laplacian_{\Gg} +&\gamma\:I)^{1/2}(\laplacian_{\Gg} + \gamma\:I)^{-1/2}(\laplacian_{\Hg} - \laplacian_{\Gg})(\laplacian_{\Gg} + \gamma\:I)^{-1/2}(\laplacian_{\Gg} + \gamma\:I)^{1/2}\wh{\:f}}_{2}^2\\
 &\leq \varepsilon^2\lambda^2\normsmall{(\lambda\laplacian_{\Hg} + \:I)^{-1}(\laplacian_{\Gg} + \gamma\:I)^{1/2}}_2^2\wh{\:f}^\transp(\laplacian_{\Gg} + \gamma\:I)\wh{\:f}\\
 &= \frac{\varepsilon^2}{1-\varepsilon}\lambda^2\normsmall{(\lambda\laplacian_{\Hg} + \:I)^{-1}((1-\varepsilon)\laplacian_{\Gg} -\varepsilon\gamma\:I + \gamma\:I)(\lambda\laplacian_{\Hg} + \:I)^{-1}}_2\wh{\:f}^\transp(\laplacian_{\Gg} + \gamma\:I)\wh{\:f}\\
 &\leq \frac{\varepsilon^2}{1-\varepsilon}\lambda^2\normsmall{(\lambda\laplacian_{\Hg} + \:I)^{-1}(\laplacian_{\Hg} + \gamma\:I)(\lambda\laplacian_{\Hg} + \:I)^{-1}}_2\wh{\:f}^\transp(\laplacian_{\Gg} + \gamma\:I)\wh{\:f}\\
 & =\frac{\varepsilon^2}{1-\varepsilon}\lambda^2\max_i\left\{\frac{\lambda_i(\laplacian_{\Hg}) + \gamma}{(\lambda\lambda_i(\laplacian_{\Hg}) + 1)^2}\right\}\wh{\:f}^\transp(\laplacian_{\Gg} + \gamma\:I)\wh{\:f}\\
 & =\frac{\varepsilon^2}{1-\varepsilon}\lambda\max_i\left\{\frac{\lambda\lambda_i(\laplacian_{\Hg})}{(\lambda\lambda_i(\laplacian_{\Hg}) + 1)^2} + \frac{\lambda\gamma}{(\lambda\lambda_i(\laplacian_{\Hg}) + 1)^2}\right\}\wh{\:f}^\transp(\laplacian_{\Gg} + \gamma\:I)\wh{\:f}\\
 & \leq \frac{\varepsilon^2}{1-\varepsilon}\lambda\left(\max_i\left\{\frac{\lambda\lambda_i(\laplacian_{\Hg})}{(\lambda\lambda_i(\laplacian_{\Hg}) + 1)^2}\right\} + \max_i\left\{\frac{\lambda\gamma}{(\lambda\lambda_i(\laplacian_{\Hg}) + 1)^2}\right\}\right)\wh{\:f}^\transp(\laplacian_{\Gg} + \gamma\:I)\wh{\:f}\\
 & \leq \frac{\varepsilon^2}{1-\varepsilon}\lambda\left(0.25 + \lambda\gamma\right)\wh{\:f}^\transp(\laplacian_{\Gg} + \gamma\:I)\wh{\:f}
  \leq \frac{\varepsilon^2}{1-\varepsilon}(\left(0.25 + \lambda\gamma\right)\lambda\wh{\:f}^\transp\laplacian_{\Gg}\wh{\:f} + \left(0.25 + \lambda\gamma\right)\lambda\gamma\:I),
\end{align*}
which concludes the proof.
\end{proof}

%
\section{Proof of Thm.\,\ref{thm:parallel-alg-main}}
\label{seckl}

\restathmparallelalgmain*
The proof of Thm.\,\ref{thm:parallel-alg-main} 
assembles the techniques from two sources. 
First, it is based on the analysis of the algorithm of~\cite{kelner_spectral_2013}, that shows it produces a spectral sparsifier in high probability.  We have previously published this analysis as a technical report \cite{calandriello2016analysis}
for the case of $\gamma = 0$. In this alternative proof, we rigorously take into account the dependencies across subsequent resparsifications using martingale inequalities, fixing a flaw in the original analysis.
Second, it also copies the steps
for the analysis of \squeak \citep{calandriello_disqueak_2017}
which is an algorithm for general kernel sparsificatioN.
In particular, Alg.\,\ref{alg:kl_dist} is an instantiation of \squeak
to the special case of graph sparsification. While \squeak's analysis by
\citet{calandriello_disqueak_2017} holds in general, in \disre we can
exploit the specific structure of graph Laplacians to perform a few optimizations. For completeness and ease of verification, we restate here \squeak's original proof with the necessary modifications to the notation, constants, and relevant quantities to target the graph learning setting.

\begin{figure}[t]
\begin{tabular}{m{0.33\textwidth}|m{0.33\textwidth}|m{0.33\textwidth}}
\subfigure[arbitrary tree]{\includegraphics[height=0.45\textwidth]{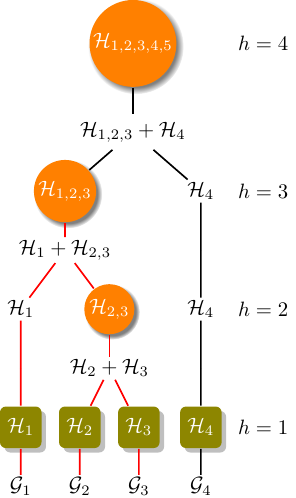} \label{fig:merge-trees-exp}}
& \subfigure[sequential tree]{\includegraphics[height=0.45\textwidth]{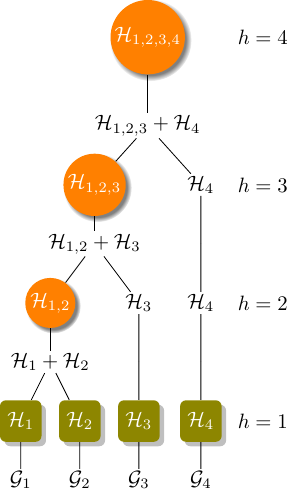}}
& \subfigure[minimum depth tree]{\includegraphics[height=0.45\textwidth]{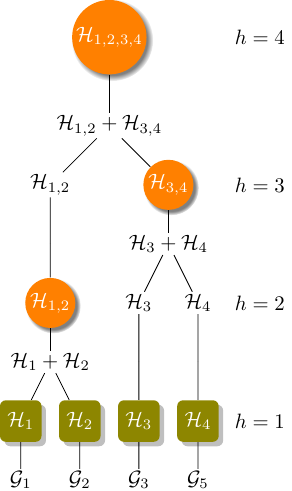}}
\end{tabular}
\caption{Merge trees for Alg.\,\ref{alg:kl_dist}.}\label{fig:merge-trees}
\end{figure}

\textbf{Merge trees} We first formalize the random process induced by Alg.\,\ref{alg:kl_dist}.

We partition $\Gg$ into $k$ disjoint sub-graphs $\Gg_i$
of size $n_i$, such that $\Gg = \cup_{e=1}^k\Gg_i$.
For each sub-graph $\Gg_i$, we construct an initial sparsifier
$\coldict_{\{1,i\}} \triangleq \{(j,\wt{p}_{0,i} = 1, q_{0,i} = \wb{q}) : j \in \Gg_i\}$
by inserting all edges from $\Gg_i$ into $\coldict_{1,i}$ with
weight $\wt{p}_{0,i} \triangleq 1$ and number of copies $q_{0,i} \triangleq \wb{q}$.
It is easy to see that $\coldict_{\{1,i\}}$ is an $(0,0)$-accurate sparsifier,
and we can split the graph into small enough sub-graphs to make sure that it can be
easily stored and manipulated in memory.
Afterwards, the initial sparsifiers $\coldict_{\{1,i\}}$ are included into the sparsifier pool $\dictpool_{1}$.

At iteration $h$, the inner loop of Alg.\,\ref{alg:kl_dist}
arbitrarily chooses  two sparsifiers from
$\dictpool_h$ and merges them into a new sparsifier.
Any arbitrary sequence of merges can be described by a full binary tree, i.e.,
a binary tree where each node is either a leaf or has exactly two children.
Fig.\,\ref{fig:merge-trees} shows several different merge trees corresponding
to different choices for the order of the merges.
Note that starting from $k$ leaves, a full binary tree will always have exactly $k-1$ internal
nodes. Therefore, regardless of the structure of the merge tree, we can always
transform it into a tree of depth $k$, with all the initial sparsifiers
$\coldict_{1,i}$ as leaves on its deepest layer.
After this transformation, we index the tree nodes using their height
(longest path from the node to a leaf, also defined as the depth of the tree minus the depth of the node),
where leaves have height 1 and the root has height $k$.
We can also see that at each layer, there is a single sparsifier merge,
and the size of $\dictpool_h$ (number of sparsifiers present at layer $h$)
is $|\dictpool_h| = k-h+1$.
Therefore, a node corresponding to a sparsifier is uniquely identified with two indices $\{h,\ell\}$, where
$h$ is the height of the layer and $\ell\leq |\dictpool_h|$ is the index of the node in the
layer. For example, in Fig.\,\ref{fig:merge-trees-exp}, the node containing
$\coldict_{1,2,3}$ is indexed as $\{3,1\}$, and the highest
node containing $\coldict_4$ is indexed as $\{3,2\}$.

We also define the graph
$\Gg_{\{h,\ell\}}$ as the union of all sub-graph $\Gg_{\ell'}$ that are
 reachable from node $\{h,\ell\}$ as leaves.
For example, in Fig.\,\ref{fig:merge-trees-exp}, sparsifier $\coldict_{1,2,3}$
in node $\{3,1\}$ is constructed starting from all edges in $\Gg_{\{3,1\}} = \Gg_1 \cup \Gg_2 \cup \Gg_3$,
where we highlight in red the descendant tree.
We now define $\laplacian^h$ as the block diagonal matrix where each diagonal block $\laplacian_{\Gg_{\{h,\ell\}}}$ is the
Laplacian constructed on
$\Gg_{\{h,\ell\}}$.
Without loss of generality, we will assume that each of the sub-graphs $\Gg_i$
is connected and spans all the $n$ nodes in the graph. This simplifies the notation for
the $\laplacian_{\Gg_{\{h,\ell\}}}$ matrices, making them all $n \times n$ matrices.
If this is not the case, the whole proof still follows through with different
number of nodes $n_{\{h,\ell\}}$ for each $\Gg_{\{h,\ell\}}$.
Again, from Fig.\,\ref{fig:merge-trees}, $\laplacian^3$ is a $2n \times 2n$ matrix with two blocks on the diagonal, a
first $n \times n$ block $\laplacian_{\Gg_{\{3,1\}}}$ constructed
on $\Gg_{\{3,1\}} = \Gg_1 \cup \Gg_2 \cup \Gg_3$, and a second $n \times n$ block
$\laplacian_{\Gg_{\{3,2\}}}$ constructed on $\Gg_{\{3,2\}} = \Gg_4$. Similarly, we  
combine Def.\,\ref{def:eps-sparsifier} and Prop.\,\ref{prop:ordering-to-norm-bound} to define
\begin{align*}
\:P_{\{h,\ell\}} &\triangleq (\laplacian_{\Gg_{\{h,\ell\}}} + \gamma\:I)^{-1/2}\laplacian_{\Gg_{\{h,\ell\}}}(\laplacian_{\Gg_{\{h,\ell\}}} + \gamma\:I)^{-1/2} \quad \text{and} \\
\wt{\:P}_{\{h,\ell\}} &\triangleq (\laplacian_{\Gg_{\{h,\ell\}}} + \gamma\:I)^{-1/2}\laplacian_{\Hg_{\{h,\ell\}}}(\laplacian_{\Gg_{\{h,\ell\}}} + \gamma\:I)^{-1/2},
\end{align*}
and have $\:P^h$ as a block diagonal projection matrix,
where each block $\:P_{\{h,\ell\}}$ is defined using $\laplacian_{\Gg_{\{h,\ell\}}}$,
and block diagonal $\wt{\:P}^h$,
where each block $\wt{\:P}_{\{h,\ell\}}$ is defined using $\laplacian_{\Hg_{\{h,\ell\}}}$
and $\coldict_{\{h,\ell\}}$.

\textbf{The statement.}
Since $\:P^h - \wt{\:P}^h$ is block diagonal, we have that a bound on its largest eigenvalue implies an equal bound on each matrix on the diagonal, i.e.,
\begin{align*}
\normsmall{\:P^h - \wt{\:P}^h} = \max_{\ell} \normsmall{\:P_{\{h,\ell\}} - \wt{\:P}_{\{h,\ell\}}} \leq \varepsilon
\implies \normsmall{\:P_{\{h,\ell\}} - \wt{\:P}_{\{h,\ell\}}} \leq \varepsilon
\end{align*}
for all blocks $\ell$ on the diagonal, and since each block corresponds to
a sparsifier $\coldict_{\{h,\ell\}}$, this means that if $\normsmall{\:P^h - \wt{\:P}^h} \leq \varepsilon$,
all sparsifiers at layer $\ell$ are $(\varepsilon,\gamma)$-sparsifiers
of their respective graphs.
Let $\deff^{\{h,\ell\}}(\gamma)$ be the effective dimension of $\laplacian_{\Gg_{\{h,\ell\}}}$.
Our goal is to show
\begin{align}\label{eq:distri-theorem-goal}
    &\probability\bigg(\exists  h\in\{1,\ldots,k\}: \normsmall{\:P^h - \wt{\:P}^h}_2 \geq \varepsilon \;\cup\; \max_{\ell=1,\dots,|\dictpool_h|}|\coldict_{\{h,\ell\}}| \geq 3\wb{q}\deff^{\{h,\ell\}}(\gamma)\bigg)\nonumber\\
     &= \probability\bigg(\exists  h\in\{1,\ldots,k\} : \underbrace{\left(\max_{\ell=1,\dots,|\dictpool_h|}\normsmall{\:P_{\{h,\ell\}} - \wt{\:P}_{\{h,\ell\}}}_2\right) \geq \varepsilon}_{A_h} \;\cup\; \underbrace{\left(\max_{\ell=1,\dots,|\dictpool_h|}|\coldict_{\{h,\ell\}}| \geq 3\wb{q}\deff^{\{h,\ell\}}(\gamma)\right)}_{B_h}\bigg) \leq \delta,
\end{align}
where event $A_h$ refers to the case when some sparsifier $\coldict_{\{h,\ell\}}$ at an intermediate layer $h$ fails to accurately approximate $\laplacian_{\{h,\ell\}}$ and event $B_h$ considers the case when the memory requirement is not met (i.e., too many edges are kept in one of the sparsifiers $\coldict_{\{h,\ell\}}$ at a certain layer $h$). After reformulating and a union bound we obtain
\begin{align}\label{eq:distri-theorem-goal}
    &\probability\bigg(\exists  h\in\{1,\ldots,k\}: \normsmall{\:P^h - \wt{\:P}^h}_2 \geq \varepsilon \;\cup\; \max_{\ell=1,\dots,|\dictpool_h|}|\coldict_{\{h,\ell\}}| \geq 3\wb{q}\deff^{\{h,\ell\}}(\gamma)\bigg)\nonumber\\
     &\leq \sum_{h = 1}^{k}\sum_{\ell=1}^{|\dictpool_h|} \probability\left(\normsmall{\:P_{\{h,\ell\}} - \wt{\:P}_{\{h,\ell\}}}_2 \geq \varepsilon \right)\nonumber\\
     &\quad\quad+\sum_{h = 1}^{k}\sum_{\ell=1}^{|\dictpool_h|}\probability\left(|\coldict_{\{h,\ell\}}| \geq 3\wb{q}\deff^{\{h,\ell\}}(\gamma) \cap \left\{\forall  h' \in \{1, \dots, h\} :  \normsmall{\:P^{h'} - \wt{\:P}^{h'}}_2 \leq \varepsilon\right\}\right) \leq \delta.
\end{align}
The accuracy of the sparsifier (first term in the previous bound) is guaranteed by the fact that given an $(\varepsilon,\gamma)$-accurate sparsifier we obtain $\gamma$-effective resistance estimates (i.e.\@ RLS estimates) which are at least a fraction of the true ones, thus forcing the algorithm to sample each column \textit{enough}. On the other hand, the space complexity bound is achieved by exploiting the fact that estimates are always upper-bounded by the true $\gamma$-effective resistance, thus ensuring that Alg.\,\ref{alg:kl_dist} does not oversample columns w.r.t.\,the sampling process following the exact $\gamma$-effective resistance.

In the reminder of the proof, we will show that both events happen with probability
smaller than $\delta/(2k^2)$. Since $|\dictpool_h| = k - h + 1$,
we have
\begin{align*}
\sum_{h = 1}^{k}\sum_{\ell=1}^{|\dictpool_h|}\frac{\delta}{2k^2} = \sum_{h = 1}^{k}(k - h + 1)\frac{\delta}{2k^2} = k(k+1)\frac{\delta}{4k^2} \leq k^2\frac{\delta}{2k^2} = \frac{\delta}{2},
\end{align*}
and the union bound over all events is smaller than $\delta$.
The main advantage of splitting the failure probability as we did in Eq.\,\ref{eq:distri-theorem-goal}
is that we can now analyze the processes that generated each $\:P_{\{h,\ell\}} - \wt{\:P}_{\{h,\ell\}}$
(and each sparsifier $\coldict_{\{h,\ell\}}$) separately.
Focusing on a single node $\{h,\ell\}$ restricts our
problem on a well defined graph $\Gg_{\{h,\ell\}}$,
where we can analyze the evolution of $\coldict_{\{h,\ell\}}$ sequentially.

\subsection{Bounding the projection error $\normsmall{\mathbf{P}_{\{h,\ell\}} - \wt{\mathbf{P}}_{\{h,\ell\}}}$}\label{ss:setting.stage}


\textbf{The sequential process.} Thanks to the union bound in Eq.\,\ref{eq:distri-theorem-goal},
instead of having to consider the whole merge tree followed by
Alg.\,\ref{alg:kl_dist}, we can focus on each individual node
$\{h,\ell\}$ and study the sequential process that generated its sparsifier $\coldict_{\{h,\ell\}}$.
We will now map more clearly the actions
taken by Alg.\,\ref{alg:kl_dist} to the process that
generated $\:P_{\{h,\ell\}} - \wt{\:P}_{\{h,\ell\}}$.
We begin by focusing on $\wt{\:P}_{\{h,\ell\}}$, which is a random matrix defined starting from
the fixed graph Laplacian $\laplacian_{\Gg_{\{h,\ell\}}}$ and the random sparsifier Laplacian $\laplacian_{\Hg_{\{h,\ell\}}}$,
where the randomness influences both which edges are included in $\coldict_{{\{h,\ell\}}}$,
and the weight with which they are added.
\todod{fixed vs random tree}

Note that since the merge tree is decided in advance, the graph $\Gg_{\{h,\ell\}}$ is
not a random object and is fixed for the whole process.
Consider now an edge
$e \in \Gg_{\{h,\ell\}}$. Again for simplicity and without loss of generality\footnote{Alternatively, we can assign an index to each of the edges in the leaf graphs, requiring at most $km \leq kn^2$ indices.},
we will assume that the starting graphs in the leaves are edge-disjoint.
Therefore, there is a single path in the tree, with length $h$, from the leaves to $\{h,\ell\}$.
This means that for all $s<h$, we can properly define a unique $\wt{p}_{s,e}$
and $q_{s,e}$ associated with that point. More in detail, if at layer $s$ point $i$ is present
in $\Gg_{\{s,\ell'\}}$, it means that either (1) Alg.\,\ref{alg:kl_dist}
used $\coldict_{\{s,\ell'\}}$ to compute $\wt{p}_{s,e}$, and $\wt{p}_{s,e}$ to compute $q_{s,e}$,
or (2) at layer $h$, Alg.\,\ref{alg:kl_dist} did not have any merge scheduled
for point $i$, and we simply propagate $\wt{p}_{s,e} = \wt{p}_{s-1,i}$ and $q_{s,e} = q_{s-1,i}$.
Consistently with the algorithm, we initialize
$\wt{p}_{0,i} = 1$ and $q_{0,i}=\wb{q}$.

Denote $\nhl_{\{h,\ell\}} = |\Gg_{\{h,\ell\}}|$ so that we can use index $i \in [\nhl_{\{h,\ell\}}]$ to index all edges in
$\Gg_{\{h,\ell\}}$. Given the $n \times \nhl_{\{h,\ell\}}$ matrix $\concmat = (\laplacian_{\Gg_{\{h,\ell\}}} + \gamma\:I)^{-1/2}\:B_{\Gg_{\{h,\ell\}}}$ with its $e$-th column $\concvec_{i} = (\laplacian_{\Gg_{\{h,\ell\}}} + \gamma\:I)^{-1/2}\:B_{\Gg_{\{h,\ell\}}} \:e_{\nhl_{\{h,\ell\}},e}$, we can rewrite the projection matrix as that
$\:P_{\{h,\ell\}} = \concmat\concmat^\transp = \sum_{e=1}^{\nhl_{\{h,\ell\}}} \concvec_e\concvec_e^\transp$.
Note that
\begin{align*}
\normsmall{\concvec_e\concvec_e^\transp}
= \concvec_e^\transp\concvec_e
= \:e_{\nhl_{\{h,\ell\}},e}^\transp\concmat^\transp\concmat\:e_{\nhl_{\{h,\ell\}},e}
= \:e_{\nhl_{\{h,\ell\}},e}^\transp\concmat\concmat^\transp\:e_{\nhl_{\{h,\ell\}},e}
= \:e_{\nhl_{\{h,\ell\}},e}^\transp\:P_{\{h,\ell\}}\:e_{\nhl_{\{h,\ell\}},e} = r_e(\gamma),
    \end{align*}
or, in other words, the norm $\normsmall{\concvec_e\concvec_e^\transp}$ is equal to the $\gamma$-effective resistance of
the $e$-th edge w.r.t.\@ to graph $\Gg_{\{h,\ell\}}$. Note that since $e$ is present only in node $l$ on layer $h$, its
$\gamma$-effective resistance is uniquely defined w.r.t.\@ $\Gg_{\{h,\ell\}}$ and can be shortened as
$r_{h,e}$.
Using $\concvec_e$, we can also introduce the random
matrix $\wt{\:P}^{\{h,\ell\}}_{s}$ as
\begin{align*}
\wt{\:P}^{\{h,\ell\}}_{s} = \sum_{e=1}^{\nhl_{\{h,\ell\}}} \frac{q_{s,e}}{\wb{q}\wt{p}_{s,e}}\concvec_e\concvec_e^\transp
= \sum_{e=1}^{\nhl_{\{h,\ell\}}} \sum_{j=1}^{\wb{q}}\frac{z_{s,e,j}}{\wb{q}\wt{p}_{s,e}}\concvec_e\concvec_e^\transp.
\end{align*}
where $z_{s,e,j}$ are $\{0,1\}$ r.v.\,such that $q_{s,e} = \sum_{j=1}^{\wb{q}}z_{s,e,j}$,
or in other words $z_{s,e,j}$ are the Bernoulli random variables
that compose the Binomial $q_{s,e}$ associated with edge $e$, with $j$ indexing each individual copy
of the edge.
Note that when $s = h$, we have that $\wt{\:P}^{\{h,\ell\}}_h = \wt{\:P}_{\{h,\ell\}}$ and we recover
the definition of the approximate projection matrix from Alg.\,\ref{alg:kl_dist}.
But, for
a general $s \neq h$ $\wt{\:P}^{\{h,\ell\}}_s$ does not have a direct interpretation in the context
of Alg.\,\ref{alg:kl_dist}. It combines the vectors $\concvec_e$,
which are defined using $\laplacian_{\Gg_{\{h,\ell\}}}$ at layer $h$, with the weights
$\wt{p}_{s,e}$ computed by Alg.\,\ref{alg:kl_dist} across multiple nodes at layer $s$, which are potentially stored in
different machines that cannot communicate. Nonetheless, $\wt{\:P}^{\{h,\ell\}}_s$ is a useful tool
to analyze Alg.\,\ref{alg:kl_dist}.

\todod{fixed vs random merge tree}
Taking into account that we are now considering a specific node $\{h,\ell\}$,
we can drop the index from the
graphs $\Gg_{\{h,\ell\}} = \Gg$, $\gamma$-effective resistances $\tau_{h,e}$,
and size $\nhl_{\{h,\ell\}} = \nhl$.
Using this shorter notation, we can reformulate our objective as bounding $\normsmall{\:P_{\{h,\ell\}} - \wt{\:P}_{\{h,\ell\}}}_2 =\normsmall{\:P_{\{h,\ell\}} - \wt{\:P}^{\{h,\ell\}}_{h}}_2$, and
reformulate the process as a sequence of matrices $\{\:Y_s\}_{s=1}^h$ defined as
\begin{align*}
\:Y_{s} = \:P_{\{h,\ell\}} - \wt{\:P}^{\{h,\ell\}}_s = \frac{1}{\wb{q}}\sum_{e=1}^{\nhl} \sum_{j=1}^{\wb{q}}\left(1 - \frac{z_{s,e,j}}{\wt{p}_{s,e}}\right)\concvec_e\concvec_e^\transp,
\end{align*}
where $\:Y_{h} = \:P_{\{h,\ell\}} - \wt{\:P}^{\{h,\ell\}}_{h} = \:P_{\{h,\ell\}} - \wt{\:P}_{\{h,\ell\}}$,
and $\:Y_{1} = \:P_{\{h,\ell\}} - \wt{\:P}^{\{h,\ell\}}_{0} = \:0$ since $\wt{p}_{0,i} = 1$ and $q_{0,i} = \wb{q}$.

\subsection{Bounding $\mathbf{Y}_h$}\label{ssec:bounding_y}
We transformed the problem of bounding $\normsmall{\:P_{\{h,\ell\}} - \wt{\:P}_{\{h,\ell\}}}$
into the problem of bounding $\:Y_h$, which we modeled as a random matrix process,
connected to Alg.\,\ref{alg:kl_dist} by the fact that both algorithm
and random process $\:Y_h$ make use of the same weight $\wt{p}_{s,e}$
and multiplicities $q_{s,e}$.

\textbf{The frozen process.}
Inspired by \citet{cohen_online_2016}, we will now replace the process $\:Y_s$ with an alternative process $\wb{\:Y}_{s}$
defined as
\begin{align*}
    \wb{\:Y}_s = 
    \:Y_{s-1} \indfunc\left\{\normsmall{\wb{\:Y}_{s-1}} \leq \varepsilon\right\}
    +\wb{\:Y}_{s-1}\indfunc\left\{\normsmall{\wb{\:Y}_{s-1}} \geq \varepsilon\right\}.
\end{align*}
This process starts from $\wb{\:Y}_0 = \:Y_0 = \:0$, and
is identical to $\:Y_s$ until a step $\wb{s}$ where for the first
time $\normsmall{\:Y_{\wb{s}}} \leq \varepsilon$ and
$\normsmall{\:Y_{\wb{s}+1}} \geq \varepsilon$. After this failure happen
the process $\wb{\:Y}_s$ is ``frozen'' at $\wb{s}$ and $\wb{\:Y}_{s} = \:Y_{\wb{s}+1}$ for all $\wb{s} +1\leq s \leq h$.
Consequently, if any of the intermediate elements
of the sequence violates the condition $\normsmall{\:Y_s}~\leq~\varepsilon$,
the last element will violate it too. For the rest, $\wb{\:Y}_s$ behaves
exactly like $\:Y_s$.
Therefore,
\begin{align*}
&\probability\left( \normsmall{\:Y_h} \geq \varepsilon\right)
\leq \probability\Big( \normsmall{\wb{\:Y}_h} \geq \varepsilon\Big),
\end{align*}
and if we can bound $\probability\Big( \normsmall{\wb{\:Y}_h} \geq \varepsilon\Big)$
we will have a bound for the failure probability of Alg.\,\ref{alg:kl_dist},
even though after ``freezing'' the process $\wb{\:Y}_h$ does not make the same choices as
the algorithm.

We will see now how to construct the process $\wb{\:Y}_s$ starting from
$z_{s,e,j}$ and $\wt{p}_{s,e,j}$.
We recursively define the indicator ($\{0,1\}$) random variable~$\wb{z}_{s,e,j}$ as
\begin{align*}
    \wb{z}_{s,e,j} = \indfunc\left\{ u_{s,e,j} \leq \frac{\wb{p}_{s,e,j}}{\wb{p}_{s-1,e,j}}\right\} \wb{z}_{s-1,e,j},
\end{align*}
where $u_{s,e,j} \sim \mathcal{U}(0,1)$ is a $[0,1]$ uniform random variable and $\wb{p}_{s,e,j}$ is defined as
\begin{align*}
\wb{p}_{s,e,j} =\wt{p}_{s,e}
    \indfunc\left\{\normsmall{\wb{\:Y}_{s-1}} \leq \varepsilon \cap z_{s-1,e,j} = 1\right\}
    +\wb{p}_{s-1,e,j}\indfunc\left\{\normsmall{\wb{\:Y}_{s-1}} \geq \varepsilon \cup z_{s-1,e,j} = 0\right\}\!.
\end{align*}

This definition of the process satisfies the freezing condition, since if
$\normsmall{\:Y_{\wb{s}+1}} \geq \varepsilon$ (we have a failure at step
$\wb{s}$), for all $s' \geq \wb{s}+1$ we have
$\wb{z}_{s',i,j} = \wb{z}_{\wb{s}+1,i,j}$ with probability 1
($\wb{p}_{\wb{s}+1,i,j}/\wb{p}_{\wb{s},i,j} = \wb{p}_{\wb{s},i,j}/\wb{p}_{\wb{s},i,j} = 1$),
and the weights $1/(\wb{q}\wb{p}_{\wb{s}+1,i,j}) = 1/(\wb{q}\wb{p}_{\wb{s},i,j})$
never change.

Introducing a per-copy weight $\wb{p}_{s,e,j}$ and enforcing that $\wb{p}_{s+1,i,j} = \wb{p}_{s,e,j}$
when $z_{s,e,j} = 0$ avoids subtle inconsistencies in the formulation.
In particular, not doing so would semantically correspond to reweighting
dropped copies. Although this does not directly affect $\:Y_s$ (since the ratio $z_{s,e,j}/\wt{p}_{s,e}$
is zero for dropped copies), and therefore
the relationship $\probability\left( \normsmall{\:Y_h} \geq \varepsilon\right)
\leq \probability\Big( \normsmall{\wb{\:Y}_h} \geq \varepsilon\Big)$
still holds, we will see later how maintaining consistency helps us bound the second moment of our process.

We can now arrange the indices $s$, $e$, and $j$ into a linear index $\lidx=s$ in the
range $[1,\dots,\nhl^2\wb{q}]$, obtained as $\lidx~=~\{s,e,j\}~=~(s-1)\nhl\wb{q}~+~(e-1)\wb{q}~+~j$.
We also define the difference matrix as 
\begin{align*}
\wb{\:X}_{\{s,e,j\}} = \frac{1}{\wb{q}}\left(\frac{z_{s-1,e,j}}{\wb{p}_{s-1,e,j}} - \frac{z_{s,e,j}}{\wb{p}_{s,e,j}}\right)\concvec_e\concvec_e^\transp,
    \end{align*}
which allows  writing the cumulative matrix as
$\wb{\:Y}_{\{s,e,j\}}~=~\sum_{r=1}^{\{s,e,j\}}~\wb{\:X}_{\{s,e,j\}}$
where the checkedges $\{s,\nhl,\wb{q}\}$ correspond to $\wb{\:Y}_s$,
\begin{align*}
\wb{\:Y}_{\{s,\nhl,\wb{q}\}} = \wb{\:Y}_s = 
    \frac{1}{\wb{q}}\sum_{e=1}^{\nhl}\sum_{j=1}^{\wb{q}}\left(1-\frac{z_{s,e,j}}{\wb{p}_{s,e,j}}\right)\concvec_e\concvec_e^\transp.
\end{align*}
Let $\F_s$ be the filtration containing all the
realizations of the uniform random variables $u_{s,e,j}$ up to the step~$s$, that is $\F_s = \{ u_{s',e',j'}, \forall\{s',e',j'\}
\leq s\}$. Again, we notice that $\F_s$ defines the state of
the algorithm after completing iteration $s$ because, unless a ``freezing'' happened,
Alg.\,\ref{alg:kl_dist} and $\wb{\:Y}_s$ flip coins with the same probability,
and generate the same sparsifiers.
Since $\atau_{s,e}$ and
$\wb{p}_{s,e,j}$ are computed at the beginning of iteration $s$ using the
sparsifier $\coldict_{\{s,\ell'\}}$ (for some $\ell'$ unique at layer $s$), they are fully determined by $\F_{s-1}$.
Furthermore, since $\F_{s-1}$ also defines the values of all indicator
variables~$\wb{z}_{s',e,j}$ up to $\wb{z}_{s-1,e,j}$ for any $i$ and $j$, we have that
all the Bernoulli variables $\wb{z}_{s,e,j}$ at iteration $s$ are conditionally
independent given $\F_{s-1}$. In other words, we have that for any
$e'$, and $j'$ such that $\{s,1,1\} \leq \{s,e',j'\} <s$ the following
random variables are equal in distribution,
\begin{align}\label{eq:distro.z}
\wb{z}_{s,e,j} \big| \F_{\{s,e',j'\}} = \wb{z}_{s,e,j} \big| \F_{\{s-1,\nhl,\wb{q}\}} \sim \mathcal{B}\Big( \frac{\wb{p}_{s,e,j}}{\wb{p}_{s-1,e,j}} \Big)\CommaBin
\end{align}
and for any $e'$, and $j'$ such that $\{s,1,1\} \leq \{s,e',j'\} \leq \{s,\nhl,\wb{q}\} $ and $s \neq \{s,e',j'\}$ we have the independence
\begin{align}\label{eq:distro.z.indep}
\wb{z}_{s,e,j} \big| \F_{\{s-1,\nhl,\wb{q}\}} \perp \wb{z}_{s,e',j'} \big| \F_{\{s-1,\nhl,\wb{q}\}}.
\end{align}
While knowing that $\normsmall{\:Y_s} \leq \varepsilon$ is not sufficient to provide guarantees for the approximate probabilities
$\wt{p}_{s,e}$, we can show that it is enough to prove that the frozen probabilities $\wb{p}_{s,e,j}$ are
never too small.
\begin{lemma}\label{lem:wbp-always-good}
    Let $\alpha \eqdef (1+3\varepsilon)/(1-\epsilon)$ and $\wb{p}_{s,e,j}$ be the sequence of probabilities generated
    by the freezing process. Then for any $s,e,$ and $j$, we
    have $\wb{p}_{s,e,j} \geq p_{h,e}/\alpha = r_{h,e}/\alpha$.
\end{lemma}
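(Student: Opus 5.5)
We argue by induction on $s$, following the recursive definition of $\wb{p}_{s,e,j}$. For $s=0$, $\wb{p}_{0,e,j}=1\geq r_{h,e}\geq r_{h,e}/\alpha$, since every $\gamma$-effective resistance is at most $1$. For the inductive step there are two cases.

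\textbf{Frozen case.} Suppose $\normsmall{\wb{\:Y}_{s-1}}\geq\varepsilon$ or $z_{s-1,e,j}=0$. Then $\wb{p}_{s,e,j}=\wb{p}_{s-1,e,j}$, and the claim follows directly from the inductive hypothesis.

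\textbf{Active case.} Otherwise $\normsmall{\wb{\:Y}_{s-1}}\leq\varepsilon$, so $\wb{p}_{s,e,j}=\wt{p}_{s,e}=\min\{\wt r_{s,e}(\gamma),\wt p_{s-1,e}\}$. In this case the frozen and actual processes coincide up to step $s-1$, so $\wt p_{s-1,e}$ equals the last unfrozen $\wb p$ value and is bounded below by induction. It therefore suffices to show
\[
\wt r_{s,e}(\gamma)\geq r_{h,e}/\alpha.
\]

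To prove this, we first transfer $\normsmall{\wb{\:Y}_{s-1}}\leq\varepsilon$, which is stated in terms of the vectors $\concvec_e$ normalized by $\laplacian_{\Gg_{\{h,\ell\}}}+\gamma\:I$, into a spectral statement about the sparsifier used at step $s$. That sparsifier is $\wb\Hg$, the merge at layer $s$ of the nodes containing $e$; it approximates a subgraph $\Gg'\subseteq\Gg_{\{h,\ell\}}$. Restricting the sum defining $\wb{\:Y}_{s-1}$ to the edges of $\Gg'$ is a principal-type compression, and we expect it to give
\[
(1-\varepsilon)\laplacian_{\Gg'}-\varepsilon\gamma\:I\preceq\laplacian_{\wb\Hg}\preceq(1+\varepsilon)\laplacian_{\Gg'}+\varepsilon\gamma\:I,
\]
possibly after an extra additive $\varepsilon(\laplacian_{\Gg_{\{h,\ell\}}}-\laplacian_{\Gg'})$ term, which only helps on the upper side. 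We then bound
\[
\laplacian_{\wb\Hg}+(1+\varepsilon)\gamma\:I\preceq(1+\varepsilon)\laplacian_{\Gg'}+(1+2\varepsilon)\gamma\:I\preceq(1+2\varepsilon)(\laplacian_{\Gg'}+\gamma\:I).
\]
Inverting this,
\[
\wt r_{s,e}\geq\frac{1-\varepsilon}{1+2\varepsilon}\,\:b_e^\transp(\laplacian_{\Gg'}+\gamma\:I)^{-1}\:b_e\geq\frac{1-\varepsilon}{1+3\varepsilon}\,r_{h,e},
\]
where the last step uses monotonicity of ridge effective resistance under edge addition: $\Gg'\subseteq\Gg_{\{h,\ell\}}$ implies $\laplacian_{\Gg'}\preceq\laplacian_{\Gg_{\{h,\ell\}}}$, so the resistance in $\Gg'$ is at least $r_{h,e}$. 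The slack between $1+2\varepsilon$ and $1+3\varepsilon$ absorbs any leftover cross terms.

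The main obstacle is the transfer step. $\wb{\:Y}_{s-1}$ mixes weights computed in separate machines, and it is normalized by the full $\laplacian_{\Gg_{\{h,\ell\}}}$ rather than by $\laplacian_{\Gg'}$. We will try to handle this by writing $\wb{\:Y}_{s-1}$ as a sum over tree nodes of layer $s-1$. For any unit vector, taking the quadratic form with $(\laplacian_{\Gg_{\{h,\ell\}}}+\gamma\:I)^{1/2}x$ gives
\[
|x^\transp(\laplacian_{\Hg}-\laplacian_{\Gg'})x|\leq\varepsilon\, x^\transp(\laplacian_{\Gg_{\{h,\ell\}}}+\gamma\:I)x
\]
summed over the relevant blocks. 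If this only yields error relative to $\laplacian_{\Gg_{\{h,\ell\}}}$, we still obtain an upper bound on $\laplacian_{\wb\Hg}$ of the form $\laplacian_{\Gg'}+\varepsilon(\laplacian_{\Gg_{\{h,\ell\}}}+\gamma\:I)\preceq(1+\varepsilon)(\laplacian_{\Gg_{\{h,\ell\}}}+\gamma\:I)$, since $\laplacian_{\Gg'}\preceq\laplacian_{\Gg_{\{h,\ell\}}}$. Comparing directly with $r_{h,e}$ then gives the same $\alpha$ bound. Only the upper bound on $\laplacian_{\wb\Hg}$ is needed, which is what makes this approach robust.
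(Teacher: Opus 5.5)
Your main route has a genuine gap at the transfer step, and the same gap affects the intermediate inequality in your fallback.

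The unfrozen condition $\normsmall{\wb{\:Y}_{s-1}}\le\varepsilon$ controls only the \emph{total} layer-$(s-1)$ reweighted Laplacian below $\{h,\ell\}$. Write $\Gg=\Gg_{\{h,\ell\}}$ and $\wb\Hg^c$ for the other sparsifiers of that layer. The condition then says $-\varepsilon(\laplacian_{\Gg}+\gamma\:I)\preceq\laplacian_{\wb\Hg}+\laplacian_{\wb\Hg^c}-\laplacian_{\Gg}\preceq\varepsilon(\laplacian_{\Gg}+\gamma\:I)$. Restricting this sum of rank-one terms to the edges of $\Gg'$ is not a compression, and an overweight inside $\wb\Hg$ can be offset by an underweight inside $\wb\Hg^c$.

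This already happens on a unit-weight triangle. Take $\Gg'=\{uv\}$, $\gamma=0$, and reweighted edges $1+2\varepsilon$ on $uv$ and $1-\varepsilon$ on $vw,wu$. The total error lies within $\pm\varepsilon\laplacian_{\Gg}$. Yet $\laplacian_{\wb\Hg}=(1+2\varepsilon)\:b_{uv}\:b_{uv}^\transp$ violates both $\laplacian_{\wb\Hg}\preceq(1+\varepsilon)\laplacian_{\Gg'}$ and your fallback form $\laplacian_{\wb\Hg}\preceq\laplacian_{\Gg'}+\varepsilon\laplacian_{\Gg}$ (test with $\:b_{uv}$). So the $\Gg'$-relative bound, and the monotonicity step built on it, are not available.

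What does hold, and is all you need, is the paper's argument: compare directly with $\Gg_{\{h,\ell\}}$, never with $\Gg'$.
\begin{itemize}
\item Drop the PSD term $\laplacian_{\wb\Hg^c}$ to get $\laplacian_{\wb\Hg}\preceq(1+\varepsilon)\laplacian_{\Gg}+\varepsilon\gamma\:I$.
\item Hence $\laplacian_{\wb\Hg}+(1+\varepsilon)\gamma\:I\preceq(1+2\varepsilon)(\laplacian_{\Gg}+\gamma\:I)$.
\item Therefore $\wt r_{s,e}\ge\frac{1-\varepsilon}{1+2\varepsilon}\,r_{h,e}\ge r_{h,e}/\alpha$.
\end{itemize}

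Do not relax this to $(1+\varepsilon)(\laplacian_{\Gg}+\gamma\:I)$ as in your final paragraph. After adding $(1+\varepsilon)\gamma\:I$, that only gives the factor $(1-\varepsilon)/(2+2\varepsilon)$, which is below $1/\alpha$ for $\varepsilon<1$.

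With this one-line fix the rest of your proposal is sound. Your induction on $s$ uses the unfrozen condition at layer $s-1$, whose sparsifiers are the ones producing $\wt r_{s,e}$, plus the inductive hypothesis for $\wt p_{s-1,e}$. This handles the minimum over earlier estimates more cleanly than the paper's telescoping chain.
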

\begin{proof}
Let $\wb{s}$ be the step where the process freezes ($\wb{s} = h$ if it does not
freeze), or, in other words, $\normsmall{\:Y_{\wb{s}}} < \varepsilon$
and $\normsmall{\:Y_{\wb{s}+1}} \geq \varepsilon$.
From the definition of $\wb{p}_{s,e,j}$, we have
that 
\begin{align*}
\wb{p}_{s,e,j} \geq \wb{p}_{\wb{s},i} = \wt{p}_{\wb{s},e}
&= \max\left\{\min\left\{\atau_{\wb{s},e},\; \wt{p}_{\wb{s}-1,e} \right\},\; \wt{p}_{\wb{s}-1,e}/2\right\}\\
&\geq \min\left\{\atau_{\wb{s},e},\; \wt{p}_{\wb{s}-1,e} \right\}
=\min\left\{\atau_{\wb{s},e},\; \wt{p}_{\wb{s}-2,e}\right\}
=\min\left\{\atau_{\wb{s},e},\; \wt{p}_{\wb{s}-3,e}\right\} \ldots
=\min\left\{\atau_{\wb{s},e},\; \wt{p}_{0,e}\right\}
=\atau_{\wb{s},e},
\end{align*}
and therefore $\wb{p}_{s,e,j} \geq \atau_{\wb{s},e}$.
Now let $\{\wb{s},\ell'\}$ be the node where $\wt{r}_{\wb{s},e}$ was computed.
From Alg.\,\ref{alg:kl_dist} we know it is computed using the sparsifier $\wb{\Hg}$
generated by the union of the two children of node $\{\wb{s},\ell'\}$,
which is an $(\varepsilon,2\gamma)$-sparsifier of $\Gg_{\{\wb{s},\ell'\}}$.

From its definition,
we know that $\wt{r}_{\wb{s},e}$ is computed by Alg.\,\ref{alg:kl_dist} as
\begin{align*}
\wt{r}_{s,e} = (1-\varepsilon)\:\phi_e^\transp\left(\laplacian_{\wb{\Hg}} + (1+\varepsilon)\gamma \:I\right)^{-1}\:\phi_{i},
\end{align*}
using only the edges in $\wb{\Hg}$ that are available at node $\{\wb{s},\ell'\}$.
Let once again $\Gg = \Gg_{\{h,\ell\}}$ by dropping the subscript,
and define $\wb{\Hg}^c$ as the complement set of all edges in other sparsifiers at
level $\wb{s}$ not in $\wb{\Hg}$.
From the definition of $\wt{\:P}^{\{h,\ell\}}_{\wb{s}}$ and Prop.\,\ref{prop:ordering-to-norm-bound}
we know that 
\begin{align*}
\normsmall{\:Y_{\wb{s}}}
=\norm{\:P_{\{h,\ell\}} - \wt{\:P}^{\{h,\ell\}}_{\wb{s}}}{2}
&= \norm{(\laplacian_{\Gg} + 2\gamma\:I)^{-1/2}(\laplacian_{\Gg} - (\laplacian_{\wb{\Hg}} + \laplacian_{\wb{\Hg}^c}))(\laplacian_{\Gg} + 2\gamma\:I)^{-1/2}}{2} \leq \varepsilon
\end{align*}
and we know that this implies
\begin{align*}
    \laplacian_{\wb{\Hg}} \preceq \laplacian_{\Gg} + \varepsilon(\laplacian_{\Gg} + 2\gamma\:I) - \laplacian_{\wb{\Hg}^c} \preceq \laplacian_{\Gg} + \varepsilon(\laplacian_{\Gg} + 2\gamma\:I). 
\end{align*}
Plugging it in the initial definition,
\begin{align*}
\atau_{s,e} &= (1-\varepsilon)\:b_e^\transp\left(\laplacian_{\wb{\Hg}} + (1+\varepsilon)\gamma \:I\right)^{-1}\:b_{i}\\
&\geq (1 - \vareps)\:b_e^\transp(\laplacian_{\Gg} + \varepsilon(\laplacian_{\Gg} + 2\gamma\:I) + (1+\varepsilon)\gamma \:I)^{-1}\:b_e\\
&= (1 - \vareps)\frac{1}{1+3\varepsilon}\:b_e^\transp(\featkermatrix_{\Gg}\featkermatrix_{\Gg}^\transp + \gamma \:I)^{-1}\:b_e
\geq \frac{1-\varepsilon}{1+3\varepsilon}r_{h,e}
\geq \frac{r_{h,e}}{\alpha}\cdot
\end{align*}
\end{proof}

We now proceed by studying the process
$\{\wb{\:Y}_s\}_{s=1}^h$ and showing that it is a bounded martingale.
In order to show that $\wb{\:Y}_s$ is a martingale, it is sufficient to verify the following (equivalent) conditions,
\begin{align*}
    \expectedvalue\left[\wb{\:Y}_s \condbar \F_{s-1}\right]
    =\wb{\:Y}_{s-1} \enspace \Leftrightarrow \enspace
    \expectedvalue\left[\wb{\:X}_{\{s,e,j\}} \condbar \F_{s-1}\right] = \:0.
\end{align*}
We begin by inspecting the conditional random variable $\wb{\:X}_{\{s,e,j\}} | \F_{s-1}$. Given the definition of $\wb{\:X}_{\{s,e,j\}}$, the conditioning on $\F_{s-1}$ determines the values of $\wb{z}_{s-1,e,j}$ and the approximate probabilities $\wb{p}_{s-1,e,j}$ and $\wb{p}_{s,e,j}$. In fact, remember that these quantities are fully determined by the realizations in $\F_{s-1}$ which are contained in $\F_{s-1}$. As a result, the only stochastic quantity in $\wb{\:X}_{\{s,e,j\}}$ is the variable $\wb{z}_{s,e,j}$. 
Specifically, if $\normsmall{\wb{\:Y}_{s-1}} \geq \varepsilon$,
then we have $\wb{p}_{s,e,j} = \wb{p}_{s-1,e,j}$ and $\wb{z}_{s,e,j} = \wb{z}_{s-1,e,j}$
(the process is stopped), and the martingale requirement
$ \expectedvalue\left[\wb{\:X}_{\{s,e,j\}} \condbar \F_{s-1}\right] = \:0$
is trivially satisfied.
On the other hand, if $\normsmall{\wb{\:Y}_{s-1}} \leq \varepsilon$ we have
\begin{align*}
\expectedvalue_{u_{s,e,j}}\left[\frac{1}{\wb{q}} \left(\frac{\wb{z}_{s-1,e,j}}{\wb{p}_{s-1,e,j}} - \frac{\wb{z}_{s,e,j}}{\wb{p}_{s,e,j}}\right)\concvec_e\concvec_e^\transp \condbar \F_{s-1}\right]
 &= \frac{1}{\wb{q}} \left( \frac{\wb{z}_{s-1,e,j}}{\wb{p}_{s-1,e,j}} -\frac{\wb{z}_{s-1,e,j}}{\wb{p}_{s,e,j}}\expectedvalue\left[\indfunc\left\{ u_{s,e,j} \leq \frac{\wb{p}_{s,e,j}}{\wb{p}_{s-1,e,j}}\right\}\condbar\F_{s-1}\right]\right)\concvec_e\concvec_e^\transp\\
&= \frac{1}{\wb{q}}
    \left(\frac{\wb{z}_{s-1,e,j}}{\wb{p}_{s-1,e,j}} - \frac{\wb{z}_{s-1,e,j}}{\wb{p}_{s,e,j}}\frac{\wb{p}_{s,e,j}}{\wb{p}_{s-1,e,j}}
\right)\concvec_e\concvec_e^\transp
= \:0,
\end{align*}
where we use the recursive definition of $\wb{z}_{s,e,j}$ and the fact that $u_{s,e,j}$ is a uniform random variable in $[0,1]$. This proves that $\wb{\:Y}_s$ is indeed a martingale.
We now compute an upper bound $R$ on the norm of the values of the difference process as
\begin{align*}
&\normsmall{\wb{\:X}_{\{s,e,j\}}}
= \frac{1}{\wb{q}} \left|\left(\frac{\wb{z}_{s-1,e,j}}{\wb{p}_{s-1,e,j}} - \frac{\wb{z}_{s,e,j}}{\wb{p}_{s,e,j}}\right)\right|\normsmall{\concvec_e\concvec_e^\transp}
\leq \frac{1}{\wb{q}} \frac{1}{\wb{p}_{s,e,j}} \normsmall{\concvec_e\concvec_e^\transp}
= \frac{1}{\wb{q}} \frac{1}{\wb{p}_{s,e,j}} \tau_{h,i}
\leq \frac{1}{\wb{q}} \frac{\alpha}{\tau_{h,i}}\tau_{h,i}
=\frac{\alpha}{\wb{q}} \triangleq R,
\end{align*}
where we used Lem.\,\ref{lem:wbp-always-good} to bound
$\wb{p}_{s,e,j} \leq r_{h,e}/\alpha$.
If instead, $\normsmall{\wb{\:Y}_{s-1}} \geq \varepsilon$,
 the process is stopped and 
$\normsmall{\wb{\:X}_s} = \normsmall{\:0} = 0 \leq R$.

We are now ready to use a Freedman matrix inequality from \citet{tropp2011freedman} to bound the norm of $\wb{\:Y}$.

\begin{proposition}[\citealp{tropp2011freedman},~Theorem~1.2]\label{prop:matrix-freedman}
Consider a matrix martingale $\{ \:Y_k : k = 0, 1, 2, \dots \}$ whose values are self-adjoint matrices with dimension $d$, and let $\{ \:X_k : k = 1, 2, 3, \dots \}$ be the difference sequence.  Assume that the difference sequence is uniformly bounded in the sense that
\begin{align*}
 \normsmall{\:X_k}_2  \leq R
\quad\text{almost surely}
\quad\text{for $k = 1, 2, 3, \dots$}.
\end{align*}
Define the predictable quadratic variation process of the martingale as
\begin{align*}
\:{W}_k \eqdef \sum_{j=1}^k \expectedvalue \left[ \:X_j^2 \condbar \{\:X_{s}\}_{s=0}^{j-1} \right]\!,
\quad\text{for $k = 1, 2, 3, \dots$}.
\end{align*}
Then, for all $\varepsilon \geq 0$ and $\sigma^2 > 0$,
\begin{align*}
\probability\left( \exists k \geq 0 : \normsmall{\:Y_k}_2 \geq \varepsilon \ \cap\ 
        \normsmall{ \:W_{k} } \leq \sigma^2 \right)
	\leq 2d \cdot \exp \left\{ - \frac{ \varepsilon^2/2 }{\sigma^2 + R\varepsilon/3} \right\}\!\cdot
\end{align*}
\end{proposition}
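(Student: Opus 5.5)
The statement to prove is the matrix Freedman inequality. I would follow Tropp's matrix-Laplace-transform argument. I treat the upper tail $\lambda_{\max}(\:Y_k)\geq\varepsilon$ first. Applying the same argument to the martingale $-\:Y_k$, which has the same bound $R$ and the same quadratic variation $\:W_k$, handles $\lambda_{\min}(\:Y_k)\leq-\varepsilon$. A union bound over the two tails then gives the factor $2d$ in place of $d$.

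\textbf{Step 1 (one-step bound on the conditional mgf).} Fix $\theta>0$ and set $g(\theta)\triangleq(e^{\theta R}-\theta R-1)/R^2$. The scalar inequality $e^{\theta x}\leq 1+\theta x+g(\theta)x^2$ holds for all $x\leq R$, because $(e^{\theta x}-\theta x-1)/x^2$ is increasing in $x$.
\begin{itemize}
\item Since $\normsmall{\:X_j}\leq R$, the transfer rule for functional calculus gives $e^{\theta\:X_j}\preceq \:I+\theta\:X_j+g(\theta)\:X_j^2$.
\item Taking conditional expectations and using the martingale property $\expectedvalue[\:X_j\mid\F_{j-1}]=\:0$ gives $\expectedvalue[e^{\theta\:X_j}\mid\F_{j-1}]\preceq \:I+g(\theta)\expectedvalue[\:X_j^2\mid\F_{j-1}]$.
\item Combining this with $\:I+\:A\preceq e^{\:A}$ and the operator monotonicity of $\log$ yields
\[
\log\expectedvalue[e^{\theta\:X_j}\mid\F_{j-1}]\preceq g(\theta)\expectedvalue[\:X_j^2\mid\F_{j-1}].
\]
\end{itemize}

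\textbf{Step 2 (supermartingale).} Define $S_k\triangleq\Tr\exp\big(\theta\:Y_k-g(\theta)\:W_k\big)$, with $S_0=\Tr\:I=d$. Note that $\:W_k$ is $\F_{k-1}$-measurable, so $\:H\triangleq\theta\:Y_{k-1}-g(\theta)\:W_k$ is fixed given $\F_{k-1}$.
\begin{itemize}
\item Lieb's theorem says $\:A\mapsto\Tr\exp(\:H+\log\:A)$ is concave on positive definite matrices. By Jensen's inequality,
\[
\expectedvalue[S_k\mid\F_{k-1}]\leq\Tr\exp\big(\:H+\log\expectedvalue[e^{\theta\:X_k}\mid\F_{k-1}]\big).
\]
\item By Step 1 and the monotonicity of $\Tr\exp$, the right-hand side is at most $\Tr\exp(\theta\:Y_{k-1}-g(\theta)\:W_{k-1})=S_{k-1}$.
\end{itemize}
So $S_k$ is a positive supermartingale. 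I expect this to be the main obstacle: everything hinges on invoking Lieb's concavity correctly, including the measurability of $\:W_k$ with respect to $\F_{k-1}$. The remaining steps are scalar bookkeeping.

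\textbf{Step 3 (stopping and optimization).}
\begin{itemize}
\item Let $\kappa$ be the first $k$ with $\lambda_{\max}(\:Y_k)\geq\varepsilon$ and $\normsmall{\:W_k}\leq\sigma^2$.
\item On the event $\{\kappa<\infty\}$ we have $S_\kappa\geq\exp\big(\lambda_{\max}(\theta\:Y_\kappa-g(\theta)\:W_\kappa)\big)\geq e^{\theta\varepsilon-g(\theta)\sigma^2}$.
\item Optional stopping for the positive supermartingale, via Fatou on the truncated times $\kappa\wedge n$, gives $\expectedvalue[S_\kappa\indfunc\{\kappa<\infty\}]\leq d$.
\item Markov's inequality then yields $\probability(\kappa<\infty)\leq d\,e^{-\theta\varepsilon+g(\theta)\sigma^2}$.
\item Choosing $\theta=R^{-1}\log(1+R\varepsilon/\sigma^2)$ gives the Bennett form $d\exp\{-(\sigma^2/R^2)h(R\varepsilon/\sigma^2)\}$ with $h(u)=(1+u)\log(1+u)-u$.
\item The elementary bound $h(u)\geq \frac{u^2/2}{1+u/3}$ converts this into $d\exp\{-\frac{\varepsilon^2/2}{\sigma^2+R\varepsilon/3}\}$.
\end{itemize}
Doubling for the two tails gives the stated $2d$ bound.
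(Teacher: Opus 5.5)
Your argument is correct and is essentially Tropp's own proof of the cited Theorem~1.2 (the paper gives no proof of its own, only the citation): Lieb concavity makes $\Tr\exp(\theta\:Y_k-g(\theta)\:W_k)$ a supermartingale, then a stopping-time and Markov step, then the Bennett-to-Bernstein simplification. Your union bound over $\pm\:Y_k$ is exactly what turns Tropp's one-sided version (factor $d$, hypothesis $\lambda_{\max}(\:X_k)\leq R$) into the two-sided $2d$ form stated here. The one hypothesis you use silently is $\:Y_0=\:0$, which you need for $S_0=d$: it is Tropp's standing convention and holds in the paper's application ($\wb{\:Y}_0=\:0$), but the statement as written omits it and fails without it, so you should state it explicitly.
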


In order to use the previous inequality, we develop the probability of error for any fixed $h$ as
\begin{align*}
\probability\left( \normsmall{\:Y_h} \geq \varepsilon\right)
\leq \probability\left( \normsmall{\wb{\:Y}_h} \geq \varepsilon\right)
&= \probability\left( \normsmall{\wb{\:Y}_h} \geq \varepsilon \cap \normsmall{\:W_h} \leq \sigma^2\right)
+ \probability\left( \normsmall{\wb{\:Y}_h} \geq \varepsilon \cap \normsmall{\:W_h} \geq \sigma^2\right)\\
&\leq \underbrace{\probability\left( \normsmall{\wb{\:Y}_h} \geq \varepsilon \cap \normsmall{\:W_h} \leq \sigma^2\right)}_{\mbox{(a)}}
    + \underbrace{\probability\left( \normsmall{\:W_h} \geq \sigma^2\right)}_{\mbox{(b)}}\!.
\end{align*}
Using the bound on  $\normsmall{\wb{\:X}_{\{s,e,j\}}}_2$, we can directly apply Prop.\,\ref{prop:matrix-freedman} to bound $\mbox{(a)}$ for any fixed $\sigma^2$.
To bound the part $\mbox{(b)}$, we use the following lemma, proved later in Sec.\,\ref{ssec:bounding_w}.

\begin{lemma}[Low probability of the large norm of the predictable quadratic variation process]\label{lem:prob-dominance} We have that
    \begin{align*}
        \probability\left( \normsmall{\:W_h} \geq \frac{6\alpha}{\wb{q}}\right)
        \leq n \cdot \exp \left\{ - \frac{2\wb{q}}{\alpha} \right\}\!\cdot
    \end{align*}
\end{lemma}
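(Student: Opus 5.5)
Following the approach of \citet{cohen_online_2016} and \citet{calandriello_disqueak_2017}, I will write $\:W_h$ as a sum over layers of conditional second moments, dominate it by a sum of independent-looking terms, and then apply a matrix Chernoff bound.

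\textbf{Step 1: conditional second moment of one increment.} Fix $\{s,e,j\}$ with the process not frozen and $\wb{z}_{s-1,e,j}=1$. Then $\wb{z}_{s,e,j}\sim\mathcal{B}(\wb{p}_{s,e,j}/\wb{p}_{s-1,e,j})$, and using $(\concvec_e\concvec_e^\transp)^2 = r_{h,e}\concvec_e\concvec_e^\transp$,
\begin{align*}
\expectedvalue\left[\wb{\:X}_{\{s,e,j\}}^2 \condbar \F_{s-1}\right] = \frac{1}{\wb{q}^2}\left(\frac{1}{\wb{p}_{s,e,j}} - \frac{1}{\wb{p}_{s-1,e,j}}\right) r_{h,e}\,\concvec_e\concvec_e^\transp .
\end{align*}
The increment is $\:0$ if $\wb{z}_{s-1,e,j}=0$ or the process is frozen. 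This is where the per-copy weights $\wb{p}_{s,e,j}$ matter: dropped copies contribute nothing and their weight stays fixed.

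\textbf{Step 2: telescoping.} For each copy $(e,j)$, summing over $s$ telescopes the differences $1/\wb{p}_{s,e,j}-1/\wb{p}_{s-1,e,j}$. To keep the indicators $\wb{z}_{s-1,e,j}$, I will bound the sum by a weighted sum in which each layer's coefficient carries the indicator of survival. Lemma~\ref{lem:wbp-always-good} gives $r_{h,e}/\wb{p}_{s,e,j}\le\alpha$. With it I aim to show
\begin{align*}
\:W_h \preceq \frac{\alpha}{\wb{q}}\sum_{e}\sum_{j=1}^{\wb{q}} \frac{\wb{z}_{\wb{s},e,j}}{\wb{q}\,\wb{p}_{\wb{s},e,j}}\concvec_e\concvec_e^\transp + (\text{similar terms per layer}).
\end{align*}
The cleaner route is to prove $\:W_h \preceq \frac{\alpha}{\wb{q}}\sum_{s}\Delta_s$, where each $\Delta_s$ is PSD and the total $\sum_s \Delta_s$ is a reweighted sampled matrix with expectation dominated by a constant multiple of $\:P_{\{h,\ell\}}$, whose norm is at most $1$.

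\textbf{Step 3: concentration, the main obstacle.} The dominating matrix $\sum_{e,j} \frac{\wb{z}_{e,j}}{\wb{q}\wb{p}_{e,j}}\concvec_e\concvec_e^\transp$ is built from dependent Bernoullis. I would first try a stochastic-dominance argument: couple it with a process of independent Bernoullis using the fixed probabilities $r_{h,e}/\alpha$. Lemma~\ref{lem:wbp-always-good} makes this coupling possible, since every frozen probability is at least $r_{h,e}/\alpha$. Each summand then has norm at most $\alpha/\wb{q}$ and the expectation is at most $\:P$. A matrix Chernoff bound (Tropp) then gives
\[
\probability\left(\lambda_{\max}\ge 6\right)\le n\left(e^5/6^6\right)^{\wb{q}/\alpha}\le n\exp\{-2\wb{q}/\alpha\}.
\]
Multiplying by the prefactor $\alpha/\wb{q}$ yields $\normsmall{\:W_h}\le 6\alpha/\wb{q}$ with the stated probability. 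If the dominance coupling fails because of the adaptivity of the $\wb{p}$'s, the fallback is a scalar supermartingale argument on $\Tr\exp(\theta\,\cdot)$ applied layer by layer, which yields the same exponent.
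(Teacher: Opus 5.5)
\textbf{There is a genuine gap.} Your overall plan matches the paper's: bound the predictable quadratic variation pathwise, replace the dependent weights by independent surrogates, then apply matrix Chernoff. However, the two steps that carry the content are wrong as written.

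\textbf{Step 1 is missing a factor.} For an alive, unfrozen copy the conditional second moment is
\[
\expectedvalue\left[\wb{\:X}_{\{s,e,j\}}^2 \condbar \F_{s-1}\right] = \frac{1}{\wb{q}^2}\,\frac{\wb{z}_{s-1,e,j}}{\wb{p}_{s-1,e,j}}\left(\frac{1}{\wb{p}_{s,e,j}} - \frac{1}{\wb{p}_{s-1,e,j}}\right) r_{h,e}\,\concvec_e\concvec_e^\transp .
\]
With your formula, telescoping plus Lemma~\ref{lem:wbp-always-good} would give the \emph{deterministic} bound $\:W_h \preceq \frac{\alpha}{\wb{q}}\:P_{\{h,\ell\}}$, and no concentration would be needed. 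That should have been a warning sign: the randomness of $\:W_h$ sits exactly in the factor $\wb{z}_{s-1,e,j}/\wb{p}_{s-1,e,j}$. Once it is restored, the paper's $h$-free pathwise bound goes through the per-copy running maximum over layers:
\[
\:W_h \preceq \frac{1}{\wb{q}^2}\sum_{e,j}\max_{s\le h}\frac{\wb{z}_{s,e,j}}{\wb{p}_{s,e,j}^2}\,(\concvec_e\concvec_e^\transp)^2 .
\]
Your alternatives in Step 2 do not work. A single-time weight $\wb{z}_{\wb{s},e,j}/\wb{p}_{\wb{s},e,j}$ cannot dominate pathwise: a copy dropped before layer $h$ has $\wb{z}_{h,e,j}=0$ but has already contributed. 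Adding ``similar terms per layer'' gives a dominating matrix whose expectation grows linearly in $h$, while the lemma has no $h$-dependence.

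\textbf{The Step 3 coupling is not a stochastic dominance.} The surrogate weight $\frac{\alpha}{r_{h,e}}\mathcal{B}(r_{h,e}/\alpha)$ has mean $1$. So does $\wb{z}_{s,e,j}/\wb{p}_{s,e,j}$ at any fixed layer, since it is a martingale started at $1$. Two variables with equal means can be stochastically ordered only if they have the same law. For instance, if the estimates never drop, $\wb{z}/\wb{p}\equiv 1$, and the Bernoulli surrogate does not dominate that.

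For the running maximum, which is what you actually need, it is worse. By Doob's inequality its tail is of order $1/a$ on $[1,\alpha/r_{h,e}]$, far above $r_{h,e}/\alpha$. Its mean can be of order $\log(\alpha/r_{h,e})$. So after you factor out $\alpha$ early, the claim ``expectation at most $\:P$'' fails by a logarithmic factor.

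\textbf{The missing ingredient (paper's Steps 2--5).}
\begin{enumerate}
\item Dominate each running maximum by an independent truncated-Pareto variable $1/w_{0,e,j}$ with $\probability(1/w_{0,e,j}\le a)=1-1/a$ on $[1,\alpha/p_{h,e})$. This is well defined by Lemma~\ref{lem:wbp-always-good}.
\item Prove the dominance by peeling layers backwards, using conditional independence given $\F_{k-1}$.
\item Transfer it to $\lambda_{\max}$ by monotonicity.
\item Apply matrix Chernoff to the independent summands $\frac{1}{\wb{q}^2 w_{0,e,j}^2}(\concvec_e\concvec_e^\transp)^2$, \emph{keeping the square}.
\end{enumerate}
Keeping the square is what makes the constants work. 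Since $\expectedvalue[w_{0,e,j}^{-2}]\,r_{h,e}=2\alpha-r_{h,e}$, you get $\mu_{\max}\le 2\alpha/\wb{q}$ and $L=\alpha^2/\wb{q}^2$. A deviation factor $3$ gives the threshold $6\alpha/\wb{q}$ with exponent $\frac{2\wb{q}}{\alpha}(3\log 3-2)\ge 2\wb{q}/\alpha$.

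Your fallback (a supermartingale on $\Tr\exp$) is too undeveloped to fill this gap.
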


Since $\:P_{\{h,\ell\}}$ is defined at most on $n$ nodes, combining Prop.\,\ref{prop:matrix-freedman} with $\sigma^2 = 6\alpha/\wb{q}$, Lem~\ref{lem:prob-dominance}, the fact that $2\varepsilon/3 \leq 1$ and the value used by Alg.\,\ref{alg:kl_dist}, $\wb{q} = 39\alpha\log(2n/\delta)/\varepsilon^2$ we obtain
\begin{align*}
\probability\left( \normsmall{\:P_{\{h,\ell\}} - \wt{\:P}_{\{h,\ell\}}}_2 \geq \varepsilon\right)
& = \probability\left( \normsmall{\:Y_h} \geq \varepsilon\right)
\leq \probability\left( \normsmall{\wb{\:Y}_h} \geq \varepsilon \cap \normsmall{\:W_h} \leq \sigma^2\right)
+ \probability\left( \normsmall{\:W_h} \geq \sigma^2\right)\\
&\leq 2n \cdot \exp\left\{-\frac{\varepsilon^2\wb{q}}{\alpha}\left(\frac{1}{12+2\varepsilon/3}\right)\right\}
+ n \cdot \exp\left\{- \frac{2\wb{q}}{\alpha}\right\}\\
&\leq 3n \cdot \exp \left\{ - \frac{\varepsilon^2}{13\alpha}\wb{q}\right\}
= 3n \cdot \exp \left\{ - 3\log\left(\frac{2n}{\delta}\right)\right\}\\
&= 3n \cdot \exp \left\{ - \log\left(\left(\frac{2n}{\delta}\right)^3\right)\right\}
=  \frac{3n\delta^3}{8n^3} \leq \frac{\delta}{2n^2}\cdot
\end{align*}
This, combined with the fact that $k \leq n^2/n \leq n$ since at most we can split our
graph into $n$ parts each containing $n$ edges, concludes this part of the proof.

\subsection{Proof of Lem.\,\ref{lem:prob-dominance} (bound on predictable quadratic variation)}\label{ssec:bounding_w}
\newcounter{cnt-lem-quad-variation}
\setcounter{cnt-lem-quad-variation}{1}


\textbf{Step \arabic{cnt-lem-quad-variation}\stepcounter{cnt-lem-quad-variation} (a preliminary bound).}
We start by writing out $\:W_{\lidx}$ for the process $\wb{\:Y}_s$,
\begin{align*}
\:W_{\lidx} =\frac{1}{\wb{q}^2}\sum_{\{s,e,j\}\leq \lidx} \expectedvalue \left[\left(\frac{\wb{z}_{s-1,e,j}}{\wb{p}_{s-1,e,j}} - \frac{\wb{z}_{s,e,j}}{\wb{p}_{s,e,j}}\right)^{2} \condbar \F_{\{s,e,j\}-1} \right]\concvec_e\concvec_e^\transp\concvec_e\concvec_e^\transp.
\end{align*}
We rewrite the expectation terms in the equation above as
\begin{align*}
\expectedvalue &\left[ 
\left(\frac{\wb{z}_{s-1,e,j}}{\wb{p}_{s-1,e,j}} - \frac{\wb{z}_{s,e,j}}{\wb{p}_{s,e,j}}\right)^{2} \condbar \F_{\{s,e,j\}-1} \right]\\
&= \expectedvalue \left[\frac{\wb{z}_{s-1,e,j}^2}{\wb{p}_{s-1,e,j}^2} -2 \frac{\wb{z}_{s-1,e,j}}{\wb{p}_{s-1,e,j}}\frac{\wb{z}_{s,e,j}}{\wb{p}_{s,e,j}} +\frac{\wb{z}_{s,e,j}^2}{\wb{p}_{s,e,j}^2} \condbar \F_{\{s,e,j\}-1} \right]\\
&\stackrel{(a)}{=} \expectedvalue \left[\frac{\wb{z}_{s-1,e,j}^2}{\wb{p}_{s-1,e,j}^2} -2 \frac{\wb{z}_{s-1,e,j}}{\wb{p}_{s-1,e,j}}\frac{\wb{z}_{s,e,j}}{\wb{p}_{s,e,j}} +\frac{\wb{z}_{s,e,j}^2}{\wb{p}_{s,e,j}^2} \condbar \F_{s-1} \right]\\
&= \frac{\wb{z}_{s-1,e,j}^2}{\wb{p}_{s-1,e,j}^2} -2 \frac{\wb{z}_{s-1,e,j}}{\wb{p}_{s-1,e,j}}\frac{1}{\wb{p}_{s,e,j}}\expectedvalue \left[\wb{z}_{s,e,j}\condbar \F_{s-1} \right] +\frac{1}{\wb{p}_{s,e,j}^2}\expectedvalue \left[\wb{z}_{s,e,j}^2 \condbar \F_{s-1} \right]\\
&\stackrel{(b)}{=} \frac{\wb{z}_{s-1,e,j}}{\wb{p}_{s-1,e,j}^2}
-2 \frac{\wb{z}_{s-1,e,j}}{\wb{p}_{s-1,e,j}}\frac{\wb{z}_{s-1,e,j}}{\wb{p}_{s-1,e,j}}
+\frac{1}{\wb{p}_{s,e,j}^2}\expectedvalue \left[\wb{z}_{s,e,j}\condbar \F_{s-1} \right]\\
&=\frac{1}{\wb{p}_{s,e,j}^2}\expectedvalue \left[\wb{z}_{s,e,j} \condbar \F_{s-1} \right] - \frac{\wb{z}_{s-1,e,j}}{\wb{p}_{s-1,e,j}^2}\\
&\stackrel{(c)}{=}\frac{1}{\wb{p}_{s,e,j}}\frac{\wb{z}_{s-1,e,j}}{\wb{p}_{s-1,e,j}} - \frac{\wb{z}_{s-1,e,j}}{\wb{p}_{s-1,e,j}^2}
=\frac{\wb{z}_{s-1,e,j}}{\wb{p}_{s-1,e,j}}\left(\frac{1}{\wb{p}_{s,e,j}} - \frac{1}{\wb{p}_{s-1,e,j}}\right)\CommaBin
\end{align*}
where in $(a)$ we use the fact that the approximate probabilities $\wb{p}_{s-1,e,j}$ and $\wb{p}_{s,e,j}$ and $\wb{z}_{s-1,e,j}$ are fixed at the end of the previous iteration, while in $(b)$ and $(c)$ we use the fact that $\wb{z}_{s,e,j}$ is a Bernoulli of parameter $\wb{p}_{s,e,j}/\wb{p}_{s-1,e,j}$ (whenever $\wb{z}_{s-1,e,j}$ is equal to 1).
Therefore, we can write $\:W_{\lidx}$ at the end of the process as
\begin{align*}
\:W_{h} = \:W_{\{h,m,\wb{q}\}} &= \frac{1}{\wb{q}^2}\sum_{j=1}^{\wb{q}} \sum_{e=1}^{\nhl} \sum_{s=1}^h \frac{\wb{z}_{s-1,e,j}}{\wb{p}_{s-1,e,j}}\left(\frac{1}{\wb{p}_{s,e,j}} - \frac{1}{\wb{p}_{s-1,e,j}}\right)\concvec_e\concvec_e^\transp\concvec_e\concvec_e^\transp.
\end{align*}

We can now upper-bound $\:W_h$ as
\begin{align*}
\:W_{h} &\preceq \frac{1}{\wb{q}^2}\sum_{j=1}^{\wb{q}} \sum_{e=1}^{\nhl} \sum_{s=1}^h \frac{\wb{z}_{s-1,e,j}}{\wb{p}_{s-1,e,j}}\left(\frac{1}{\wb{p}_{s,e,j}} - \frac{1}{\wb{p}_{s-1,e,j}}\right)\concvec_e\concvec_e^\transp\concvec_e\concvec_e^\transp\\
&= \frac{1}{\wb{q}^2}\sum_{j=1}^{\wb{q}} \sum_{e=1}^{\nhl} \left(\frac{\wb{z}_{h,e,j}}{\wb{p}_{h,e,j}^2} - \frac{\wb{z}_{h,e,j}}{\wb{p}_{h,e,j}^2} + \sum_{s=1}^h \frac{\wb{z}_{s-1,e,j}}{\wb{p}_{s-1,e,j}}\left(\frac{1}{\wb{p}_{s,e,j}} - \frac{1}{\wb{p}_{s-1,e,j}}\right)\right)\concvec_e\concvec_e^\transp\concvec_e\concvec_e^\transp\\
&= \frac{1}{\wb{q}^2}\sum_{j=1}^{\wb{q}} \sum_{e=1}^{\nhl} \left(\frac{\wb{z}_{h,e,j}}{\wb{p}_{h,e,j}^2} + \left(\sum_{s=1}^{h} -\frac{\wb{z}_{s,e,j}}{\wb{p}_{s,e,j}^2} + \frac{\wb{z}_{s-1,e,j}}{\wb{p}_{s,e,j}\wb{p}_{s-1,e,j}}\right) - \frac{\wb{z}_{0,e,j}}{\wb{p}_{0,e,j}^2}\right)\concvec_e\concvec_e^\transp\concvec_e\concvec_e^\transp\\
&\preceq \frac{1}{\wb{q}^2}\sum_{j=1}^{\wb{q}} \sum_{e=1}^{\nhl} \left(\frac{\wb{z}_{h,e,j}}{\wb{p}_{h,e,j}^2} + \left(\sum_{s=1}^{h}  \frac{\wb{z}_{s-1,e,j}}{\wb{p}_{s,e,j}\wb{p}_{s-1,e,j}} - \frac{\wb{z}_{s,e,j}}{\wb{p}_{s,e,j}\wb{p}_{s-1,e,j}}\right)\right)\concvec_e\concvec_e^\transp\concvec_e\concvec_e^\transp\\
&= \frac{1}{\wb{q}^2}\sum_{j=1}^{\wb{q}} \sum_{e=1}^{\nhl} \left(\frac{\wb{z}_{h,e,j}}{\wb{p}_{h,e,j}^2} + \sum_{s=1}^{h}  \frac{\wb{z}_{s-1,e,j}(1 - \wb{z}_{s,e,j})}{\wb{p}_{s,e,j}\wb{p}_{s-1,e,j}} \right)\concvec_e\concvec_e^\transp\concvec_e\concvec_e^\transp,
\end{align*}
where in the inequality we use the fact $\wb{p}_{s,e,j} \leq \wb{p}_{s-1,e,j}$. From the definition
of $\wb{p}_{s,e,j}$, we know that when $\wb{z}_{s,e,j} = 0$, $\wb{p}_{s,e,j} = \wb{p}_{s-1,e,j}$.
Therefore $\frac{\wb{z}_{s-1,e,j}(1 - \wb{z}_{s,e,j})}{\wb{p}_{s,e,j}\wb{p}_{s-1,e,j}} = \frac{\wb{z}_{s-1,e,j}(1 - \wb{z}_{s,e,j})}{\wb{p}_{s-1,e,j}^2}\CommaBin$ since the term is non-zero only when $\wb{z}_{s,e,j} = 0$.
Finally, we see that only one of the $\wb{z}_{s-1,e,j}(1-\wb{z}_{s,e,j})$ terms can be active for $s \in [h]$ and thus
\begin{align}
\:W_{h}
& \preceq \frac{1}{\wb{q}^2}\sum_{j=1}^{\wb{q}} \sum_{e=1}^{\nhl} \left(\frac{\wb{z}_{h,e,j}}{\wb{p}_{h,e,j}^2} + \sum_{s=1}^{h}  \frac{\wb{z}_{s-1,e,j}(1 - \wb{z}_{s,e,j})}{\wb{p}_{s-1,e,j}^2} \right)\concvec_e\concvec_e^\transp\concvec_e\concvec_e^\transp\nonumber\\
&=\frac{1}{\wb{q}^2}\sum_{j=1}^{\wb{q}} \sum_{e=1}^{\nhl} \left(\max\left\{ \max_{s=1,\dots,h}  \left\{\frac{\wb{z}_{s-1,e,j}(1 - \wb{z}_{s,e,j})}{\wb{p}_{s-1,e,j}^2}\right\} ; \frac{\wb{z}_{h,e,j}}{\wb{p}_{h,e,j}^2}\right\} \right)\concvec_e\concvec_e^\transp\concvec_e\concvec_e^\transp\nonumber\\
&= \frac{1}{\wb{q}^2}\sum_{j=1}^{\wb{q}}\sum_{e=1}^{\nhl}\concvec_e\concvec_e^\transp\concvec_e\concvec_e^\transp\left(\max_{s=0,\dots,h}\left\{\frac{\wb{z}_{s,e,j}}{\wb{p}_{s,e,j}^2}\right\}\right)\!\cdot\label{eq:dominance-W}
\end{align}

\textbf{Step \arabic{cnt-lem-quad-variation}\stepcounter{cnt-lem-quad-variation} (introduction of a stochastically dominant process).}
We want to study $\max_{s=0,\dots,h}\left\{\frac{\wb{z}_{s,e,j}}{\wb{p}_{s,e,j}^2}\right\}$.
To simplify notation, we will consider $\max_{s=0,\dots,h}\left\{\frac{\wb{z}_{s,e,j}}{\wb{p}_{s,e,j}}\right\}$,
where we removed the square, which will be re-added in the end.
We know trivially that this quantity is larger or equal than, 1 because $\wb{z}_{0,e,j}/\wb{p}_{0,e,j} = 1$,
but upper-bounding this quantity is not trivial as the evolution
of the various $\wb{p}_{s,e,j}$ depends in a complex way on the interaction
between the random variables $\wb{z}_{s,e,j}$.
Nonetheless, whenever $\wb{p}_{s,e,j}$ is significantly
smaller than $\wb{p}_{s-1,e,j}$, the probability of keeping a copy of edge $e$ at
iteration $s$ (i.e., $\wb{z}_{s,e,j}=1$) is also very small. As a result, we expect
the ratio $\frac{\wb{z}_{s,e,j}}{\wb{p}_{s,e,j}}$ to be still small with
high probability.

Unfortunately, due to the dependence between different copies
of the edge at different iterations, it seems difficult to exploit this intuition directly
to provide an overall high-probability bound on $\:W_{h}$. For this
reason, we simplify the analysis by replacing each of the (potentially
dependent) chains $\{\wb{z}_{s,e,j}/\wb{p}_{s,e,j}\}_{s=0}^{h}$ with a set of
(independent) random variables $w_{0,e,j}$ that will stochastically dominate
them.

We define the random variable $w_{s,e,j}$ using the following conditional
distribution,\footnote{
Notice that unlike $\wb{z}_{s,e,j}$, $w_{s,e,j}$ is no longer $\F_{s}$-measurable but it is $\F'_{s}$-measurable, where 
\begin{align*}
\F'_{\{s,e,j\}} = \left\{ u_{s',e',j'},\; \forall\{s',e',j'\} \leq \{s,e,j\} \right\} \cup \left\{  w_{s,e,j} \right\}
= \F_{\{s,e,j\}} \cup \left\{  w_{s,e,j} \right\}\!.
\end{align*}
} 
\begin{align*}
\probability\left(\frac{1}{w_{s,e,j}} \leq a \condbar \F_{s}\right)
 = \begin{cases}
0 &\text{ for }\quad a < 1/\wb{p}_{s,e,j}\\
1-\frac{1}{\wb{p}_{s,e,j}a} &\text{ for }\quad 1/\wb{p}_{s,e,j} \leq a < \alpha/p_{h,e}\\
1 &\text{ for }\quad \alpha/p_{h,e} \leq a
\end{cases}.
\end{align*}
To show that this distribution is well defined, we use Lem.\,\ref{lem:wbp-always-good}
to guarantee that $1/\wb{p}_{s,e,j} \leq a < \alpha/p_{h,e}$.
Note that the distribution of $\frac{1}{w_{s,e,j}}$ conditioned on $\F_{s}$
is determined by only $\wb{p}_{s,e,j}$, $p_{h,e}$, and~$\alpha$, where~$p_{h,e}$ and~$\alpha$
are fixed. Remembering that $\wb{p}_{s,e,j}$ is a function of
$\F_{s-1}$ (computed using the previous iteration),
we have that  
\begin{align*}
\probability\left(\frac{1}{w_{s,e,j}} \leq a \condbar \F_{s}\right)
= \probability\left(\frac{1}{w_{s,e,j}} \leq a \condbar \F_{s-1}\right).
    \end{align*}
Notice that  in the definition of $w_{s,e,j}$, none of the other $w_{s',e',j'}$
(for any different $s'$, $e'$, or $j'$) appears
and $\wb{p}_{s,e,j}$ is a function of
$\F_{s-1}$. It follows that given  $\F_{s-1}$, $w_{s,e,j}$ is independent from all other $w_{s',e',j'}$
(for any different $s'$, $e'$, or $j'$).
\begin{figure}[t]\label{fig:rand-var-dep-graph}
\begin{center}
\includegraphics[width=0.8\textwidth]{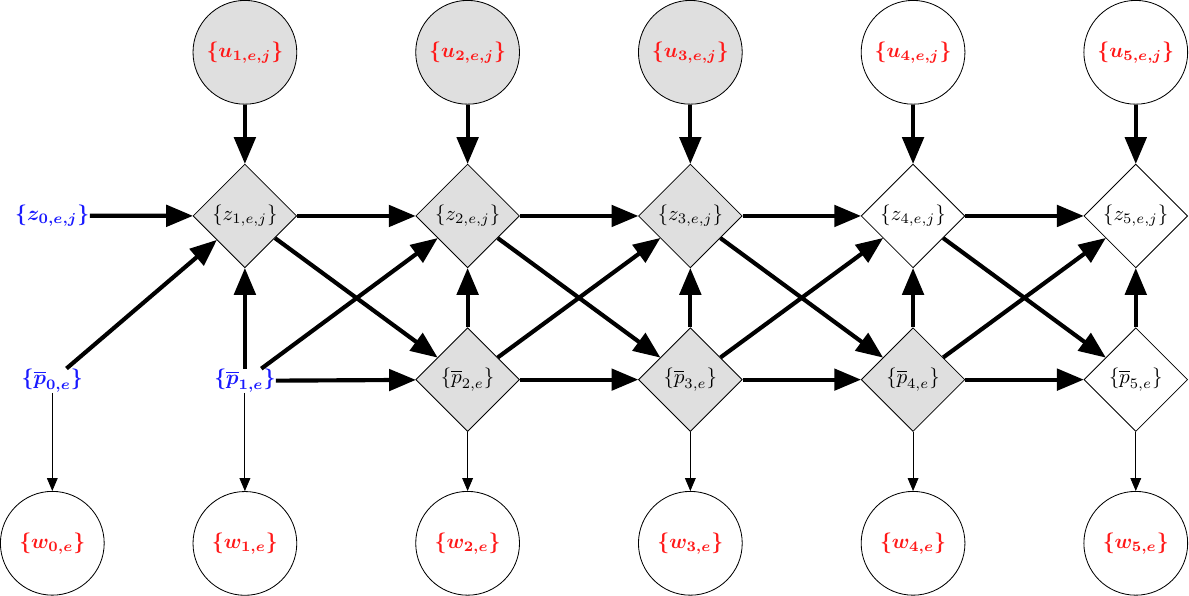}
\end{center}
\caption{The dependence graph of the considered variables. \rcolb{Red} variables are \rcolb{random}. Black variables are deterministically computed using their input (a function of their input), with bold lines indicating the deterministic (functional) relation. \bcolb{Blue} variables are \bcolb{constants}. A \colorbox{gray!30}{grey filling} indicates that a random variable is \colorbox{gray!30}{observed} or a function of observed variables.}
\label{fig:rand-var-dep-graph}
\end{figure}
This is easier to see in the probabilistic graphical model reported in
Fig.\,\ref{fig:rand-var-dep-graph}, which illustrates the dependence between
the various variables.

Finally for  the special case $w_{0,e,j}$ the definition above reduces to
\begin{align}\label{eq:distro.w}
\probability\left(\frac{1}{w_{0,e,j}} \leq a\right)
 = \begin{cases}
0 &\text{ for }\quad a < 1\\
1-\frac{1}{a} &\text{ for }\quad 1 \leq a < \alpha/p_{h,e}\\
1 &\text{ for }\quad \alpha/p_{h,e} \leq a
\end{cases},
\end{align}
since $\wb{p}_{0,e,j}=1$ by definition.
From this definition, $w_{0,e,j}$ and $w_{0,e',j'}$ are all
independent, and this will allow us to use stronger concentration inequalities
for independent random variables.

\textbf{Step \arabic{cnt-lem-quad-variation}\stepcounter{cnt-lem-quad-variation} {Proving the dominance}.}
We remind the reader that a random variable $A$ stochastically dominates random
variable $B$, if for all values $a$ the two equivalent conditions are verified,
\begin{align*}
\probability(A \geq a) \geq \probability(B \geq a) \Leftrightarrow \probability(A \leq a) \leq \probability(B \leq a).
\end{align*}
As a consequence, if $A$ dominates $B$, the following implication holds,
\begin{align*}
\probability(A \geq a) \geq \probability(B \geq a) \implies \expectedvalue[A] \geq \expectedvalue[B],
\end{align*}
while the reverse ($A$ dominates $B$, if $\expectedvalue[A] \geq \expectedvalue[B]$) is not
true in general.
Following this definition of  stochastic dominance, our goal is to prove
\begin{align*}
\probability\left(\max_{s=0}^h\frac{\wb{z}_{s,e,j}}{\wb{p}_{s,e,j}} \leq a\right)
\geq 
\probability\left(\frac{1}{w_{0,e,j}} \leq a \right).
\end{align*}
We prove this inequality by proceeding backwards with a sequence of conditional probabilities.
We first study the distribution of the maximum conditional to the state of the algorithm at the end of iteration $h$, i.e., $\Fp_{h}$. From the definition of $w_{h,e,j}$, we know that,
w.p.\,1, $1/\wb{p}_{h,e} \leq 1/w_{h,e,j}$.
Therefore,
\begin{align*}
&\probability\left(\max_{s=0,\dots,h}\frac{\wb{z}_{s,e,j}}{\wb{p}_{s,e,j}} \leq a\right)
\geq \probability\left(\max\left\{ \max_{s=0,\dots,h-1}\frac{\wb{z}_{s,e,j}}{\wb{p}_{s,e,j}} ; \frac{\wb{z}_{h,e,j}}{w_{h,e,j}}\right\} \leq a \right)\!.
\end{align*}%
Now focus on an arbitrary intermediate step $1\leq k\leq h$, where we fix $\Fp_{k-1}$. Since~$u_{k,e,j}$ and
$w_{k,e,j}$ are independent given $\Fp_{k-1}$, we have
\begin{align}
\probability\left( \frac{\wb{z}_{k,e,j}}{w_{k,e,j}} \leq a \condbar \Fp_{k-1}\right)
&=\probability\left( \indfunc\left\{u_{k,e,j} \leq \frac{\wb{p}_{k,e,j}}{\wb{p}_{k-1,e,j}}\right\}\frac{1}{w_{k,e,j}} \leq a \condbar \Fp_{k-1}\right)\nonumber\\
&=
\begin{cases}
0 &\text{ for }\quad a \leq 0 \nonumber\\
1 - \frac{\wb{p}_{k,e,j}}{\wb{p}_{k-1,e,j}} &\text{ for }\quad  0 \leq a < 1/\wb{p}_{k,e,j}\\
1-\frac{\wb{p}_{k,e,j}}{\wb{p}_{k-1,e,j}} +\frac{\wb{p}_{k,e,j}}{\wb{p}_{k-1,e,j}}\left(1-\frac{1}{\wb{p}_{k,e,j}a}\right) = 1-\frac{1}{\wb{p}_{k-1,e,j}a}  &\text{ for }\quad 1/\wb{p}_{k,e,j} \leq a < \alpha/p_{h,e} \\ 
1 &\text{ for }\quad \alpha/p_{h,e} \leq a
\end{cases}\nonumber\\
&\geq 
\begin{cases}
0 &\text{ for }\quad a < 1/\wb{p}_{k-1,e,j} \\
1-\frac{1}{\wb{p}_{k-1,e,j}a} &\text{ for }\quad  1/\wb{p}_{k-1,e,j} \leq a < 1/\wb{p}_{k,e,j} \\
1-\frac{1}{\wb{p}_{k-1,e,j}a}  &\text{ for }\quad 1/\wb{p}_{k,e,j} \leq a < \alpha/p_{h,e}  \\
1 &\text{ for }\quad \alpha/p_{h,e}  \leq a 
\end{cases}\label{w.stoch123}\\
&=\probability\left(\frac{1}{w_{k-1,e,j}} \leq a \condbar \Fp_{k-2}\right)
=\probability\left(\frac{1}{w_{k-1,e,j}} \leq a \condbar \Fp_{k-1}\right) \nonumber\!,
\end{align}
where the inequality is also represented in Fig.\,\ref{fig:dominance}.
\begin{figure}[t]
\centering
\includegraphics[width=0.6\textwidth]{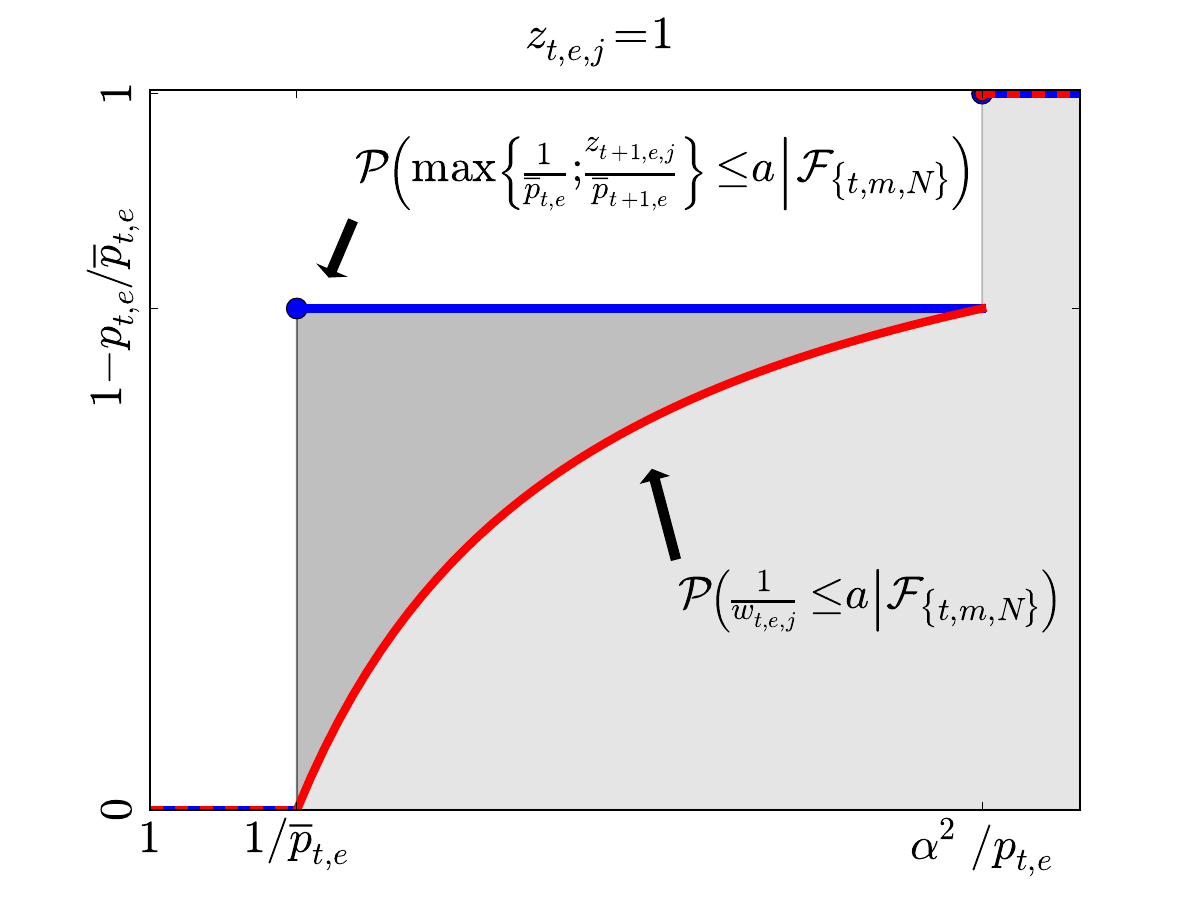}
\caption{C.d.f.\,of $\max\left\{ \wb{z}_{k-1,e,j}/\wb{p}_{t,e,j} ; \wb{z}_{k,e,j}/\wb{p}_{k,e,j}\right\}$ and
$\wb{z}_{k-1,e,j}/w_{k-1,e,j}$ conditioned on $\mathcal{F}_{\{k-1\}}$. For conciseness, we omit the $e,j$ indices.}\label{fig:dominance}
\end{figure}
We now proceed by peeling off layers from the end of the chain one by one, taking advantage of the dominance we just proved. Fig.\,\ref{fig:dominance} visualizes one step of the peeling when $\wb{z}_{k-1,e,j} = 1$ (note that the peeling is trivially true when $\wb{z}_{k-1,e,j} = 0$ since the whole chain terminated at step $\wb{z}_{k-1,e,j}$). We show how to move from an iteration $k\leq h$ to $k-1$.
%
\begin{align*}\label{eq:generic.case}
& \probability\left(\max\left\{ \max_{s=0 \dots k-1}\frac{\wb{z}_{s,e,j}}{\wb{p}_{s,e,j}} ; \frac{\wb{z}_{k,e,j}}{w_{k,e,j}}\right\} \leq a \right)
=\expectedvalue_{\Fp_{k-1}}\left[\probability\left(\max\left\{ \max_{s=0 \dots k-1}\frac{\wb{z}_{s,e,j}}{\wb{p}_{s,e,j}} ; \frac{\wb{z}_{k,e,j}}{w_{k,e,j}}\right\} \leq a \condbar \Fp_{k-1} \right)\right]\nonumber\\
&\stackrel{(a)}{\geq}\expectedvalue_{\Fp_{k-1}}\left[\probability\left(\max\left\{ \max_{s=0 \dots k-1}\frac{\wb{z}_{s,e,j}}{\wb{p}_{s,e,j}} ; \frac{\wb{z}_{k-1,e,j}}{w_{k-1,e,j}}\right\} \leq a \condbar \Fp_{k-1} \right)\right]\nonumber\\
&=\expectedvalue_{\Fp_{k-1}}\left[\probability\left(\max\left\{ \max_{s=0 \dots k-2}\frac{\wb{z}_{s,e,j}}{\wb{p}_{s,e,j}} ;\frac{\wb{z}_{k-1,e,j}}{\wb{p}_{k-1,e,j}}; \frac{\wb{z}_{k-1,e,j}}{w_{k-1,e,j}}\right\} \leq a \condbar \Fp_{k-1} \right)\right]\nonumber\\
&=\expectedvalue_{\Fp_{k-1}}\left[\probability\left(\max\left\{ \max_{s=0 \dots k-2}\frac{\wb{z}_{s,e,j}}{\wb{p}_{s,e,j}} ;\wb{z}_{k-1,e,j}\max\left\{\frac{1}{\wb{p}_{k-1,e,j}}; \frac{1}{w_{k-1,e,j}}\right\}\right\} \leq a \condbar \Fp_{k-1} \right)\right]\nonumber\\
&\stackrel{(b)}{=}\expectedvalue_{\Fp_{k-1}}\left[\probability\left(\max\left\{ \max_{s=0 \dots k-2}\frac{\wb{z}_{s,e,j}}{\wb{p}_{s,e,j}} ;\frac{\wb{z}_{k-1,e,j}}{w_{k-1,e,j}}\right\} \leq a \condbar \Fp_{k-1} \right)\right]
= \probability\left(\max\left\{ \max_{s=0 \dots k-2}\frac{\wb{z}_{s,e,j}}{\wb{p}_{s,e,j}} ; \frac{\wb{z}_{k-1,e,j}}{w_{k-1,e,j}}\right\} \leq a \right)\!,\nonumber
\end{align*}%
where in $(a)$, given $\F_{k-1}$, everything is fixed except $u_{k,e,j}$ and
$w_{k,e,j}$ and we can use the stochastic dominance in~\eqref{w.stoch123}, and
in $(b)$ we use the fact that the inner maximum is always attained by
$1/w_{k,e,j}$ since by definition $1/w_{k-1,e,j}$ is lower-bounded by
$1/\wb{p}_{k-1,e,j}$.
Applying the inequality recursively from $k=h$ to $k=1$ 
removes all $\wb{z}_{s,e,j}$ from the maximum and we
are finally left with only $w_{0,e,j}$ as we wanted,
\begin{align*}
\probability\left(\max_{s=0,\dots,h}\frac{\wb{z}_{s,e,j}}{\wb{p}_{s,e,j}} \leq a\right)
\geq 
\probability\left(\max\left\{\frac{\wb{z}_{0,e,j}}{\wb{p}_{0,e,j}} ; \frac{\wb{z}_{0,e,j}}{w_{0,e,j}}\right\} \leq a \right)
\geq 
\probability\left(\frac{1}{w_{0,e,j}} \leq a \right),
\end{align*}
where in the last inequality we used that $\wb{z}_{0,e,j} = 1$ from the definition of the algorithm 
and $\wb{p}_{0,e,j} = 1$ while $w_{0,e,j} \leq 1$ by~\eqref{eq:distro.w}.

\textbf{Step \arabic{cnt-lem-quad-variation}\stepcounter{cnt-lem-quad-variation} (stochastic dominance on $\:W_{h}$).}
Now that we proved the stochastic dominance of $1/w_{0,e,j}$, we plug this
result in the definition of $\:W_{h}$. For the sake of notation, we introduce
the term $\wb{p}_{h',e,j}^{\max}$
to indicate the maximum over the first $h'$ step of copy $e,j$ such that
\begin{align*}
\max_{s=0,\dots,h'}\frac{\wb{z}_{s,e,j}}{\wb{p}_{s,e,j}} = \frac{1}{\wb{p}_{h',e,j}^{\max}}\cdot
\end{align*}
We first notice that while $\wb{\:Y}_{h}$ is not
necessarily PSD, $\:W_{h}$ is a sum of PSD matrices.
Introducing the function $\Lambda(\{1/\wb{p}_{h,e,j}^{\max}\}_{e,j})$ we can restate Eq.\,\ref{eq:dominance-W} as
\begin{align*}
\normsmall{\:W_{h}}
=\lambda_{\max}(\:W_{h})
\leq \Lambda(\{1/\wb{p}_{h,e,j}^{\max}\}_{e,j}) \eqdef
\lambda_{\max}\left(\frac{1}{\wb{q}^2}\sum_{j=1}^{\wb{q}}\sum_{e=1}^{\nhl}\left(\frac{1}{\wb{p}_{h,e,j}^{\max}}\right)^2\concvec_e\concvec_e^\transp\concvec_e\concvec_e^\transp\right)\!.
\end{align*}
In Step 4, we showed that $1/\wb{p}_{h,e,j}^{\max}$ is stochastically
dominated by $1/w_{0,e,j}$ for every $e$ and~$j$. In order to bound
$\Lambda(\{1/\wb{p}_{h,e,j}^{\max}\}_{e,j})$,
we need to show that this dominance also applies to the summation
over all columns inside the matrix norm. We
can reformulate $\Lambda(\{1/\wb{p}_{h,e,j}^{\max}\}_{e,j})$ as
\begin{align*}
&\lambda_{\max}\left(\frac{1}{\wb{q}^2}\sum_{j=1}^{\wb{q}}\sum_{e=1}^{\nhl}\left(\frac{1}{\wb{p}_{h,e,j}^{\max}}\right)^2\concvec_e\concvec_e^\transp\concvec_e\concvec_e^\transp\right)
= \max_{\:x : \normsmall{\:x} = 1} \:x^\transp\left(\frac{1}{\wb{q}^2}\sum_{j=1}^{\wb{q}}\sum_{e=1}^{\nhl}\left(\frac{1}{\wb{p}_{h,e,j}^{\max}}\right)^2\concvec_e\concvec_e^\transp\concvec_e\concvec_e^\transp\right)\:x\\
&= \max_{\:x : \normsmall{\:x} = 1}\frac{1}{\wb{q}^2}\sum_{j=1}^{\wb{q}}\sum_{e=1}^{\nhl}\left(\frac{1}{\wb{p}_{h,e,j}^{\max}}\right)^2\normsmall{\concvec_e}_{2}^2\:x^\transp\concvec_e\concvec_e^\transp\:x
= \max_{\:x : \normsmall{\:x} = 1}\frac{1}{\wb{q}^2}\sum_{j=1}^{\wb{q}}\sum_{e=1}^{\nhl}\left(\frac{1}{\wb{p}_{h,e,j}^{\max}}\right)^2\left(\normsmall{\concvec_e}_{2}\concvec_e^\transp\:x\right)^2\!\!.
\end{align*}
From this reformulation, it is easy to see that, because $1/\wb{p}_{h,e,j}^{\max}$
is strictly positive,
the function $\Lambda(\{1/\wb{p}_{h,e,j}^{\max}\}_{e,j})$ is
monotonically increasing w.r.t.\,the individual $1/\wb{p}_{h,e,j}^{\max}$, or
in other words that increasing an $1/\wb{p}_{h,e,j}^{\max}$ without decreasing
the others can only increase the maximum.
Introducing $\Lambda(\{1/w_{0,e,j}\}_{e,j})$ as
\begin{align*}
\Lambda(\{1/w_{0,e,j}\}_{e,j}) \eqdef \max_{\:x : \normsmall{\:x} = 1}\frac{1}{\wb{q}^2}\sum_{j=1}^{\wb{q}}\sum_{e=1}^{\nhl}\left(\frac{1}{w_{0,e,j}}\right)^2\left(\normsmall{\concvec_e}_{2}\concvec_e^\transp\:x\right)^2\!\!\!,
\end{align*}
we now need to prove the stochastic dominance of $\Lambda(\{1/w_{0,e,j}\}_{e,j})$ over
$\Lambda(\{1/\wb{p}_{h,e,j}^{\max}\}_{e,j})$.
Using the definition of $1/\wb{p}_{h,e,j}^{\max}$, $w_{h,e,j}$, and the
monotonicity of $\Lambda$ we have
\begin{align*}
\probability\left(\Lambda\left(\left\{\frac{1}{\wb{p}_{h,e,j}^{\max}}\right\}_{e,j}\right) \leq a\right)
&=\probability\left(\Lambda\left(\left\{\max\left\{ \max_{s=0,\dots,h-1}\frac{\wb{z}_{s,e,j}}{\wb{p}_{s,e,j}} ; \frac{\wb{z}_{h,e,j}}{\wb{p}_{h,e,j}}\right\}\right\}_{e,j}\right) \leq a \right)\\
&\geq \probability\left(\Lambda\left(\left\{\max\left\{ \max_{s=0,\dots,h-1}\frac{\wb{z}_{s,e,j}}{\wb{p}_{s,e,j}} ; \frac{\wb{z}_{h,e,j}}{w_{h,e,j}}\right\}\right\}_{e,j}\right) \leq a \right)\!.
\end{align*}
Now pick $1\leq k \leq h$, for a fixed $\Fp_{k-1}$, $\frac{1}{\wb{p}_{k-1,e,j}^{\max}}$ is a constant
and $\max\left\{\frac{1}{\wb{p}_{k,e,j}^{\max}}; x\right\}$ is a monotonically
increasing function in~$x$, making $\Lambda\left(\max\left\{\frac{1}{\wb{p}_{k,e,j}^{\max}}; x\right\}\right)$ also an increasing function. Therefore, we have
\begin{align*}
&\probability\left(\Lambda\left(\left\{\max\left\{ \frac{1}{\wb{p}_{k-1,e,j}^{\max}} ; \frac{\wb{z}_{k,e,j}}{w_{k,e,j}}\right\} \right\}_{e,j}\right) \leq a \right)
=\expectedvalue_{\Fp_{k-1}}\left[\probability\left(\Lambda\left(\left\{\max\left\{ \frac{1}{\wb{p}_{k-1,e,j}^{\max}} ; \frac{\wb{z}_{k,e,j}}{w_{k,e,j}}\right\}\right\}_{e,j}\right) \leq a \condbar \Fp_{k-1}\right)\right]\\
&\stackrel{(a)}{\geq}\expectedvalue_{\Fp_{k-1}}\left[\probability\left(\Lambda\left(\left\{\max\left\{ \frac{1}{\wb{p}_{k-1,e,j}^{\max}} ; \frac{\wb{z}_{k-1,e,j}}{w_{k-1,e,j}}\right\}\right\}_{e,j}\right) \leq a \condbar \Fp_{k-1}\right)\right]\\
&\stackrel{(b)}{=}\expectedvalue_{\Fp_{k-1}}\left[\probability\left(\Lambda\left(\left\{\max\left\{ \frac{1}{\wb{p}_{k-2,e,j}^{\max}} ; \frac{\wb{z}_{k-1,e,j}}{w_{k-1,e,j}}\right\}\right\}_{e,j}\right) \leq a \condbar \Fp_{k-1}\right)\right]\!,
\end{align*}
where inequality (a) follows from the fact that stochastic dominance is preserved
by monotonically increasing functions \cite{levy2015stochastic}, such as
$\Lambda$, combined with the fact that for
a fixed $\Fp_{k-1}$ the variables $\wb{z}_{k,e,j}$ and $w_{k,e,j}$ are all
independent and (b) from the definition of $1/\wb{p}_{k-1,e,j}^{\max}$ and the
fact that by definition $1/w_{k-1,e,j}$ is lower-bounded by $1/\wb{p}_{k-1,e,j}$.
We can iterate this inequality to obtain the desired result
\begin{align*}
\probability(\normsmall{\:W_{h}} \geq \sigma^2)
&\leq \probability\left(\Lambda\left(\left\{\frac{1}{\wb{p}_{h,e,j}^{\max}}\right\}_{e,j}\right) \geq \sigma^2\right)
\leq \probability\left(\lambda_{\max}\left(\frac{1}{\wb{q}^2}\sum_{j=1}^{\wb{q}}\sum_{e=1}^{\nhl}\left(\frac{1}{w_{0,e,j}}\right)^2\concvec_e\concvec_e^\transp\concvec_e\concvec_e^\transp\right) \geq \sigma^2\right)\!.
\end{align*}

\textbf{Step \arabic{cnt-lem-quad-variation}\stepcounter{cnt-lem-quad-variation} (concentration inequality).}
Since all $w_{0,e,j}$ are (unconditionally) independent from each other, we can apply the following theorem.
\begin{proposition}[\citealp{tropp2015an-introduction}, Theorem~5.1.1]\label{prop:matrix-chernoff}
Consider a finite sequence $\{ \:X_k : k = 1, 2, 3, \dots \}$ whose values are independent, random, PSD Hermitian matrices with dimension $d$.  Assume that each term in the sequence is uniformly bounded in the sense that
\begin{align*}
\lambda_{\max}( \:X_k ) \leq L
\quad\text{almost surely}
\quad\text{for $k = 1, 2, 3, \dots$}.
\end{align*}
Introduce the random matrix
$\:V \eqdef \sum_{k}  \:X_k$, and
the maximum eigenvalue of its expectation
\begin{align*}
\mu_{\max} \eqdef \lambda_{\max}(\expectedvalue\left[\:V\right]) = \lambda_{\max}\left(\sum_{k} \expectedvalue\left[\:X_k\right]\right)\!.
\end{align*}
Then, for all $h \geq 0$,
\begin{align*}
\probability\left(  \lambda_{\max}(\:V) \geq (1+h)\mu_{\max}  \right)
	&\leq d \cdot \left[\frac{e^{h}}{(1+h)^{1+h}}\right]^{\frac{\mu_{\max}}{L}}\\
	&\leq d \cdot \exp \left\{ - \frac{\mu_{\max}}{L}((h+1)\log(h+1) - h) \right\}\!\cdot
\end{align*}
\end{proposition}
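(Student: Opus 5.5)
The plan is the standard matrix Laplace transform argument. Let $t \eqdef (1+h)\mu_{\max}$. For any $\theta>0$, Markov's inequality applied to $e^{\theta\lambda_{\max}(\:V)}$ gives
\[
\probability(\lambda_{\max}(\:V)\geq t)\leq e^{-\theta t}\,\expectedvalue\left[ e^{\theta\lambda_{\max}(\:V)}\right].
\]
Since $e^{\theta\lambda_{\max}(\:V)}=\lambda_{\max}(e^{\theta\:V})\leq \Tr e^{\theta\:V}$ (the exponential of a Hermitian matrix is PSD), we get
\[
\probability(\lambda_{\max}(\:V)\geq t)\leq \inf_{\theta>0} e^{-\theta t}\,\expectedvalue\,\Tr\exp\Big(\sum_k\theta\:X_k\Big).
\]
The rest of the proof bounds this trace moment.

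\textbf{Subadditivity of matrix cumulant generating functions.} This is the key and hardest step. I would invoke Lieb's concavity theorem: $\:A\mapsto\Tr\exp(\:H+\log\:A)$ is concave on PSD matrices. I would condition on $\:X_1,\dots,\:X_{k-1}$ and use independence. Writing $\theta\:X_k=\log e^{\theta\:X_k}$, Jensen's inequality then lets me replace $e^{\theta\:X_k}$ by $\expectedvalue e^{\theta\:X_k}$ inside the trace exponential. Iterating over $k$ yields
\[
\expectedvalue\,\Tr\exp\Big(\sum_k\theta\:X_k\Big)\leq \Tr\exp\Big(\sum_k\log\expectedvalue\, e^{\theta\:X_k}\Big).
\]
This step genuinely requires Lieb's theorem. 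The naive Golden--Thompson route only works for two matrices, so I would take Lieb's result as the one nontrivial input.

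\textbf{Bounding each cgf.} Since $0\preceq\:X_k$ and $\lambda_{\max}(\:X_k)\leq L$, convexity of $x\mapsto e^{\theta x}$ on $[0,L]$ gives $e^{\theta x}\leq 1+g(\theta)x$ there, with $g(\theta)\eqdef(e^{\theta L}-1)/L$. The transfer rule for eigenvalues lifts this scalar inequality to $e^{\theta\:X_k}\preceq\:I+g(\theta)\:X_k$. Taking expectations, and using that $\log$ is operator monotone together with $\log(1+x)\leq x$, I get
\[
\log\expectedvalue\, e^{\theta\:X_k}\preceq g(\theta)\,\expectedvalue\:X_k .
\]
Monotonicity of $\:A\mapsto\Tr e^{\:A}$ then gives
\[
\Tr\exp\Big(g(\theta)\expectedvalue\:V\Big)\leq d\,e^{g(\theta)\mu_{\max}},
\]
because the trace is at most $d$ times the largest eigenvalue of the exponential.

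\textbf{Optimizing $\theta$.} Combining the steps, $\probability(\lambda_{\max}(\:V)\geq(1+h)\mu_{\max})\leq d\exp\{-\theta(1+h)\mu_{\max}+g(\theta)\mu_{\max}\}$. I would choose $\theta=\log(1+h)/L$, which makes $g(\theta)=h/L$. The exponent becomes $\frac{\mu_{\max}}{L}\big(h-(1+h)\log(1+h)\big)$. This is exactly $d\,[e^h/(1+h)^{1+h}]^{\mu_{\max}/L}$, and the second displayed form is just its rewriting as an exponential, so it holds with equality.
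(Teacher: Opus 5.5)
Your proposal is correct and is essentially the standard proof from Tropp's monograph, which the paper cites without reproving: the matrix Laplace transform bound, Lieb-based subadditivity of the matrix cumulant generating functions, the bound $\log \mathbb{E}\, e^{\theta \mathbf{X}_k} \preceq g(\theta)\,\mathbb{E}\mathbf{X}_k$ from convexity plus the transfer rule and operator monotonicity of $\log$, and the choice $\theta = \log(1+h)/L$. The only nit is that at $h=0$ this choice gives $\theta=0$, outside the range $\theta>0$ of the Laplace transform bound, but there the claimed bound equals $d\geq 1$ and holds trivially.
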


In our case, we have
\begin{align*}
\:X_{\{e,j\}}
= \frac{1}{\wb{q}^2}\frac{1}{w_{0,e,j}}\concvec_e\concvec_e^\transp\concvec_e\concvec_e^\transp
\preceq \frac{1}{\wb{q}^2}\frac{\alpha^2}{p_{h,e}^2}\concvec_e\concvec_e^\transp\concvec_e\concvec_e^\transp
\preceq \frac{1}{\wb{q}^2}\frac{\alpha^2}{p_{h,e}^2}\normsmall{\concvec_e\concvec_e^\transp}^2\:I
\preceq \frac{\alpha^2}{\wb{q}^2}\:I,
\end{align*}
where the first inequality follows from the definition of $w_{0,e,j}$ in Eq.\,\ref{eq:distro.w}, the second
from the PSD ordering, and the third from the definition of $\normsmall{\concvec_e\concvec_e^\transp}$.

Therefore, we can use $L \eqdef \alpha^2/\wb{q}^2$ for the purpose of Prop.\,\ref{prop:matrix-chernoff}. We need now to compute $\expectedvalue\left[\:X_k\right]$,
that we can use in turn to compute $\mu_{\max}$. We begin by computing the expected value of $1/w_{0,e,j}$.
Let us denote the c.d.f.\@ of $1/w_{0,e,j}^2$ as
\begin{align*}
F_{1/w_{0,e,j}^2}(a) = \probability\left(\frac{1}{w_{0,e,j}^2} \leq a\right) = \probability\left(\frac{1}{w_{0,e,j}} \leq \sqrt{a}\right)
 = \begin{cases}
0 &\text{ for }\quad a < 1\\
1-\frac{1}{\sqrt{a}} &\text{ for }\quad 1 \leq a < \alpha^2/p_{h,e}^2\\
1 &\text{ for }\quad \alpha^2/p_{h,e}^2 \leq a
\end{cases}.
\end{align*}
Since $\probability\left(1/w_{0,e,j}^2 \ge 0\right) = 1$, we have that 
\newcommand*\diff{\mathop{}\!\mathrm{d}}
\begin{align*}
&\expectedvalue\left[\frac{1}{w_{0,e,j}}\right]
= \int_{a=0}^\infty \left[1-F_{1/w_{0,e,j}}(a)\right]\diff a\\
&= \int_{a=0}^1 \left(1 - F_{1/w_{0,e,j}}(a)\right)\diff a + \int_{a=1}^{\alpha^2/p_{h,e}^2} \left(1 - F_{1/w_{0,e,j}}(a)\right)\diff a + \int_{a=\alpha^2/p_{h,e}^2}^{\infty} \left(1 - F_{1/w_{0,e,j}}(a)\right)\diff a\\
&= \int_{a=0}^1 \left(1 - 0\right)\diff a + \int_{a=1}^{\alpha^2/p_{h,e}^2} \left(1 - \left(1 - \frac{1}{\sqrt{a}}\right) \right)\diff a +\int_{a=\alpha^2/p_{h,e}^2}^{\infty}(1-1)\diff a \\
&= \int_{a=0}^1 \diff a + \int_{a=1}^{\alpha^2/p_{h,e}^2} \frac{1}{\sqrt{a}}\diff a
= 1+[2\sqrt{a}]_{1}^{\alpha^2/p_{h,e}^2}
= 2\alpha/p_{h,e} - 1.
\end{align*}
%
Therefore,
\begin{align*}
\mu_{\max} &= \lambda_{\max}(\expectedvalue\left[\:V\right]) = \lambda_{\max}\Big(\sum_{\{e,j\}} \expectedvalue\left[\:X_{\{e,j\}}\right]\Big)
= \lambda_{\max}\left(\frac{1}{\wb{q}^2}\sum_{j=1}^{\wb{q}}\sum_{e=1}^{\nhl}\expectedvalue\left[\frac{1}{w_{0,e,j}^2}\right]\concvec_e\concvec_e^\transp\concvec_e\concvec_e^\transp\right)\\
&= \lambda_{\max}\left(\frac{1}{\wb{q}}\sum_{e=1}^{\nhl}\left(\frac{2\alpha}{p_{h,e}} - 1\right)p_{h,e}\concvec_e\concvec_e^\transp\right)
\leq \lambda_{\max}\left(\frac{2\alpha}{\wb{q}}\sum_{e=1}^{\nhl}\concvec_e\concvec_e^\transp\right)
= \frac{2\alpha}{\wb{q}}\lambda_{\max}\left(\:P\right)
\leq \frac{2\alpha}{\wb{q}} \eqdef L.
\end{align*}
Therefore, selecting $h = 2$, $\sigma^2=6\alpha/\wb{q}$ and applying Prop.\,\ref{prop:matrix-chernoff} we have
\begin{align*}
\probability\left(\normsmall{\:W_h} \geq \sigma^2\right)
&\leq \probability\left(\lambda_{\max}\left(\frac{1}{\wb{q}^2}\sum_{j=1}^{\wb{q}}\sum_{e=1}^{\nhl}\frac{1}{w_{0,e,j}^2}\concvec_e\concvec_e^\transp\concvec_e\concvec_e^\transp\right) \geq (1+2)\frac{2\alpha}{\wb{q}}\right)\\
&\leq \nhl \cdot \exp \left\{ - \frac{2\alpha}{\wb{q}}\frac{\wb{q}^2}{\alpha^2}(3\log(3) - 2) \right\}\\
&\leq n \cdot \exp \left\{ - \frac{2\wb{q}}{\alpha} \right\}\!\cdot
\end{align*}

%
\subsection{Space complexity bound}\label{ssec:space-complexity}

Denote with $A$ the event $A = \left\{\forall  h' \in \{1, \dots, h\} :  \normsmall{\:P^{h'} - \wt{\:P}^{h'}}_2 \leq \varepsilon\right\}$, and again  $\nhl = |\Gg_{\{h,\ell\}}|$.
To begin with, we can show that under event $A$, the approximate $\gamma$-effective resistances
(i.e.\,RLS) are accurate (Lem.\,2, \citealt{calandriello_disqueak_2017}).
Let $\Hg = \Hg_{\{h,\ell\}}$ be a $(\varepsilon,2\gamma)$-sparsifier of $\Gg = \Gg_{\{h,\ell\}}$.
Using Prop.\,\ref{prop:ordering-to-norm-bound} it is straightforward to see that
under $A$ we have
\begin{align*}
\wt{p}_{h+1,e} \leq &\wt{r}_{h+1,e}(\gamma) = (1-\vareps)\:b_e^\transp(\laplacian_{\Hg} + (1+\vareps)\gamma\:I)^{-1}\:b_e\\
&\leq (1-\vareps)\:b_e^\transp((1-\varepsilon)\laplacian_{\Gg} - 2\varepsilon\gamma\:I + (1+\vareps)\gamma\:I)^{-1}\:b_e
= \frac{(1-\vareps)}{(1-\vareps)}\:b_e^\transp(\laplacian_{\Gg} + \gamma\:I)^{-1}\:b_e
= r_{h+1,e} = p_{h+1,e}.
\end{align*}

Letting $q~=~|\Hg_{\{h,\ell\}}| = \sum_{e=1}^{\nhl} q_{h,e} = \sum_{j=1}^{\wb{q}}\sum_{e=1}^{\nhl}  z_{h,e,j}$ be the random number
of edges in $\coldict_{\{h,\ell\}}$, we reformulate
\begin{align*}
    &\probability\left(|\coldict_{\{h,\ell\}}| \geq 3\wb{q}\deff^{\{h,\ell\}}(\gamma) \cap \left\{\forall  h' \in \{1, \dots, h\} :  \left(\normsmall{\:P^{h'} - \wt{\:P}^{h'}}_2 \leq \varepsilon\right) \leq \varepsilon\right\}\right)\\
    &\hspace{5cm}= \probability\left(|\coldict_{\{h,\ell\}}| \geq 3\wb{q}\deff^{\{h,\ell\}}(\gamma) \cap A\right)\\
&\hspace{5cm}= \probability\left(\sum_{j=1}^{\wb{q}}\sum_{e=1}^{\nhl}  z_{h,e,j} \geq 3\wb{q}\deff^{\{h,\ell\}}(\gamma) \cap A\right)\\
&\hspace{5cm}=\probability\left(  \sum_{j=1}^{\wb{q}}\sum_{e=1}^{\nhl} z_{h,e,j} \geq 3\wb{q}\deff^{\{h,\ell\}}(\gamma) \condbar A \right)
\probability\left(A \right)\!.
\end{align*}
While we do know that the $z_{h,e,j}$ are Bernoulli random variables (since
they are either 0 or 1), it is not easy to compute the success probability
of each $z_{h,e,j}$, and in addition there could be dependencies between
$z_{h,e,j}$ and $z_{h,e',j'}$. Similarly to Lem.\,\ref{lem:prob-dominance},
we are going to find a stochastic variable to dominate $z_{h,e,j}$.
Denoting with $u'_{s,e,j} \sim \mathcal{U}(0,1)$ a uniform random variable,
we will define $w'_{s,e,j}$ as
\begin{align*}
w'_{s,e,j} \vert \F_{\{s,e',j'\}} \eqdef w'_{s,e,j} \vert   \F_{s-2}  \eqdef \indfunc\left\{u'_{s,e,j} \leq \frac{p_{h,e}}{\wt{p}_{s-1,e}}\right\} \sim \mathcal{B}\left(\frac{p_{h,e}}{\wt{p}_{s-1,e}}\right)
\end{align*}
for any
$e'$ and $j'$ such that $\{s,1,1\} \leq \{s,e',j'\} <\{s,e,j\}$.
Note that $w'_{s,e,j}$, unlike $z_{s,e,j}$, does not have a recursive
definition, and its only dependence on any other variable comes from
$\wt{p}_{s-1,e}$.
First, we peel off the last step
\begin{align*}
&\probability\left( \sum_{j=1}^{\wb{q}} \sum_{e=1}^{\nhl} z_{h,e,j} \geq g \condbar A\right)
= \expectedvalue_{\Fp_{t-1}|A}\left[\probability\left( \sum_{j=1}^{\wb{q}} \sum_{e=1}^{\nhl} \indfunc\left\{u_{h,e,j} \leq \frac{\wt{p}_{h,e}}{\wt{p}_{t-1,e}}\right\}z_{h-1,e,j} \geq g \condbar \Fp_{t-1} \cap A\right)\right]\\
&\leq \expectedvalue_{\Fp_{t-1}|A}\left[\probability\left( \sum_{j=1}^{\wb{q}} \sum_{e=1}^{\nhl} \indfunc\left\{u'_{h,e,j} \leq \frac{p_{h,e}}{\wt{p}_{t-1,e}}\right\}z_{h-1,e,j} \geq g \condbar \Fp_{t-1} \cap A\right)\right]
= \probability\left( \sum_{j=1}^{\wb{q}} \sum_{e=1}^{\nhl} w'_{h,e,j}z_{h-1,e,j} \geq g \condbar A\right)\!,
\end{align*}
where we used the fact that conditioned on $A$, $\coldict_{\{h,\ell\}}$ is an $(\varepsilon,\gamma)$-sparsifier of $\Gg_{\{h,\ell\}}$, which guarantees that $\wt{p}_{h,e} \leq p_{h,e}$.
Plugging this in the previous bound,
\begin{align*}
\probability&\left( \sum_{j=1}^{\wb{q}} \sum_{e=1}^{\nhl} z_{h,e,j} \geq g \condbar A \right)
\probability\left(A \right)
\leq\probability\left( \sum_{j=1}^{\wb{q}} \sum_{e=1}^{\nhl} w'_{h,e,j}z_{h-1,e,j} \geq g \cap A\right)
\leq\probability\left( \sum_{j=1}^{\wb{q}} \sum_{e=1}^{\nhl} w'_{h,e,j}z_{h-1,e,j} \geq g \right)\!.
\end{align*}
We now proceed by peeling off layers from the end of the chain one by one. We show how to move from an iteration $s\leq h$ to $s-1$,
\begin{align*}
&\probability\left( \sum_{j=1}^{\wb{q}} \sum_{e=1}^{\nhl} w'_{s,e,j}z_{s-1,e,j} \geq g \right)
= \expectedvalue_{\Fp_{s-2}}\left[\probability\left( \sum_{j=1}^{\wb{q}} \sum_{e=1}^{\nhl} \indfunc\left\{u'_{s,e,j} \leq \frac{p_{h,e}}{\wt{p}_{s-1,e}}\right\}z_{s-1,e,j} \geq g \condbar \Fp_{s-2} \right)\right]\\
&= \expectedvalue_{\Fp_{s-2}}\left[\probability\left( \sum_{j=1}^{\wb{q}} \sum_{e=1}^{\nhl} \indfunc\left\{u'_{s,e,j} \leq \frac{p_{h,e}}{\wt{p}_{s-1,e}}\right\}\indfunc\left\{u_{s-1,e,j} \leq \frac{\wt{p}_{s-1,e}}{\wt{p}_{s-2,e}}\right\}z_{s-2,e,j} \geq g \condbar \Fp_{s-2} \right)\right]\\
&= \expectedvalue_{\Fp_{s-2}}\left[\probability\left( \sum_{j=1}^{\wb{q}} \sum_{e=1}^{\nhl} \indfunc\left\{u'_{s-1,e,j} \leq \frac{p_{h,e}}{\wt{p}_{s-2,e}}\right\}z_{s-2,e,j} \geq g \condbar \Fp_{s-2} \right)\right]
= \probability\left( \sum_{j=1}^{\wb{q}} \sum_{e=1}^{\nhl} w'_{s-1,e,j}z_{s-2,e,j} \geq g \right)\!.
\end{align*}
Applying this repeatedly from $s=h$ to $s=2$, we have
\begin{align*}
&\probability\left( \sum_{j=1}^{\wb{q}} \sum_{e=1}^{\nhl} w'_{h,e,j}z_{h-1,e,j} \geq g \right)
=\probability\left( \sum_{j=1}^{\wb{q}} \sum_{e=1}^{\nhl} w'_{1,e,j}z_{0,e,j} \geq g \right)
=\probability\left( \sum_{j=1}^{\wb{q}} \sum_{e=1}^{\nhl} w'_{1,e,j} \geq g \right)\!. 
\end{align*}
Now, all the $w'_{1,e,j}$ are independent Bernoulli random variables,
and we can bound their sum with a Hoeffding-like bound using Markov inequality,
\begin{align*}
\probability&\left( \sum_{j=1}^{\wb{q}} \sum_{e=1}^{\nhl} w'_{1,e,j} \geq g \right) = \inf_{\theta > 0}\probability\left( e^{\sum_{j=1}^{\wb{q}} \sum_{e=1}^{\nhl} \theta w'_{1,e,j}} \geq e^{\theta g} \right)\\
&\leq \inf_{\theta > 0} \frac{\expectedvalue \left[ e^{\sum_{j=1}^{\wb{q}} \sum_{e=1}^{\nhl} \theta w'_{1,e,j}}\right]}{e^{\theta g}}
= \inf_{\theta > 0} \frac{\expectedvalue \left[ \prod_{j=1}^{\wb{q}} \prod_{e=1}^{\nhl} e^{ \theta w'_{1,e,j}}\right]}{e^{\theta g}}
= \inf_{\theta > 0} \frac{\prod_{j=1}^{\wb{q}} \prod_{e=1}^{\nhl} \expectedvalue \left[ e^{ \theta w'_{1,e,j}}\right]}{e^{\theta g}}\\
&=\inf_{\theta > 0} \frac{\prod_{j=1}^{\wb{q}} \prod_{e=1}^{\nhl} (p_{h,e} e^\theta + (1-p_{h,e}))}{e^{\theta g}}
= \inf_{\theta > 0} \frac{\prod_{j=1}^{\wb{q}} \prod_{e=1}^{\nhl} (1+p_{h,e}( e^\theta -1))}{e^{\theta g}}\\
&\leq \inf_{\theta > 0} \frac{\prod_{j=1}^{\wb{q}} \prod_{e=1}^{\nhl} e^{p_{h,e}( e^\theta -1)}}{e^{\theta g}}
\leq \inf_{\theta > 0} \frac{e^{\wb{q}( e^\theta -1)\sum_{e=1}^{\nhl} p_{h,e}}}{e^{\theta g}}
= \inf_{\theta > 0} e^{(\deff^{\{h,\ell\}}(\gamma)\wb{q}(e^\theta - 1) - \theta g)}
\leq \inf_{\theta > 0} e^{(\deff^{\{h,\ell\}}(\gamma)\wb{q}(e^\theta - 1) - \theta g)},
\end{align*}
where we use the fact that $1 + x \leq e^x$, $w'_{1,e,j} \sim \mathcal{B}(p_{h,e})$ and by Def.\,\ref{def:eff-res}, $\sum_{e=1}^{\nhl} p_{h,e} = \sum_{e=1}^{\nhl} r_{h,e} = \deff^{\{h,\ell\}}(\gamma)$.
The choice of $\theta$ minimizing the previous expression is obtained as
\begin{align*}
\frac{d}{d\theta}e^{\left(\wb{q}\deff^{\{h,\ell\}}(\gamma)(e^\theta - 1) - \theta g \right)}
= e^{\left(\wb{q}\deff^{\{h,\ell\}}(\gamma)(e^\theta - 1) - \theta g \right)}\left(\wb{q}\deff^{\{h,\ell\}}(\gamma)e^\theta - g\right) = 0,
\end{align*}
and thus $\theta = \log (g/(\wb{q}\deff^{\{h,\ell\}}(\gamma)))$. Plugging this in the previous bound,
\begin{align*}
\inf_\theta \exp\left\{ \wb{q}\deff^{\{h,\ell\}}(\gamma)(e^\theta - 1) - \theta g)\right\}
&= \exp\left\{ g  - \wb{q}\deff^{\{h,\ell\}}(\gamma) -  g \log \left(\frac{g}{\wb{q}\deff^{\{h,\ell\}}(\gamma)}\right)\right\}\\
&= \exp\left\{ -g\left(\log \left(\frac{g}{\wb{q}\deff^{\{h,\ell\}}(\gamma)}\right) - 1\right)\right\} e^{- \wb{q}\deff^{\{h,\ell\}}(\gamma)},
\end{align*}
and choosing $g = 3\wb{q}\deff^{\{h,\ell\}}(\gamma)$, we conclude our proof.

\section{Further details on experiments}
\label{appE}
\textbf{Software and hardware} All our code is implemented in Julia and runs
on a cluster of 4 machines with $128$ GB of RAM and a 10-core Xeon E5-2630.
The distributed computation is achieved using a suboptimal but simple producer-consumer
queue.

\textbf{Linear solver} We use approximate Gaussian elimination scheme by~\citet{kyng2016approximate}
from \path{Laplacians.jl}\footnote{\url{http://github.com/danspielman/Laplacians.jl}} package.

\textbf{Details on \disre.} To compute all $(\varepsilon,\gamma)$ sparsifiers, we use \disre after splitting
the input graph into 8 sub-graphs,\footnote{To guarantee that each of the sub-graphs is defined on the same set of nodes, we pre-construct a spanning tree of $\Gg$ and include it in each of the sub-graphs. Note that the analysis holds even if each sub-graph is disconnected from the others. We choose this approach to avoid complicating the code with additional searches of connected components in the sparsifiers.} resulting in 3 rounds of resparsifications using 4 machines.
For each resparsification, we compute the effective resistance in parallel
on each machine using 10 processes.\footnote{
\path{Laplacian.jl} is strictly single-threaded. Therefore, we use multiple processes on a single machine.
Faster runtime and lower memory usage could be achieved by sharing the memory with threads.
}

\textbf{Additional results.} For completeness, in the following 
with provide the tables containing all the combinations of
hyper-parameters ($\varepsilon,\gamma,\qbar,k,\sigma,l$) used
in our experiments.
As the supplementary files, we also provide two OpenDocument spreadsheets
containing these results, one for the Laplacian smoothing experiment (\path{extra_results_smoothing.ods})
and one for the SSL experiment (\path{extra_results_ssl.ods}).

\begin{table}[]
\small
\centering
\begin{tabular}{@{}rrrr|@{}}
\toprule
\# labels & $\lambda $    & mean   & std    \\ \midrule
20          & 1.00E-006 & 0.4117 & 0.0539 \\
20          & 1.00E-004 & 0.3989 & 0.0548 \\
20          & 1.00E-002 & 0.4489 & 0.0662 \\
20          & 1.00E+000 & 0.4109 & 0.0464 \\
346         & 1.00E-006 & 0.3145 & 0.0165 \\
346         & 1.00E-004 & 0.3327 & 0.0477 \\
346         & 1.00E-002 & 0.3526 & 0.0813 \\
346         & 1.00E+000 & 0.4357 & 0.1084 \\
672         & 1.00E-006 & 0.2967 & 0.0154 \\
672         & 1.00E-004 & 0.3334 & 0.0534 \\
672         & 1.00E-002 & 0.3718 & 0.0971 \\
672         & 1.00E+000 & 0.3197 & 0.0481 \\
1000        & 1.00E-006 & 0.2795 & 0.0053 \\
1000        & 1.00E-004 & 0.2963 & 0.0542 \\
1000        & 1.00E-002 & 0.3418 & 0.0706 \\
1000        & 1.00E+000 & 0.3578 & 0.0937 \\ \bottomrule
\end{tabular}
%
%
\small
\begin{tabular}{@{}|rrrr|@{}}
\toprule
\# labels & $\lambda $    & mean   & std    \\ \midrule
20   & 1.00E-006 & 0.4308 & 0.0527 \\
20   & 1.00E-004 & 0.4121 & 0.0537 \\
20   & 1.00E-002 & 0.4274 & 0.0656 \\
20   & 1.00E+000 & 0.4343 & 0.0688 \\
346  & 1.00E-006 & 0.3144 & 0.0158 \\
346  & 1.00E-004 & 0.3161 & 0.0245 \\
346  & 1.00E-002 & 0.3783 & 0.0981 \\
346  & 1.00E+000 & 0.3867 & 0.1009 \\
672  & 1.00E-006 & 0.2957 & 0.0089 \\
672  & 1.00E-004 & 0.3063 & 0.0424 \\
672  & 1.00E-002 & 0.3710 & 0.1022 \\
672  & 1.00E+000 & 0.3107 & 0.0244 \\
1000 & 1.00E-006 & 0.2830 & 0.0078 \\
1000 & 1.00E-004 & 0.3200 & 0.0719 \\
1000 & 1.00E-002 & 0.3284 & 0.0769 \\
1000 & 1.00E+000 & 0.3257 & 0.0786
\\ \bottomrule
\end{tabular}
\label{my-label}
\begin{tabular}{@{}rrrr@{}}
\toprule
\# labels & $\lambda $    & mean   & std    \\ \midrule
20   & 1.00E-006 & 0.4354 & 0.0447 \\
20   & 1.00E-004 & 0.4316 & 0.0402 \\
20   & 1.00E-002 & 0.4251 & 0.0426 \\
20   & 1.00E+000 & 0.4186 & 0.0343 \\
346  & 1.00E-006 & 0.3293 & 0.0143 \\
346  & 1.00E-004 & 0.3480 & 0.0333 \\
346  & 1.00E-002 & 0.3425 & 0.0218 \\
346  & 1.00E+000 & 0.3493 & 0.0349 \\
672  & 1.00E-006 & 0.3112 & 0.0277 \\
672  & 1.00E-004 & 0.3219 & 0.0219 \\
672  & 1.00E-002 & 0.3112 & 0.0141 \\
672  & 1.00E+000 & 0.3235 & 0.0121 \\
1000 & 1.00E-006 & 0.3127 & 0.0225 \\
1000 & 1.00E-004 & 0.3028 & 0.0142 \\
1000 & 1.00E-002 & 0.2947 & 0.0104 \\
1000 & 1.00E+000 & 0.3025 & 0.0128
\\ \bottomrule
\end{tabular}
\vskip 0.5em
\ssl, \textbf{Left:} \disre, $\qbar=100$, \textbf{Middle:}  \disre, $\qbar=150$, \textbf{Right:} \kn, $k=60$
\end{table}

\begin{table}[]
\small
\centering
\label{my-label}
\begin{tabular}{@{}rrrr|@{}}
\toprule
\# labels & $\lambda $    & mean   & std    \\ \midrule
20   & 1.00E-006 & 0.4477 & 0.0299 \\
20   & 1.00E-004 & 0.4389 & 0.0352 \\
20   & 1.00E-002 & 0.4269 & 0.0364 \\
20   & 1.00E+000 & 0.4208 & 0.0441 \\
346  & 1.00E-006 & 0.3533 & 0.0378 \\
346  & 1.00E-004 & 0.3341 & 0.0248 \\
346  & 1.00E-002 & 0.3431 & 0.0388 \\
346  & 1.00E+000 & 0.3628 & 0.0261 \\
672  & 1.00E-006 & 0.3248 & 0.0263 \\
672  & 1.00E-004 & 0.3119 & 0.0237 \\
672  & 1.00E-002 & 0.3127 & 0.0138 \\
672  & 1.00E+000 & 0.3253 & 0.0333 \\
1000 & 1.00E-006 & 0.3107 & 0.0147 \\
1000 & 1.00E-004 & 0.3076 & 0.0115 \\
1000 & 1.00E-002 & 0.3073 & 0.0238 \\
1000 & 1.00E+000 & 0.2952 & 0.0146
\\ \bottomrule
\end{tabular}
%
\small
\centering
\label{my-label}
\begin{tabular}{@{}rrrr@{}}
\toprule
\# labels & $\lambda $    & mean   & std    \\ \midrule
20   & 1.00E-006 & 0.4047 & 0.0544 \\ 
20   & 1.87E-003 & 0.4381 & 0.0841 \\
20   & 1.23E-002 & 0.4499 & 0.0762 \\
20   & 5.34E-001 & 0.4255 & 0.0641 \\
346  & 1.00E-006 & 0.3120 & 0.0218 \\
346  & 1.87E-003 & 0.3553 & 0.0770 \\
346  & 1.23E-002 & 0.3654 & 0.0915 \\
346  & 5.34E-001 & 0.3318 & 0.0544 \\
672  & 1.00E-006 & 0.2863 & 0.0101 \\
672  & 1.87E-003 & 0.3293 & 0.0388 \\
672  & 1.23E-002 & 0.4105 & 0.1056 \\
672  & 5.34E-001 & 0.3470 & 0.0695 \\
1000 & 1.00E-006 & 0.2772 & 0.0049 \\
1000 & 1.87E-003 & 0.3181 & 0.0414 \\
1000 & 1.23E-002 & 0.3070 & 0.0533 \\
1000 & 5.34E-001 & 0.3063 & 0.0860 \\ \bottomrule
\end{tabular}
\vskip 0.5em
\ssl, \textbf{Left:}  \kn, $k=90$, \textbf{Right:}  \exact
\end{table}



\begin{table}[]
\small
\centering
\label{my-label}
\begin{tabular}{@{}rrrr|@{}}
\toprule
$\sigma$ & $\lambda$ & mean        & std      \\ \midrule
0.001 & 0.001  & 0.1908      & 0.0006   \\
0.001 & 0.01   & 0.0681      & 0.0003   \\
0.001 & 0.1    & 0.2845      & 0.0006   \\
0.001 & 1      & 0.7991      & 0.0002   \\
0.001 & 10     & 0.9728      & 0.0001   \\
0.01  & 0.001  & 19.0389     & 0.0464   \\
0.01  & 0.01   & 5.3282      & 0.0273   \\
0.01  & 0.1    & 0.7587      & 0.0059   \\
0.01  & 1      & 0.8129      & 0.0025   \\
0.01  & 10     & 0.9731      & 0.0003   \\
0.1   & 0.001  & 1904.4359   & 5.9497   \\
0.1   & 0.01   & 530.8727    & 2.7780   \\
0.1   & 0.1    & 48.0577     & 0.4268   \\
0.1   & 1      & 2.1887      & 0.0547   \\
0.1   & 10     & 1.0044      & 0.0136   \\
1     & 0.001  & 190532.1581 & 437.1137 \\
1     & 0.01   & 53197.4003  & 187.2996 \\
1     & 0.1    & 4770.8654   & 39.7099  \\
1     & 1      & 140.4328    & 3.7919   \\
1     & 10     & 5.2586      & 2.0544  
\\ \bottomrule
\end{tabular}
\begin{tabular}{@{}rrrr@{}}
\toprule
$\sigma$ & $\lambda$ & mean        & std      \\ \midrule
0.001 & 0.001  & 0.1902      & 0.0004   \\
0.001 & 0.01   & 0.0681      & 0.0004   \\
0.001 & 0.1    & 0.2846      & 0.0005   \\
0.001 & 1      & 0.7993      & 0.0003   \\
0.001 & 10     & 0.9729      & 0.0001   \\
0.01  & 0.001  & 18.9854     & 0.0510   \\
0.01  & 0.01   & 5.2975      & 0.0240   \\
0.01  & 0.1    & 0.7565      & 0.0053   \\
0.01  & 1      & 0.8124      & 0.0014   \\
0.01  & 10     & 0.9731      & 0.0004   \\
0.1   & 0.001  & 1899.5487   & 8.1249   \\
0.1   & 0.01   & 528.6970    & 1.6710   \\
0.1   & 0.1    & 48.0165     & 0.3385   \\
0.1   & 1      & 2.1773      & 0.0563   \\
0.1   & 10     & 1.0041      & 0.0086   \\
1     & 0.001  & 189582.9265 & 413.1919 \\
1     & 0.01   & 52905.9901  & 169.3004 \\
1     & 0.1    & 4775.0883   & 18.4023  \\
1     & 1      & 137.7268    & 2.7805   \\
1     & 10     & 4.7664      & 1.9795   
\\ \bottomrule
\end{tabular}
\begin{tabular}{@{}rrrr@{}}
\toprule
$\sigma$ & $\lambda$ & mean        & std      \\ \midrule
0.001 & 0.001  & 0.1910      & 0.0007   \\
0.001 & 0.01   & 0.0723      & 0.0004   \\
0.001 & 0.1    & 0.2622      & 0.0020   \\
0.001 & 1      & 0.7198      & 0.0034   \\
0.001 & 10     & 0.9096      & 0.0047   \\
0.01  & 0.001  & 19.0615     & 0.0551   \\
0.01  & 0.01   & 5.5206      & 0.0143   \\
0.01  & 0.1    & 0.9020      & 0.0088   \\
0.01  & 1      & 0.7890      & 0.0050   \\
0.01  & 10     & 0.9523      & 0.0056   \\
0.1   & 0.001  & 1906.4671   & 2.9946   \\
0.1   & 0.01   & 550.5067    & 1.7484   \\
0.1   & 0.1    & 64.5503     & 0.6398   \\
0.1   & 1      & 7.4914      & 0.3399   \\
0.1   & 10     & 5.3122      & 0.3776   \\
1     & 0.001  & 190769.1154 & 348.4353 \\
1     & 0.01   & 54985.9953  & 241.7914 \\
1     & 0.1    & 6484.4296   & 73.4393  \\
1     & 1      & 688.6995    & 39.1172  \\
1     & 10     & 416.7218    & 40.3694  
\\ \bottomrule
\end{tabular}
\vskip 0.5em
\rls, \textbf{Left:} \disre, $\qbar=100$, \textbf{Middle:}  \disre, $\qbar=150$, \textbf{Right:} \disre,  $\qbar=100, \gamma = 1000$
\end{table}

\begin{table}[]
\small
\centering
\label{my-label}
\begin{tabular}{@{}rrrr|@{}}
\toprule
$\sigma$ & $\lambda$ & mean        & std      \\ \midrule
0.001 & 0.001  & 0.1905      & 0.0004   \\
0.001 & 0.01   & 0.0689      & 0.0003   \\
0.001 & 0.1    & 0.2814      & 0.0010   \\
0.001 & 1      & 0.7925      & 0.0010   \\
0.001 & 10     & 0.9716      & 0.0002   \\
0.01  & 0.001  & 19.0622     & 0.0340   \\
0.01  & 0.01   & 5.3400      & 0.0122   \\
0.01  & 0.1    & 0.7727      & 0.0047   \\
0.01  & 1      & 0.8086      & 0.0029   \\
0.01  & 10     & 0.9719      & 0.0004   \\
0.1   & 0.001  & 1906.6166   & 6.2333   \\
0.1   & 0.01   & 533.4745    & 2.2259   \\
0.1   & 0.1    & 49.6360     & 0.3576   \\
0.1   & 1      & 2.3184      & 0.0440   \\
0.1   & 10     & 1.0092      & 0.0084   \\
1     & 0.001  & 190451.4911 & 600.4823 \\
1     & 0.01   & 53479.7617  & 251.9642 \\
1     & 0.1    & 4961.0662   & 44.4829  \\
1     & 1      & 152.0926    & 4.5554   \\
1     & 10     & 4.4540      & 2.1627   \\
\bottomrule
\end{tabular}
\begin{tabular}{@{}rrrr@{}}
\toprule
$\sigma$ & $\lambda$ & mean        & std      \\ \midrule
0.001 & 0.001  & 0.1907      & 0.0007   \\
0.001 & 0.01   & 0.0681      & 0.0004   \\
0.001 & 0.1    & 0.2841      & 0.0005   \\
0.001 & 1      & 0.7983      & 0.0002   \\
0.001 & 10     & 0.9727      & 0.0001   \\
0.01  & 0.001  & 19.0317     & 0.0489   \\
0.01  & 0.01   & 5.3175      & 0.0253   \\
0.01  & 0.1    & 0.7606      & 0.0046   \\
0.01  & 1      & 0.8127      & 0.0027   \\
0.01  & 10     & 0.9730      & 0.0006   \\
0.1   & 0.001  & 1905.0821   & 4.0734   \\
0.1   & 0.01   & 531.3563    & 1.5535   \\
0.1   & 0.1    & 48.2788     & 0.3639   \\
0.1   & 1      & 2.1985      & 0.0424   \\
0.1   & 10     & 1.0191      & 0.0240   \\
1     & 0.001  & 190205.0504 & 370.3744 \\
1     & 0.01   & 53118.5249  & 176.1894 \\
1     & 0.1    & 4809.2633   & 34.7541  \\
1     & 1      & 140.9587    & 3.7546   \\
1     & 10     & 4.8436      & 1.5676   \\
\bottomrule
\end{tabular}
\begin{tabular}{@{}rrrr@{}}
\toprule
$\sigma$ & $\lambda$ & mean        & std      \\ \midrule
0.001 & 0.001  & 0.3024      & 0.0010   \\
0.001 & 0.01   & 0.1724      & 0.0004   \\
0.001 & 0.1    & 0.2838      & 0.0008   \\
0.001 & 1      & 0.7856      & 0.0006   \\
0.001 & 10     & 0.9706      & 0.0001   \\
0.01  & 0.001  & 30.1645     & 0.0703   \\
0.01  & 0.01   & 15.7825     & 0.0399   \\
0.01  & 0.1    & 1.9238      & 0.0107   \\
0.01  & 1      & 0.8227      & 0.0027   \\
0.01  & 10     & 0.9712      & 0.0005   \\
0.1   & 0.001  & 3022.1232   & 7.4520   \\
0.1   & 0.01   & 1575.6447   & 3.3189   \\
0.1   & 0.1    & 166.1367    & 0.6022   \\
0.1   & 1      & 4.5508      & 0.0500   \\
0.1   & 10     & 1.0276      & 0.0111   \\
1     & 0.001  & 302265.6761 & 375.8875 \\
1     & 0.01   & 157503.6242 & 369.7657 \\
1     & 0.1    & 16585.2588  & 43.9628  \\
1     & 1      & 380.2291    & 5.5260   \\
1     & 10     & 6.8515      & 1.1969   \\
\bottomrule
\end{tabular}
\vskip 0.5em
\rls, \textbf{Left:} \disre, $\qbar=100, \gamma = 100$, \textbf{Middle:}  \disre, $\qbar=100, \gamma = 10$, \textbf{Right:}  
\kn, $k=60$
\end{table}

\begin{table}[H]
\small
\centering
\label{my-label}
\begin{tabular}{@{}rrrr|@{}}
\toprule
$\sigma$ & $\lambda$ & mean        & std      \\ \midrule
0.001 & 0.001  & 0.2823      & 0.0007   \\
0.001 & 0.01   & 0.1259      & 0.0003   \\
0.001 & 0.1    & 0.2834      & 0.0008   \\
0.001 & 1      & 0.7929      & 0.0005   \\
0.001 & 10     & 0.9719      & 0.0001   \\
0.01  & 0.001  & 28.2517     & 0.0560   \\
0.01  & 0.01   & 11.0908     & 0.0309   \\
0.01  & 0.1    & 1.1154      & 0.0087   \\
0.01  & 1      & 0.8119      & 0.0029   \\
0.01  & 10     & 0.9722      & 0.0006   \\
0.1   & 0.001  & 2823.5932   & 6.3830   \\
0.1   & 0.01   & 1108.9582   & 5.3674   \\
0.1   & 0.1    & 84.0189     & 0.4347   \\
0.1   & 1      & 2.8117      & 0.0394   \\
0.1   & 10     & 1.0124      & 0.0178   \\
1     & 0.001  & 282388.0211 & 754.7190 \\
1     & 0.01   & 110729.9273 & 250.8183 \\
1     & 0.1    & 8373.5778   & 55.8408  \\
1     & 1      & 200.9318    & 3.7598   \\
1     & 10     & 4.8285      & 1.0723   
\\ \bottomrule
\end{tabular}
\begin{tabular}{@{}rrrr@{}}
\toprule
$\sigma$ & $\lambda$ & mean        & std      \\ \midrule
0.001 & 0.001  & 0.1896      & 0.0002   \\
0.001 & 0.01   & 0.0676      & 0.0005   \\
0.001 & 0.1    & 0.2856      & 0.0002   \\
0.001 & 1      & 0.8000      & 0.0002   \\
0.001 & 10     & 0.9730      & 0.0000   \\
0.01  & 0.001  & 18.9408     & 0.0554   \\
0.01  & 0.01   & 5.2789      & 0.0147   \\
0.01  & 0.1    & 0.7566      & 0.0067   \\
0.01  & 1      & 0.8127      & 0.0028   \\
0.01  & 10     & 0.9737      & 0.0003   \\
0.1   & 0.001  & 1894.0375   & 5.7404   \\
0.1   & 0.01   & 527.5392    & 1.0052   \\
0.1   & 0.1    & 47.7877     & 0.4702   \\
0.1   & 1      & 2.1503      & 0.0282   \\
0.1   & 10     & 1.0083      & 0.0184   \\
1     & 0.001  & 189515.7627 & 293.8406 \\
1     & 0.01   & 52639.4113  & 346.7699 \\
1     & 0.1    & 4724.3469   & 44.4066  \\
1     & 1      & 138.3458    & 2.9040   \\
1     & 10     & 4.1240      & 0.9370   \\ 
\bottomrule
\end{tabular}
\vskip 0.5em
\rls, \textbf{Left:}  \kn, $k=90$, \textbf{Right:}  \exact
\end{table}
\vfil


\end{document}